\definecolor{darkred}{RGB}{150,0,0}
\definecolor{darkgreen}{RGB}{0,150,0}
\definecolor{darkblue}{RGB}{0,0,200}
\newtheorem{thm}{Theorem}
\newtheorem{prop}[thm]{Proposition}
\newtheorem{defn}[thm]{Definition}
\newtheorem{rem}[thm]{Remark}
\newtheorem{lem}[thm]{Lemma}
\newtheorem{assumption}{Assumption}
\newcommand{\mc}{\mathcal}
\newcommand{\m}[1]{{\bf{#1}}}
\renewcommand{\mc}[1]{\ensuremath{\mathcal{#1}}}
\newcommand{\g}[1]{\mbox{\boldmath $#1$}}
\newcommand{\mb}[1]{{\mathbb{#1}}}
\DeclareMathOperator*{\argmin}{arg\,min}
\DeclareMathOperator{\trace}{trace}
\let\my@xfloat\@xfloat
\def\@xfloat#1[#2]{\my@xfloat#1[#2]%
  \def\baselinestretch{1.0}%
  \@nameuse{my@currfootnote}%
}
\newcommand\my@currfootnote{\color{black}}
\renewcommand{\baselinestretch}{0.915}
\title{Fairness-Aware Estimation of Graphical Models}
\author{Zhuoping Zhou\thanks{Equal contribution}, Davoud Ataee Tarzanagh\footnotemark[1], Bojian Hou\footnotemark[1], Qi Long\thanks{Corresponding authors}, Li Shen\footnotemark[2]\\
University of Pennsylvania\\
\texttt{\{zhuopinz@sas., tarzanaq@\}upenn.edu}\\
\texttt{\{bojian.hou, qlong, li.shen\}@pennmedicine.upenn.edu}
}
\begin{document}

\newpage
\addtocontents{toc}{\protect\setcounter{tocdepth}{0}}
\maketitle

\begin{abstract}
This paper examines the issue of fairness in the estimation of graphical models (GMs), particularly Gaussian, Covariance, and Ising models. These models play a vital role in understanding complex relationships in high-dimensional data. However, standard GMs can result in biased outcomes, especially when the underlying data involves sensitive characteristics or protected groups. To address this, we introduce a comprehensive framework designed to reduce bias in the estimation of GMs related to protected attributes. Our approach involves the integration of the \textit{pairwise graph disparity error} and a tailored loss function into a \textit{nonsmooth multi-objective optimization} problem, striving to achieve fairness across different sensitive groups while maintaining the effectiveness of the GMs. Experimental evaluations on synthetic and real-world datasets demonstrate that our framework effectively mitigates bias without undermining GMs' performance.
\end{abstract}

\section{Introduction}\label{sec:intro}
Graphical models (GMs) are probabilistic models that use graphs to represent dependencies between random variables~\cite{jordan2004graphical}. They are essential in domains such as gene expression~\cite{yin2011sparse}, social networks~\cite{farasat2015probabilistic}, computer vision~\cite{isard2003pampas}, and recommendation systems~\cite{boutemedjet2007graphical}. The capacity of GMs to handle complex dependencies makes them crucial across various data-intensive disciplines. Therefore, as our society's reliance on machine learning grows, ensuring the fairness of these models becomes increasingly paramount; see Section~\ref{sec:motcont} for further discussions. While significant research has addressed fairness in supervised learning \cite{hardt2016equality}, the domain of unsupervised learning, particularly in the estimation of GMs, remains less explored.

We address the fair estimation of sparse GMs where the number of variables \(P\) is much larger than the number of observations \(N\)~\cite{friedman2008sparse, el2008operator, lee2006efficient}. 
We focus on three types of GMs:
\begin{enumerate}[label=\textnormal{\Roman*.}]
    \item \label{item:i} \textbf{Gaussian Graphical Model:} Rows \(\mathbf{X}_{1:}, \ldots, \mathbf{X}_{N:}\) in the data matrix \(\mathbf{X} \in \mathbb{R}^{N \times P}\) are i.i.d. from a multivariate Gaussian distribution \(\mathcal{N}(\mathbf{0}, \mathbf{\Sigma})\). The \textit{conditional independence} graph is determined by the sparsity of the inverse covariance matrix \(\mathbf{\Sigma}^{-1}\), where \((\mathbf{\Sigma}^{-1})_{jj'} = 0\) indicates conditional independence between the \(j\)th and \(j'\)th variables.
    \item \label{item:ii} \textbf{Gaussian Covariance Graph Model:} Rows \(\mathbf{X}_{1:}, \ldots, \mathbf{X}_{N:}\) are i.i.d. from \(\mathcal{N}(\mathbf{0}, \mathbf{\Sigma})\). The \textit{marginal independence} graph is determined by the sparsity of the covariance matrix \(\mathbf{\Sigma}\), where \(\Sigma_{jj'} = 0\) indicates marginal independence between the \(j\)th and \(j'\)th variables.
    \item \label{item:iii} \textbf{Binary Ising Graphical Model:} Rows \(\mathbf{X}_{1:}, \ldots, \mathbf{X}_{N:}\) are binary vectors and i.i.d. with
    \begin{equation}\label{eqn:ising:prob}
    p(\mathbf{x}; \g{\Theta}) = \left(Z(\g{\Theta})\right)^{-1} \exp \big( \sum_{j=1}^{P} \theta_{jj}x_j + \sum_{1 \leq j < j' \leq P} \theta_{jj'}x_jx_{j'} \big).
    \end{equation}
    Here, \(\g{\Theta}\) is a symmetric matrix, and \(Z(\g{\Theta})\) normalizes the density. \(\theta_{jj'} = 0\) indicates conditional independence between the \(j\)th and \(j'\)th variables. The sparsity pattern of \(\g{\Theta}\) reflects the conditional independence graph.
\end{enumerate}
In a data matrix \(\mathbf{X} \in \mathbb{R}^{N \times P}\), each column corresponds to a node in a graph \(\mathcal{G} = (\mathcal{V}, \mathcal{E})\), where \(\mathcal{V} = \{1, 2, \ldots, P\}\) are vertices and \(\mathcal{E} \subseteq \mathcal{V} \times \mathcal{V}\) are edges. Column \(\m{X}_{:i}\) (\(i \in \{1, 2, \ldots, P\}\)) is a vector of length \(N\), representing the observations for the \(i\)-th variable across all \(N\) samples.  The graph $\mathcal{G}$, represented by the symmetric matrix \(\g{\Theta}\), has nonzero entries indicating edges and reflects the graph's independence properties. To obtain a sparse and interpretable graph estimate, we consider 
\begin{equation}\label{eqn:obj:graph}
\underset{\g{\Theta}}{\text{minimize}} ~ \mc{L} \left(\g{\Theta};\m{X}\right) +\lambda \|\g{\Theta}\|_1 ~~~ \text{subj. to}~~~\g{\Theta}\in \mc{M}.
\end{equation}
Here, $\mc{L}$ is a loss function; $\lambda \|\cdot\|_1$ is the $\ell_1$-norm regularization with parameter $\lambda>0$; and $\mc{M}$ is a convex constraint subset of $\mb{R}^{P \times P}$. For example, in a Gaussian GM, $\mc{L}(\g{\Theta};\m{X}) = -\log \det (\g{\Theta}) + \trace(\m{S}\g{\Theta})$, where $\m{S}=n^{-1} \sum_{i=1}^n \m{X}_{i:}^\top \m{X}_{i:}$, and $\mc{M} $ is the set of $P \times P$ positive definite matrices.

\vspace{-5pt}
\subsection{Motivation}\label{sec:motcont}
\vspace{-5pt}
Our motivations for obtaining a fair GM estimation are summarized as follows. \textit{i) Equitable Representation:} Standard group-specific GM models may improve accuracy for targeted groups but do not ensure fairness and can reinforce biases present in the data~\cite{mehrabi2021survey}. A unified approach considering the entire dataset is essential for mitigating biases and promoting fairness across all groups. \textit{ii) Legal and Ethical Compliance:} Ethical and legal considerations~\cite{cummings2019compatibility} require explicit consent for processing sensitive attributes in model selection. Thus, constructing a fair estimation approach that adheres to fairness practices, uses data with consent, and excludes sensitive attribute information during deployment ensures privacy and legal compliance. \textit{iii) Generalization across Groups:} A unified fair GM captures differences across groups without segregating the model, enhancing generalizability and preventing overfitting to a specific group~\cite{hawkins2004problem}, a risk in training separate models for each group.

\begin{figure*}[t]
\centering
\includegraphics[width=\linewidth]{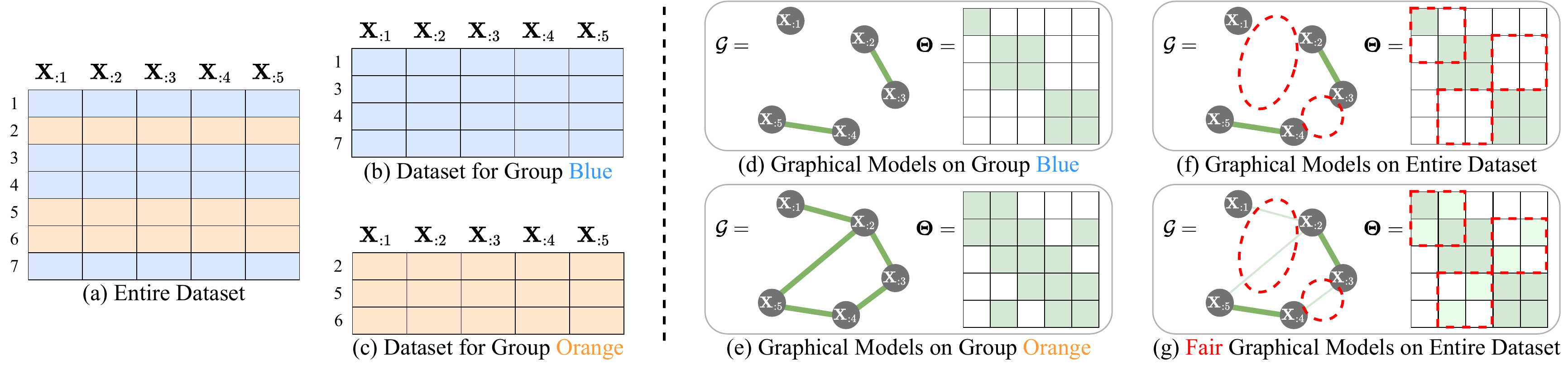}
\caption{Illustration of a GM and its fair variant. (a) displays the entire dataset, split into Group  \textcolor[RGB]{140, 180, 230}{Blue} (b) and Group \textcolor[RGB]{255, 165, 79}{Orange} (c). (d) and (e) show GMs for each group, detailing the relationships between variables. (f) uses a GM for the entire dataset. The fair model in (g) adjusts these relationships to ensure equitable representation and minimize biases in subgroup analysis.}
\label{fig:illustration}
\vspace{-.3cm}
\end{figure*}

For further discussion, we compare a GM with its proposed Fair variant, as illustrated in Figure~\ref{fig:illustration}. Panel (a) shows the entire dataset, divided into Group Blue and Group Orange in panels (b) and (c). Panels (d) and (e) detail the GM for each group, highlighting variable relationships. Panel (f) demonstrates a conventional GM applied to the full dataset, revealing a bias towards Group Blue. Panel (g) introduces a Fair GM, including modifications (red dashed lines) to reduce bias and ensure balanced representation. These adjustments correct relationships within the model, promoting fairness by preventing disproportionate favor towards any group. This illustration highlights the bias challenge in GMs and the steps Fair GMs take to ensure fair and equal modeling outcomes across groups.

\subsection{Contributions}\label{sec:conti}
Our contributions are summarized as follows:
\begin{enumerate}[label=$\diamond$, labelindent=0pt, itemsep=0pt, topsep=0pt, partopsep=0pt]
\item We propose a framework to mitigate bias in Gaussian, Covariance, and Ising models related to protected attributes. This is achieved by incorporating \textit{pairwise graph disparity error} and a tailored loss function into a \textit{nonsmooth multi-objective optimization} problem, striving to achieve fairness across different sensitive groups while preserving GMs performance.

\item We develop a proximal gradient method with non-asymptotic convergence guarantees for nonsmooth multi-objective optimization, applicable to Gaussian, Covariance, and Ising  models (Theorems \ref{thm:glasso}--\ref{thm:ising}). To our knowledge, this is the first work providing a multi-objective proximal gradient method for GM estimation, in contrast to existing single-objective GM methods \cite{banerjee2006convex,witten2011new,danaher2014joint}.
\item We provide extensive experiments to validate the effectiveness of our GM framework in mitigating bias while maintaining model performance on synthetic data, the Credit Dataset, the Cancer Genome Atlas Dataset, Alzheimer's Disease Neuroimaging Initiative (ADNI), and the binary LFM-1b Dataset for recommender systems\footnote{Code is available at \url{https://github.com/PennShenLab/Fair_GMs}}.
\end{enumerate}
\section{Related Work}\label{sec:related}

\textbf{Estimation of Graphical Models.} The estimation of network structures from high-dimensional data~\cite{yang2015robust,mohan2012structured,andrew2022antglasso,wang2023learning,fattahi2019graphical,wang2023learning} is a well-explored domain with significant applications in biomedical and social sciences \citep{Liljeros01,Robins07,guo2010joint}. Given the challenge of parameter estimation with limited samples, sparsity is imposed via regularization, commonly through an $\ell_1$ penalty to encourage sparse network structures \citep{friedman2008sparse,ElKaroui08,guo2010joint}. However, these approaches may overlook the complexity of real-world networks, which often have varying structures across scales, including densely connected subgraphs or communities \citep{danaher2014joint,guo2011joint,gan2019bayesian}. Recent work extends beyond simple sparsity to estimate hidden communities within networks, reflecting homogeneity within and heterogeneity between communities \citep{marlin2009sparse}. This includes inferring connectivity and performing graph estimation when community information is known, as well as considering these tasks in the context of heterogeneous observations \citep{kumar2020unified, tarzanagh2018estimation, gheche2020multilayer}.

\textbf{Fairness.} Fairness research in machine learning has predominantly focused on supervised methods \cite{chouldechova2018frontiers,barocas-hardt-narayanan,donini2018empirical,khademi2019fairness,zhang2022fairness,tarzanagh2023fairness,haghtalab2024unifying}. Our work broadens this scope to unsupervised learning, incorporating insights from \citep{samadi2018price,tantipongpipat2019multi,oneto2020fairness,caton2020fairness, chierichetti2017fair}. Notably, \cite{kleindessner2019guarantees} has developed algorithms for fair clustering using the Laplacian matrix. Our approach diverges by not presupposing any graph and Laplacian structures. The most relevant works to this study are \cite{tarzanagh2021fair,zhang2023unified,navarro2024fair,navarro2024mitigating}. Specifically, \cite{tarzanagh2021fair} initiated the learning of fair GMs using an $\ell_1$-regularized pseudo-likelihood method for joint GMs estimation and fair community detection. \cite{zhang2023unified,zhang2024dual} proposed a fair spectral clustering model that integrates graph construction, fair spectral embedding, and discretization into a single objective function. Unlike these models, which assume community structures, our study formulates fair GMs without such assumption. Concurrently with this work, \cite{navarro2024fair} proposed a regularization method for fair Gaussian GMs assuming the availability of node attributes. Their methodology significantly differs from ours, as we focus on developing three classes of fair GMs (Gaussian, Covariance, and Ising models) for imbalanced groups \textit{without node attributes}, aiming to \textit{automatically} ensure fairness through non-smooth multi-objective optimization.
\section{Fair Estimation of Graphical Models}\label{sec:formula}

\textbf{Notation.}  $\mathbb{R}^d$ denotes the $d$-dimensional real space, and $\mathbb{R}^d_{+}$ and $\mathbb{R}^d_{++}$ its positive and negative orthants. Vectors and matrices are in bold lowercase and uppercase letters (e.g., $\mathbf{a}$, $\mathbf{A}$), with elements $a_i$ and $a_{ij}$. Rows and columns of $\mathbf{A}$ are $\mathbf{A}_{i:}$ and $\mathbf{A}_{:j}$, respectively.  For symmetric $\mathbf{A}$, $\mathbf{A} \succ 0$ and $\mathbf{A} \succeq 0$ denote positive definiteness and semi-definiteness.  $\Lambda_i(\mathbf{A})$ is the $i$th smallest eigenvalue of $\mathbf{A}$. The matrix norms are defined as $\|\mathbf{A}\|_1 = \sum_{ij}|a_{ij}|$ and $\|\mathbf{A}\|_{F} = (\sum_{ij}|a_{ij}|^2)^{1/2}$.  For any positive integer $n$, $[n]:=\{1, \ldots, n\} $. Any notation is defined upon its first use and summarized in Table~\ref{tab:notation}.

\subsection{Graph Disparity Error}

To evaluate the effects of joint GMs learning on different groups, we compare models trained on group-specific data with those trained on a combined dataset. Let a dataset \(\mathbf{X}\) be divided into $K$ sensitive groups, with the data for group $k \in [K]$ represented as \(\mathbf{X}_k \in \mathbb{R}^{N_k \times P}\), where $N_k$ is the sample size for group $k$, and $N=\sum_{k=1}^n N_k$. The performance of a GM, denoted by \(\g{\Theta}\), for group $k$ is measured by the loss function \(\mathcal{L}(\g{\Theta}; \mathbf{X}_k)\).  Our goal is to find a global model \(\g{\Theta}^*\) that minimizes performance discrepancies across groups. We define graph disparity error to quantify fairness: 

\begin{defn}[\textbf{Graph Disparity Error}]\label{defn:graph:dispa}
Given a dataset $\mathbf{X} \in \mathbb{R}^{N \times P}$ with $K$ sensitive groups, where $\mathbf{X}_k$ represents the data for group $k \in [K]$, let 
\begin{align}\label{eqn:local:sol}
\g{\Theta}_k^* \in ~\argmin_{\boldsymbol{\Theta}_k\in \mc{M}}~\mc{L}(\g{\Theta}_k;\m{X}_k) + \lambda\|\g{\Theta}_k\|_1.    
\end{align}
The graph disparity error for group $k$ is then:
\vspace{-1mm}
\begin{equation}\label{eqn:graph:dispa}
\mc{E}_k\left(\g{\Theta}\right) :=  \mc{L}(\g{\Theta}; \m{X}_k) -  \mc{L}(\g{\Theta}_k^*; \m{X}_k),\quad  1\leq k\leq K.
\end{equation}
\end{defn}
This measures the loss difference between a global graph matrix \(\g{\Theta}\) and the optimal local graph matrix \(\g{\Theta}_k^*\) for each group's data \(\mathbf{X}_k\). A fair GM, under Definition~\ref{defn:graph:dispa}, seeks to balance $\mc{E}_k$ across all groups.
\begin{defn}[\textbf{Fair GM}]\label{defn:fair:graph}
A GM with graph matrix $\g{\Theta}^*$ is called fair if the graph disparity errors among different groups are equal, i.e.,
\vspace{-1mm}
\begin{equation}\label{eqn:fair:graph}
\mc{E}_1\left(\g{\Theta}^*\right) = \mc{E}_2\left(\g{\Theta}^*\right) = \cdots = \mc{E}_K\left(\g{\Theta}^*\right).
\end{equation}
\end{defn}
To address the imbalance in graph disparity error among all groups, we introduce the idea of pairwise graph disparity error, which quantifies the variation in graph disparity between different groups.
\begin{defn}[\textbf{Pairwise Graph Disparity Error}]\label{defn:graph:pair_disap}
Let $\phi:\mb{R}\rightarrow\mb{R}_+$ be a penalty function such as $\phi(x)=\exp(x)$ or $\phi(x)=\frac{1}{2}x^2$. The pairwise graph disparity error for the group $k$ is defined as
\vspace{-1mm}
\begin{align}\label{eqn:graph:pair_disap}
\Delta_k\left(\g{\Theta}\right):=\sum_{s\in [K],s\neq k}\phi\left(\mc{E}_k\left(\g{\Theta}\right)-\mc{E}_s\left(\g{\Theta}\right)\right).
\end{align}
\end{defn}
The motivation for Definition \ref{defn:graph:pair_disap} follows from the work of \cite{kamani2022efficient, samadi2018price, zhou2024fair} in PCA and CCA. In our convergence analysis, we focus on smooth functions $\phi$, such as squared or exponential functions, while nonsmooth choices, such as $\phi(x) = |x|$, can be explored in the experimental evaluations.

\subsection{Multi-Objective Optimization for Fair GMs}


\begin{algorithm}[t]
\caption{Fair Estimation of GMs (Fair GMs)}
\label{alg:fairgms}
\begin{algorithmic}[1]
\REQUIRE{ Data Matrix \(\g{X}=\g{X}_1 \cup \g{X}_2 \cup \cdots \cup \g{X}_K\);  Parameters \(\lambda > 0\), \(\epsilon > 0\), \(T > 0\),  and \(\ell > L\).} 
\\
\begin{enumerate}[label={\textnormal{{\textbf{S\arabic*.}}}}, wide, labelindent=-15pt,itemsep=0pt]
\item \label{item:s1} Get local graph estimates \(\{\g{\Theta}_k^*\}_{k=1}^K\) using \eqref{eqn:local:sol}, and initialize global graph estimate \(\g{\Theta}^{(0)}\).
\item \label{item:s2}  \textbf{For} $t=1$ \textbf{to} $T-1$ \textbf{do}: \\
\qquad \(\g{\Theta}^{(t+1)} \leftarrow \m{P}_{\ell}(\g{\Theta}^{(t)})\), where \(\m{P}_{\ell}\) is obtained by solving Subproblem~\eqref{eqn:multi:p+omega}.
\end{enumerate}
\hspace{-15pt}\textbf{Output}: Fair global graph estimate \(\g{\Theta}^{(t+1)}\).
\end{algorithmic}
\end{algorithm}

This section introduces a framework designed to mitigate bias in GMs (including Gaussian, Covariance, and Ising) related to protected attributes by incorporating {pairwise graph disparity error} into a {nonsmooth} multi-objective optimization problem. Smooth multi-objective optimization tackles fairness challenges in unsupervised learning~\cite{kamani2022efficient,zhou2024fair}, proving particularly useful when decision-making involves multiple conflicting objectives.

We use \textit{non-smooth multi-objective optimization} to balance two key factors: the loss in GMs and the pairwise graph disparity errors. To achieve this, let
\begin{subequations}\label{eqn:def:fks}
\begin{align}
    f_1\left(\boldsymbol{\Theta}\right) = \mathcal{L}\left(\boldsymbol{\Theta}; \mathbf{X}\right), \qquad 
    f_k\left(\boldsymbol{\Theta}\right) = \Delta_{k-1}\left(\boldsymbol{\Theta}\right), \qquad  \text{for } 2 &\leq k \leq K+1,\\
        F_k\left(\boldsymbol{\Theta}\right) = f_k\left(\boldsymbol{\Theta}\right) + g\left(\boldsymbol{\Theta}\right),  \qquad \qquad \qquad \text{for } 1 &\leq k \leq M := K+1,
\end{align}    
\end{subequations}
where \(g\left(\g{\Theta}\right):=\lambda\|\g{\Theta}\|_1\) for some \(\lambda>0\). 

Consequently, we propose the following multi-objective optimization problem for Fair GMs:
\begin{equation}\label{eqn:multi_obj}
\underset{\g{\Theta}}{\text{minimize}} ~~ \m{F}\left(\g{\Theta}\right) := \left[F_1\left(\g{\Theta}\right), \ldots, F_{M}\left(\g{\Theta}\right)\right]\qquad \text{subj. to} ~~ \g{\Theta} \in \mc{M}.
\end{equation}
Here, $\mc{M}$ is a convex constraint subset of $\mb{R}^{P \times P}$ and $\m{F} :  \Omega \rightarrow \mb{R}^{M}$ is a  multi-objective function.
\begin{assumption}\label{assum}
For some \(L>0\), all \(\g{\Theta},\g{\Phi}\in\mc{M}\), $k\in[M]$, $\|\nabla f_k\left(\g{\Phi}\right) - \nabla f_k\left(\g{\Theta}\right)\|_F\leq L\|\g{\Phi}-\g{\Theta}\|_F.$
\end{assumption}
Note that Assumption \ref{assum} holds for smooth \(\phi\) functions such as squared or exponential, as specified in Definition~\ref{eqn:graph:pair_disap}, and when \(\mathcal{L}\) is a smooth loss function. We demonstrate in Appendix \ref{sec:app:theory} that this assumption holds for the Gaussian, Covariance, and Ising models studied in this work. To proceed, we provide the following definitions; see \cite{fliege2000steepest,tanabe2023convergence,tanabe2019proximal} for more details.

\begin{defn}[Pareto Optimality]\label{defn:pareto}
In Problem \eqref{eqn:multi_obj}, a solution \(\g{\Theta}^* \in \mc{M}\) is {Pareto optimal} if there is no \(\g{\Theta}\in\mc{M}\) such that \(\m{F}(\g{\Theta}) \preceq \m{F}(\g{\Theta}^*)\) and \(\m{F}(\g{\Theta}) \neq \m{F}(\g{\Theta}^*)\). It is {weakly Pareto optimal} if there is no \(\g{\Theta} \in \mc{M}\) such that \(\m{F}(\g{\Theta}) \prec \m{F}(\g{\Theta}^*)\).
\end{defn}

\begin{defn}[Pareto Stationary]\label{defn:station}
We define a point $\bar{\g{\Theta}} \in    \mb{R}^{P \times P}$ as \textit{Pareto stationary} (or \textit{critical}) if it satisfies the following condition:
\begin{equation*}
    \max_{k \in [M]}  F_k'(\bar{\g{\Theta}}; \mathbf{D}):=\lim_{\alpha \to 0} \frac{F_k(\bar{\g{\Theta}} + \alpha \mathbf{D}) - F_k(\bar{\g{\Theta}})}{\alpha}  \geq 0 \quad \textnormal{for all } \quad  \m{D} \in \mb{R}^{P \times P}.
\end{equation*}
\end{defn}
To solve Problem~\eqref{eqn:multi_obj}, we use the proximal gradient method and establish its convergence to a Pareto stationary point for the nonsmooth Problem~\eqref{eqn:multi_obj}.   The procedure for our fairness-aware GMs (Fair GMs) is detailed in Algorithm~\ref{alg:fairgms}. Given local graph estimates \(\{\g{\Theta}_k^*\}_{k=1}^K\) obtained in \ref{item:s1}, and  \(\ell>L\), where \(L\) is a Lipschitz constant defined in Assumption~\ref{assum}, the update of the global fair graph estimate \(\g{\Theta}\) is produced in \ref{item:s2} by solving:
\begin{subequations}\label{eqn:multi:p+omega}
\begin{align}
&\m{P}_{\ell}\left(\g{\Theta}\right) := \argmin_{\boldsymbol{\Phi}\in\mc{M}}\varphi_{\ell}\left(\g{\Phi}; \g{\Theta}\right), \quad \textnormal{with}\\
\varphi_{\ell}\left(\g{\Phi};\g{\Theta}\right) &:= \max_{k\in[M]}\big\langle\nabla f_k(\g{\Theta}),\g{\Phi}-\g{\Theta}\big\rangle + g(\g{\Phi})-g(\g{\Theta}) 
+ \frac{\ell}{2} \left\|\g{\Phi}-\g{\Theta}\right\|_F^2.
\end{align}
\end{subequations}
Note that the convexity of \(g(\g{\Theta}) = \lambda\|\g{\Theta}\|_1\) ensures a unique solution for Problem~\eqref{eqn:multi:p+omega}. We provide a simple yet efficient approach to solve Subproblem~\eqref{eqn:multi:p+omega} through its dual in Appendix~\ref{sec:app:subproblem}. In addition, Proposition~\ref{prop} in Appendix~\ref{sec:app:subproblem} characterizes the weak Pareto optimality for Problem~\eqref{eqn:multi_obj}.

\subsection{Theoretical Analysis}\label{sec:fgraph}
We apply Algorithm~\ref{alg:fairgms} to Gaussian, Covariance, and Ising models and provide theoretical guarantees.

\paragraph{Fair Graphical Lasso (Fair GLasso).}
Consider \(\m{X}_{1:}, \ldots, \m{X}_{N:}\) as i.i.d. samples from \(\mathcal{N}(\mathbf{0}, \mathbf{\Sigma})\). In the GLasso method \cite{friedman2008sparse}, the loss is defined as  \( \mc{L}_{G} \left(\g{\Theta};\m{X}\right):=-\log\det(\g{\Theta}) + \trace(\m{S}\g{\Theta})\), where \(\g{\Theta}\) is constrained to the set \(\mc{M} = \{\g{\Theta} : \g{\Theta} \succ \m{0}, \g{\Theta} = \g{\Theta}^\top\}\) and  $\m{S}=n^{-1} \sum_{i=1}^n \m{X}_{i:} \m{X}_{i:}^\top \in \mb{R}^{N\times N}$ is the empirical covariance matrix of \(\m{X}\). Extending this to fair GLasso and following \eqref{eqn:multi_obj}, the multi-objective optimization problem is formulated as:
\begin{equation}\label{eqn:fairGLASSO}
\tag{Fair GLasso}
\begin{aligned}
\underset{\g{\Theta}}{\text{minimize}} \quad & \m{F}(\g{\Theta})=\left[\mc{L}_{G} \left(\g{\Theta};\m{X}\right)+\lambda\|\g{\Theta}\|_1, F_2(\g{\Theta}), \cdots, F_{M}(\g{\Theta})\right] \\
\text{subj. to} & \quad \g{\Theta}  \in \mc{M}= \{\g{\Theta} : \g{\Theta} \succ\m{0}, \g{\Theta} =\g{\Theta}^\top\}.
\end{aligned}
\end{equation}
\begin{assumption}\label{app:aux_ass1}
Let \(\mc{N}^*\) be the set of weakly Pareto optimal points for \eqref{eqn:multi_obj}, and \(\Omega_{\m{F}}(\g{\alpha}):=\{\g{\Theta}\in\mc{S}~|~\m{F}(\g{\Theta})\preceq\g{\alpha}\}\) denote the
the level set of $\m{F}$ for $\g{\alpha} \in \mb{R}^{M}$. For all \(\g{\Theta}\in\Omega_{\m{F}}(\m{F}(\g{\Theta}^{(0)}))\), there exists \(\g{\Theta}^*\in\mc{N}^*\) such that \(\m{F}(\g{\Theta}^*)\preceq \m{F}(\g{\Theta})\) and
\begin{equation*}
R:=\sup_{\m{F}^*\in \m{F}(\mc{N}^*\cap\Omega_{\m{F}}(\m{F}(\boldsymbol{\Theta}^0)))}  \quad \inf_{\boldsymbol{\Theta}\in \m{F}^{-1}(\{\m{F}^*\})} \|\g{\Theta}-\g{\Theta}^{(0)}\|_F^2<\infty.
\end{equation*}
\end{assumption}
This assumption is satisfied when \(\Omega_F(\m{F}(\g{\Theta}^{(0)}))\) is bounded \cite{tanabe2023convergence,tanabe2019proximal}. When \(M=1\), it holds if the problem has at least one optimal solution. If \(\Omega_F(\m{F}(\g{\Theta}^{(0)}))\) is bounded, Assumption \ref{app:aux_ass1} also holds, such as when \(F_k \) is strongly convex for some \( k \in [M]\).
\begin{thm}\label{thm:glasso}
Suppose Assumptions~\ref{assum} and \ref{app:aux_ass1} hold. Let \(\{\g{\Theta}^{(t)}\}_{t \geq 1}\) be the sequence generated by Algorithm~\ref{alg:fairgms} for solving \eqref{eqn:fairGLASSO}. Then,
\begin{equation*}
\sup_{\g{\Theta} \in \mc{M}} \min_{k \in [M]} \left\{ F_k \left(\g{\Theta}^{(t)}\right) - F_k(\g{\Theta}) \right\} \leq \frac{\ell R}{2t}.
\end{equation*}
\end{thm}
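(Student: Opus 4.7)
The plan is to verify that the nonsmooth multi-objective proximal-gradient framework of \cite{tanabe2023convergence,tanabe2019proximal} applies to \eqref{eqn:fairGLASSO}, and then to instantiate its $O(1/t)$ rate with the constants defined here. Algorithm~\ref{alg:fairgms} is exactly the proximal-gradient recursion $\g{\Theta}^{(t+1)}=\mathbf{P}_\ell(\g{\Theta}^{(t)})$ for the vector objective $\mathbf{F}$ in \eqref{eqn:def:fks} with stepsize $1/\ell$, so once Assumption~\ref{assum} is in force the standard descent lemma and three-point identity can be chained together to produce the bound.

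First I would verify Assumption~\ref{assum} in the GLasso case. The only term that is not immediately Lipschitz-smooth is $-\log\det(\g{\Theta})$, whose Hessian acts as $\mathbf{H}\mapsto\g{\Theta}^{-1}\mathbf{H}\g{\Theta}^{-1}$ and therefore has operator norm $\Lambda_1(\g{\Theta})^{-2}$. Restricting $\mc{M}$ to $\{\g{\Theta}\succeq\mu\mathbf{I}:\g{\Theta}=\g{\Theta}^\top\}$ for some $\mu>0$ yields a Lipschitz constant at most $\mu^{-2}$; the trace term is linear, and each pairwise disparity $\Delta_k$ inherits Lipschitz smoothness by composing a $C^{1,1}$ choice of $\phi$ with affine combinations of the per-group losses on the same compact set. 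The main obstacle is precisely this step: securing a uniform lower bound $\Lambda_1(\g{\Theta}^{(t)})\geq\mu$ along the iterates. I would exploit the barrier behavior of $-\log\det$ at the boundary of the PD cone together with the descent property below, arguing that any $\g{\Theta}$ with $\Lambda_1(\g{\Theta})$ below a threshold would satisfy $F_1(\g{\Theta})>F_1(\g{\Theta}^{(0)})$ and hence fall outside the level set $\Omega_{\mathbf{F}}(\mathbf{F}(\g{\Theta}^{(0)}))$ where iterates provably live.

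Next I would establish the core descent and comparison inequalities. For $\ell\geq L$, combining the descent lemma with the definition of $\mathbf{P}_\ell$ in \eqref{eqn:multi:p+omega} gives
\begin{equation*}
F_k(\mathbf{P}_\ell(\g{\Theta}))-F_k(\g{\Theta})\leq\varphi_\ell(\mathbf{P}_\ell(\g{\Theta});\g{\Theta})\leq 0\quad\text{for every }k\in[M],
\end{equation*}
so each $F_k$ is non-increasing along the iterates. Using the first-order optimality condition for $\mathbf{P}_\ell(\g{\Theta})$ in \eqref{eqn:multi:p+omega} together with convexity of $f_k$ and $g$, a three-point identity applied to the index attaining the max inside $\varphi_\ell$ then sharpens this into
\begin{equation*}
\min_{k\in[M]}\left\{F_k(\mathbf{P}_\ell(\g{\Theta}))-F_k(\g{\Phi})\right\}\leq\tfrac{\ell}{2}\|\g{\Phi}-\g{\Theta}\|_F^2-\tfrac{\ell}{2}\|\g{\Phi}-\mathbf{P}_\ell(\g{\Theta})\|_F^2
\end{equation*}
for any $\g{\Phi}\in\mc{M}$. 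Since $t\mapsto\min_k\{F_k(\g{\Theta}^{(t)})-F_k(\g{\Phi})\}$ inherits monotonicity from the per-coordinate non-increase of each $F_k$, telescoping from $t=0$ to $t-1$ yields
\begin{equation*}
\min_{k\in[M]}\left\{F_k(\g{\Theta}^{(t)})-F_k(\g{\Phi})\right\}\leq\frac{\ell\,\|\g{\Theta}^{(0)}-\g{\Phi}\|_F^2}{2t}.
\end{equation*}

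Finally I would invoke Assumption~\ref{app:aux_ass1} to replace $\|\g{\Theta}^{(0)}-\g{\Phi}\|_F^2$ by $R$ uniformly in $\g{\Phi}\in\mc{M}$. For $\g{\Phi}$ outside the level set $\Omega_{\mathbf{F}}(\mathbf{F}(\g{\Theta}^{(0)}))$ the left-hand side is already non-positive, because iterates remain inside this set by the descent property. For $\g{\Phi}$ inside the level set, Assumption~\ref{app:aux_ass1} supplies a weakly Pareto optimal dominator $\g{\Theta}^*\in\mc{N}^*$ with $\mathbf{F}(\g{\Theta}^*)\preceq\mathbf{F}(\g{\Phi})$ and $\|\g{\Theta}^*-\g{\Theta}^{(0)}\|_F^2\leq R$. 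Substituting $\g{\Phi}=\g{\Theta}^*$ into the telescoped bound and using $F_k(\g{\Theta}^*)\leq F_k(\g{\Phi})$ coordinatewise gives $\min_k\{F_k(\g{\Theta}^{(t)})-F_k(\g{\Phi})\}\leq\ell R/(2t)$, and taking the supremum over $\g{\Phi}\in\mc{M}$ proves the theorem.
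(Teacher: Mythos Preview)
Your proposal is correct and follows essentially the same route as the paper---descent lemma via Assumption~\ref{assum}, convexity of each $f_k$ and $g$, telescoping, then Assumption~\ref{app:aux_ass1} to pass from the level set to the full $\mc{M}$---which is exactly the Tanabe et al.\ argument you cite. The only cosmetic difference is that the paper carries the KKT multipliers $\rho_k^{(t)}$ from Subproblem~\eqref{eqn:multi:p+omega} through the telescoped sum and only passes to $\min_k$ after averaging, whereas you pass to the $\min_k$ immediately; your phrase ``the index attaining the max'' should strictly be replaced by this convex combination of active indices, but the resulting one-step inequality and final bound are identical.
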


\paragraph{Fair Covariance Graph (Fair CovGraph).}
For the Fair CovGraph, we assume \(\m{X}_{1:}, \ldots, \m{X}_{N:}\) are i.i.d. samples from \(\mathcal{N}(\mathbf{0}, \mathbf{\Sigma})\). We use a sparse estimator for the covariance matrix, ensuring it remains positive definite and specifies the marginal independence graph. Following \cite{rothman2012positive}, we define the estimator's loss function as \(\mc{L}_{C}(\g{\Sigma}, \m{X}) := \frac{1}{2}\|\g{\Sigma} - \m{S}\|_F^2 - \tau \log \det(\g{\Sigma})\) with \(\tau > 0\). Building on this and using \eqref{eqn:def:fks} and \eqref{eqn:multi_obj}, for some nonegative constants $\gamma_C$ and $\lambda$, we introduce the Fair CovGraph optimization problem, formulated as follows:
%
\begin{align}\label{eqn:fairCOV}
\nonumber 
\underset{\g{\Sigma}}{\text{minimize}} & \quad \m{F}\left(\g{\Sigma}\right)= \left[F_1(\g{\Sigma}), F_2(\g{\Sigma}), \ldots, F_{M}(\g{\Sigma})\right]\\
\text{subj. to} & \quad\g{\Sigma} \in \mc{M}= \{\g{\Sigma}: \g{\Sigma}\succ\m{0}, \g{\Sigma}=\g{\Sigma}^\top\}. \tag{Fair CovGraph}
\end{align}
Here, following \eqref{eqn:multi_obj}, we have \( f_1(\boldsymbol{\Sigma}) = \mathcal{L}_C(\boldsymbol{\Sigma}; \mathbf{X}) \) and \( f_k(\boldsymbol{\Sigma}) = \Delta_{k-1}(\boldsymbol{\Sigma}) \) for \( 2 \leq k \leq K+1 \). Also, \( F_1\left( \boldsymbol{\Sigma} \right) = f_1\left( \boldsymbol{\Sigma} \right) + \lambda\|\boldsymbol{\Sigma}\|_1 \) and \( F_k\left( \boldsymbol{\Sigma} \right) = f_k\left( \boldsymbol{\Sigma} \right) + \lambda\|\boldsymbol{\Sigma}\|_1 + \gamma_C\|\boldsymbol{\Sigma}\|_F^2 \) for \( 2 \leq k \leq M = K+1 \).

The parameter \(\gamma_C\) is used to convexify \eqref{eqn:fairCOV} and is crucial for ensuring the convergence of Algorithm~\ref{alg:fairgms}. The following theorem establishes the convergence of Algorithm~\ref{alg:fairgms} for \eqref{eqn:fairCOV}.

\begin{thm}\label{cor:covgraph}
Under conditions similar to Theorem \ref{thm:glasso}, by replacing \(\g{\Theta}\) with \(\g{\Sigma}\) and \(\mc{L}_{G} \left(\g{\Theta};\m{X}\right)\) with \(\mc{L}_C(\g{\Sigma}, \m{X})\), for the sequence \(\{\g{\Sigma}^{(t)}\}_{t \geq 1}\) generated by Algorithm~\ref{alg:fairgms} applied to \eqref{eqn:fairCOV}, and for \(\gamma_C \geq \max\{0, -\Lambda_{\min}(\nabla^2 f_{k}(\g{\Sigma}))\}\) for all \(k \in [K]\), we have:
\begin{equation*}
\sup_{\g{\Sigma} \in \mc{M}} \min_{k \in [M]} \left\{ F_k\left(\g{\Sigma}^{(t)}\right) - F_k\left(\g{\Sigma}\right) \right\} \leq \frac{\ell R}{2t}.
\end{equation*}
\end{thm}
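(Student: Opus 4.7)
The plan is to reduce the theorem to the same multi-objective proximal-gradient rate established in Theorem~\ref{thm:glasso}, by verifying that the Fair CovGraph instance fits the abstract framework once $\gamma_C$ is absorbed appropriately. Concretely, I would re-group the objective: regard the smooth part of $F_k$ as $f_k(\g{\Sigma})$ (which, for $k\ge 2$, now includes the quadratic term $\gamma_C\|\g{\Sigma}\|_F^2$) and the nonsmooth part as $g(\g{\Sigma})=\lambda\|\g{\Sigma}\|_1$. Then Algorithm~\ref{alg:fairgms}'s update \eqref{eqn:multi:p+omega} applied to \eqref{eqn:fairCOV} is exactly the generic scheme analyzed in Theorem~\ref{thm:glasso}, so it suffices to verify the two hypotheses of that theorem, namely Assumption~\ref{assum} (descent-style Lipschitz smoothness) and a level-set--type bound like Assumption~\ref{app:aux_ass1} relative to this new smooth/nonsmooth splitting.

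First I would verify Assumption~\ref{assum}. For the covariance loss, $\nabla_{\g{\Sigma}}\mc{L}_C(\g{\Sigma};\m{X})=\g{\Sigma}-\m{S}-\tau\g{\Sigma}^{-1}$, whose Lipschitz constant on the set $\{\g{\Sigma}\in\mc{M}:\Lambda_1(\g{\Sigma})\ge m\}$ is $1+\tau/m^2$, so it is Lipschitz on any sublevel set on which $\Lambda_1(\g{\Sigma})$ is bounded below by some $m>0$. This boundedness on the iterates' domain is guaranteed by the $-\tau\log\det$ barrier together with the level-set condition carried over from Assumption~\ref{app:aux_ass1}; I would spell this out by noting that $\mc{L}_C(\g{\Sigma};\m{X})\to\infty$ as $\Lambda_1(\g{\Sigma})\downarrow 0$, which forces the trajectory generated by Algorithm~\ref{alg:fairgms} to stay in a compact sublevel set of $\mc{M}$ (I would invoke the same descent-of-merit-value argument used in the proof of Theorem~\ref{thm:glasso}). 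The pairwise disparity terms $\Delta_{k-1}(\g{\Sigma})$ inherit gradient Lipschitzness from $\mc{L}_C$ composed with the smooth outer $\phi$, and the added $\gamma_C\|\g{\Sigma}\|_F^2$ contributes a globally Lipschitz gradient $2\gamma_C\g{\Sigma}$, so a combined constant $L$ exists and the assumption $\ell>L$ is the same as in Theorem~\ref{thm:glasso}.

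Next I would address the role of $\gamma_C$, which is the main obstacle. Unlike \eqref{eqn:fairGLASSO}, where each $F_k$ inherits convexity essentially from the $-\log\det$ penalty and the convex disparity penalty $\phi$, here $\Delta_{k-1}(\g{\Sigma})$ is a smooth composition that involves \emph{differences} $\mc{E}_{k-1}-\mc{E}_s$ of convex functions, which need not be convex. By choosing $\gamma_C\ge\max\{0,-\Lambda_{\min}(\nabla^2 f_k(\g{\Sigma}))\}$ uniformly over $\g{\Sigma}$ in the working sublevel set and over $k\in[K]$, I would show that $\nabla^2 F_k(\g{\Sigma})=\nabla^2 f_k(\g{\Sigma})+2\gamma_C\m{I}\succeq 0$, i.e.\ each $F_k$ becomes convex on $\mc{M}$. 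This convexification is exactly what the $O(1/t)$ rate in Theorem~\ref{thm:glasso} needs in order for the merit-function telescoping argument based on \eqref{eqn:multi:p+omega} to go through; without it, only subsequential convergence to Pareto stationary points in the sense of Definition~\ref{defn:station} would be available.

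Finally, with Assumption~\ref{assum} and the convexity of every $F_k$ secured, and with the level-set quantity $R$ of Assumption~\ref{app:aux_ass1} defined relative to the Fair CovGraph objective, I would reuse verbatim the proof of Theorem~\ref{thm:glasso}: a one-step descent inequality derived from the optimality condition of the proximal subproblem \eqref{eqn:multi:p+omega} and the $L$-smoothness of each $f_k$, followed by a convex-combination/telescoping argument over $t$ iterations, yielding
\[
\sup_{\g{\Sigma}\in\mc{M}}\min_{k\in[M]}\bigl\{F_k(\g{\Sigma}^{(t)})-F_k(\g{\Sigma})\bigr\}\le\frac{\ell R}{2t}.
\]
The only two places requiring genuinely new work beyond invoking Theorem~\ref{thm:glasso} are the Lipschitz verification near the boundary of $\{\g{\Sigma}\succ 0\}$ and the quantitative choice of $\gamma_C$; everything else is a direct transcription of the Gaussian case with $\g{\Theta}$ replaced by $\g{\Sigma}$ and $\mc{L}_G$ replaced by $\mc{L}_C$.
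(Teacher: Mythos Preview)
Your proposal is correct and follows essentially the same strategy as the paper: verify that the $\gamma_C\|\g{\Sigma}\|_F^2$ term convexifies each $f_k$ for $k\ge 2$, check gradient Lipschitz continuity of the $f_k$'s, and then invoke the proof of Theorem~\ref{thm:glasso} verbatim with $\g{\Theta}\to\g{\Sigma}$ and $\mc{L}_G\to\mc{L}_C$. Your treatment of the Lipschitz constant near the boundary of $\{\g{\Sigma}\succ 0\}$ (via the $-\tau\log\det$ barrier and level-set compactness) is somewhat more careful than the paper's, which simply asserts boundedness on $\{\g{\Sigma}\in\mc{M}:\|\g{\Sigma}\|_1<\infty\}$, but the underlying argument is the same.
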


\paragraph{Fair Binary Ising Network (Fair BinNet).}
In this section, we focus on the binary Ising Markov random field as described by~\cite{ravikumar2010high}. The model considers binary-valued, i.i.d. samples with probability density function defined in \eqref{eqn:ising:prob}. Following~\cite{hofling2009estimation}, we consider the following loss function:
\vspace{-1mm}
\begin{equation}\label{eqn:ising}
\mc{L}_{I} \left(\g{\Theta};\m{X}\right) = -\sum_{j=1}^{P}\sum_{j'=1}^{P}\theta_{jj'}(\m{X}^\top\m{X})_{jj'} + \sum_{i=1}^{N}\sum_{j=1}^{P}\log\Big(1+\exp\big(\theta_{jj}+\sum_{j'\neq j}\theta_{jj'}x_{ij'}\big)\Big).
\end{equation}

Given some nonegative constants $\gamma_I$ and $\lambda$, the Fair BinNet objective is defined as:
\begin{align}\label{eqn:fairISING}
\nonumber 
\underset{\g{\Theta}}{\text{minimize}} & \quad \m{F}\left(\g{\Theta}\right)=\left[F_1\left(\g{\Theta}\right) , F_2\left(\g{\Theta}\right), \ldots, F_M\left(\g{\Theta}\right) \right]\\
\text{subj. to} & \quad \g{\Theta} \in  \mc{M}=\{\g{\Theta}: \g{\Theta}=\g{\Theta}^\top\}. \tag{Fair BinNet}
\end{align}
Here, following \eqref{eqn:multi_obj}, we have \( f_1(\boldsymbol{\Theta}) = \mathcal{L}_I(\boldsymbol{\Theta}; \mathbf{X}) \), and \( f_k(\boldsymbol{\Theta}) = \Delta_{k-1}(\boldsymbol{\Theta}) \) for \( 2 \leq k \leq K+1 \). Also, \( F_1\left(\boldsymbol{\Theta} \right) = f_1\left( \boldsymbol{\Theta} \right) + \lambda\|\boldsymbol{\Theta}\|_1 \), and \( F_k\left(\boldsymbol{\Theta} \right) = f_k\left( \boldsymbol{\Theta} \right) + \lambda\|\boldsymbol{\Theta}\|_1 + \gamma_I\|\boldsymbol{\Theta}\|_F^2 \) for \( 2 \leq k \leq M = K+1 \).

The parameter \(\gamma_I\) convexifies Problem~\eqref{eqn:fairISING} and ensures Algorithm~\ref{alg:fairgms} converges. The following theorem establishes the convergence of Algorithm~\ref{alg:fairgms} for Problem~\eqref{eqn:fairISING}.

\begin{thm}\label{thm:ising}
Suppose Assumptions~\ref{assum} and \ref{app:aux_ass1} hold, and that \(\gamma_I \geq \max\{0, -\Lambda_{\min}(\nabla^2 f_{k}(\g{\Theta}))\}\) for all \(k \in [K]\). Then, the sequence \(\{\g{\Theta}^{(t)}\}_{t \geq 1}\) generated by Algorithm~\ref{alg:fairgms} for \eqref{eqn:fairISING} satisfies
\begin{equation*}
\sup_{\g{\Theta} \in \mc{M}} \min_{k \in [M]} \left\{ F_k\left(\g{\Theta}^{(t)}\right) - F_k\left(\g{\Theta}\right) \right\} \leq \frac{\ell R}{2t}.
\end{equation*}
\end{thm}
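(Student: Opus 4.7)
The plan is to adapt the convergence analysis of Theorem~\ref{thm:glasso} to the Ising setting. Since Algorithm~\ref{alg:fairgms} is a multi-objective proximal gradient method whose per-iteration update $\m{P}_{\ell}$ is identical across the three models, what must be done is (i) verify that the objectives $F_k$ in \eqref{eqn:fairISING} inherit convexity and Lipschitz-gradient smoothness after the $\gamma_I$-convexification, and then (ii) run the generic descent-plus-telescoping argument already used for \eqref{eqn:fairGLASSO}.

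First I would establish convexity of each $F_k$. The loss $\mc{L}_I$ is convex on $\mc{M}$, being a sum of log-sum-exp terms plus a linear term, so $F_1 = \mc{L}_I + \lambda\|\cdot\|_1$ is convex. For $k \geq 2$, the pairwise disparity $f_k = \Delta_{k-1}$ need not be convex since it is built from differences of convex loss functions post-composed with $\phi$. The hypothesis $\gamma_I \geq -\Lambda_{\min}(\nabla^2 f_k(\g{\Theta}))$ ensures $\nabla^2\bigl(f_k + \gamma_I\|\cdot\|_F^2\bigr) \succeq \m{0}$, making each $F_k$ convex.

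Next I would verify Assumption~\ref{assum}. A direct Hessian computation yields
\begin{equation*}
\nabla^2 \mc{L}_I(\g{\Theta})[\m{H},\m{H}] = \sum_{i=1}^{N}\sum_{j=1}^{P} \sigma_{ij}(1-\sigma_{ij}) \Bigl( h_{jj} + \sum_{j' \neq j} h_{jj'} x_{ij'} \Bigr)^2,
\end{equation*}
with $\sigma_{ij}(1-\sigma_{ij}) \leq 1/4$ and $x_{ij'} \in \{0,1\}$, giving a uniform operator-norm bound on $\nabla^2 \mc{L}_I$ and hence a globally Lipschitz gradient. For $f_k$ with $k \geq 2$, smoothness of $\phi$ combined with the Lipschitz gradient of $\mc{L}_I$, restricted to the level set $\Omega_{\m{F}}(\m{F}(\g{\Theta}^{(0)}))$, transfers through the chain rule to yield Lipschitz gradients; I take $L$ to be the maximum of the resulting constants. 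With $\ell > L$, the descent lemma gives $F_k(\g{\Theta}^{(t+1)}) - F_k(\g{\Theta}^{(t)}) \leq \varphi_{\ell}(\g{\Theta}^{(t+1)};\g{\Theta}^{(t)})$ for each $k$. Coupling this with the optimality of $\g{\Theta}^{(t+1)}$ for the $\ell$-strongly-convex subproblem~\eqref{eqn:multi:p+omega} and the convexity of each $F_k$, a standard three-point identity produces, for every $\g{\Theta} \in \mc{M}$,
\begin{equation*}
\min_{k \in [M]} \bigl\{F_k(\g{\Theta}^{(t+1)}) - F_k(\g{\Theta})\bigr\} \leq \frac{\ell}{2}\|\g{\Theta}-\g{\Theta}^{(t)}\|_F^2 - \frac{\ell}{2}\|\g{\Theta}-\g{\Theta}^{(t+1)}\|_F^2.
\end{equation*}
Summing over $t' = 0, 1, \ldots, t-1$ telescopes the right-hand side to $\tfrac{\ell}{2}\|\g{\Theta}-\g{\Theta}^{(0)}\|_F^2$; Assumption~\ref{app:aux_ass1} bounds this quantity by $\ell R/2$ after taking the supremum over $\g{\Theta} \in \mc{M}$, and dividing by $t$ yields the claim.

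The hard part will be verifying Assumption~\ref{assum} for the disparity terms $f_k$, $k \geq 2$. When $\phi(x) = \exp(x)$ the composed gradient is only locally Lipschitz, so confining iterates to the level set $\Omega_{\m{F}}(\m{F}(\g{\Theta}^{(0)}))$ and exploiting its boundedness, which is implicit in Assumption~\ref{app:aux_ass1}, is essential; the quadratic choice $\phi(x) = \tfrac{1}{2}x^2$ is cleaner but still requires $\mc{E}_k - \mc{E}_s$ to be locally bounded on the level set. Once these smoothness estimates are in place, the remaining analysis is structural and transfers verbatim from the Gaussian case.
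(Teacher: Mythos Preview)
Your proposal is correct and follows essentially the same route as the paper: verify convexity of each $F_k$ (using the $\gamma_I$-shift to handle the disparity terms), verify Lipschitz gradients of the $f_k$, and then invoke the descent-plus-telescoping argument from Theorem~\ref{thm:glasso}. The only omission is the monotonicity step $F_k(\g{\Theta}^{(t)}) \leq F_k(\g{\Theta}^{(t'+1)})$ (Lemma~\ref{app:aux_lem2} in the paper), which you need to pass from the telescoped sum of per-iteration minima to a bound on the final iterate after dividing by $t$; the paper instead carries the dual weights $\rho_k^{(t)}$ through the sum and takes the minimum only at the end, but either ordering works.
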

\begin{rem}[Iteration Complexity of Algorithm~\ref{alg:fairgms}]
Theorems~\ref{thm:glasso}, \ref{cor:covgraph}, and \ref{thm:ising} establish the global convergence rates of $O(1/t)$ for Algorithm~\ref{alg:fairgms} for Gaussian, Covariance, and Ising models, respectively. In contrast to Theorem~\ref{thm:glasso}, Theorems \ref{cor:covgraph} and \ref{thm:ising} necessitate the inclusion of an additional convex regularization term with parameters $\gamma_C$ and $\gamma_I$, respectively, to achieve Pareto optimality. 
\end{rem}
\begin{rem}[Computational Complexity of Algorithm~\ref{alg:fairgms}]
Given the iteration complexity to achieve $\epsilon$-accuracy is $O({\epsilon}^{-1})$, the overall time complexity of our optimization procedure becomes $O\left( {\epsilon}^{-1} \max(NP^2, P^3)\right)$. Assuming a small number of groups ($K << N, P, 1/\epsilon$), the complexity aligns with that of standard proximal gradient methods used for covariance and inverse covariance estimation, making it feasible for large $N$ and $P$.
To further support the theoretical analysis, sensitivity analysis experiments are conducted to investigate the impact of varying $P$, $N$, $K$, and group imbalance on the performance of the proposed methods. Note that the complexity of Algorithm~\ref{alg:fairgms} applied to \eqref{eqn:fairISING} depends on the choice of subproblem solver (e.g., first or second order) due to the nonlinearity of \eqref{eqn:ising}. Further experiments and discussions are detailed in Appendices~\ref{sec:sen:P}-\ref{sec:sen:K}.
\end{rem}

\section{Experiment}\label{sec:exp}

\subsection{Experimental Setup}
\paragraph{Baseline.}\label{sec:exp:baseline}
The Iterative Shrinkage-Thresholding Algorithm (ISTA) is widely used for sparse inverse covariance estimation \cite{rolfs2012iterative} due to its simplicity and efficiency. We adapt ISTA for the Covariance and Ising models and use them as a baseline to compare with our proposed Fair GMs. Note that our Fair GMs reduce to ISTA for Gaussian, Covariance, and Ising models if \(M=1\) in \eqref{eqn:multi_obj}. The detailed ISTA algorithm used in this study is provided in Appendix~\ref{sec:app:exp} for reference.

\paragraph{Parameters and Architecture.}
The initial iterate \(\g{\Theta}^{(0)}\) is chosen based on the highest graph disparity error among local graphs. This initialization can improve fairness by minimizing larger disparity errors. The \(\ell_1\)-norm coefficient \(\lambda\) is fixed for each dataset, searched over a grid in $\{1e-5, \ldots, 0.01, \ldots, 0.1, 1\}$. Tolerance \(\epsilon\) is set to $1e-5$, with a maximum of $1e+7$ iterations. The initial value of \(\ell\) is $1e-2$, undergoing a line search at each iteration \(t\) with a decay rate of $0.1$.

\paragraph{Evaluation Criteria.}
In our experiments, we introduce three metrics to evaluate the performance of our methods and the baseline methods:
\begin{enumerate}[noitemsep,leftmargin=*,align=left]
    \item Value of the objective function of GM:
    \(F_1:=\mc{L}(\g{\Theta};\m{X}) +\lambda \|\g{\Theta}\|_1.\)
    \item Summation of pairwise graph disparity error for fairness:
    \(\Delta:=\sideset{}{_{k=1}^K}\sum\Delta_k.\)
    \item Proportion of correctly estimated edges:
    \vspace{-1mm}
    {\small\[\text{PCEE}:=\big(\sum_{j,j'\in[P]}\g{1}\{\hat{\boldsymbol{\Theta}}_{jj'}\geq\lambda~\text{and}~|\boldsymbol{\Theta}_{jj'}|\geq\lambda\}\big)/\big(\sum_{j,j'\in[P]}\g{1}\{|\boldsymbol{\Theta}_{jj'}|\geq\lambda\}\big),\]}
    
    where $\g{1}\{\cdot\}$ is the indicator function, $\g{\Theta}$ and $\hat{\g{\Theta}}$ are the groundtruth and estimated graph.
\end{enumerate}

\subsection{Simulation Study of Fair GLasso and CovGraph}\label{sec:exp:sim:glasso}
\begin{table}[t]
\centering
\small
\caption{Numerical outcomes in terms of PCEE. The last row calculates the difference in PCEE between the two groups: the smaller, the better, and the best value is in bold.}
\label{tab:similarity}
\resizebox{\textwidth}{!}{
\begin{tabular}{c|cc|cc|cc}
\toprule
\textbf{Group} & \textbf{Std. GLasso} & \textbf{Fair GLasso} & \textbf{Std. CovGraph} & \textbf{Fair CovGraph} & \textbf{Std. BinNet} & \textbf{Fair BinNet} \\
\midrule
1 &0.7491 & 0.7538 &0.8537 & 0.8750&0.4138 & 0.9540 \\
2 &0.8479 & 0.8108 &0.9502 & 0.9357&0.8974 & 0.8974 \\
\midrule
\textbf{Difference} &0.0987 & \textbf{0.0569} &0.0965 & \textbf{0.0607}&0.4836 & \textbf{0.0566}\\
\bottomrule
\end{tabular}
}
\end{table}
%
\begin{figure}[t]
    \centering
    \begin{minipage}{0.16\textwidth}
        \centering
        \includegraphics[width=\textwidth]{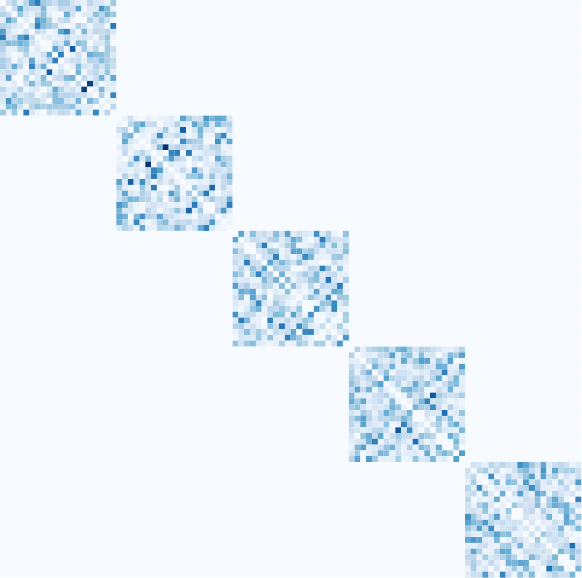}
        \subcaption{GLasso~\(\g{\Theta}_1\)}
        \label{fig:sim:pre1}
    \end{minipage}
    \hspace{-1.3mm}
    \begin{minipage}{0.16\textwidth}
        \centering
        \includegraphics[width=\textwidth]{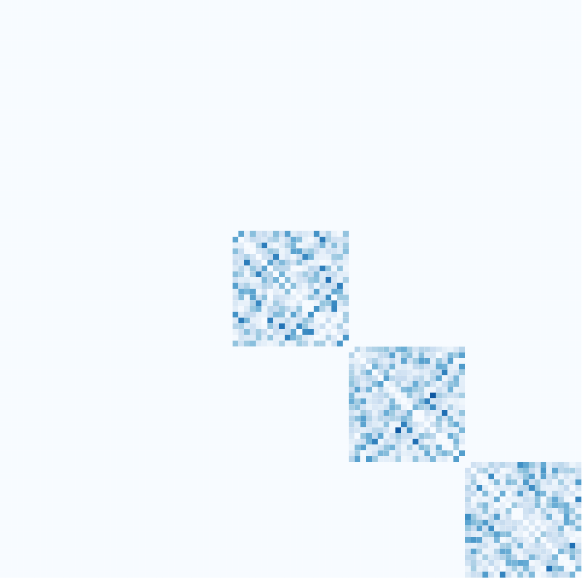}
        \subcaption{GLasso~\(\g{\Theta}_2\)}
        \label{fig:sim:pre2}
    \end{minipage}
    \hspace{0.7mm}
    \begin{minipage}{0.16\textwidth}
        \centering
        \includegraphics[width=\textwidth]{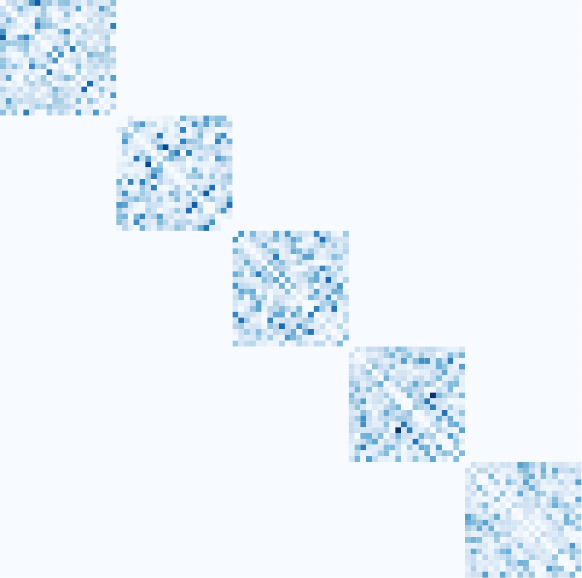}
        \subcaption{CovGraph~\(\g{\Sigma}_1\)}
        \label{fig:sim:cov1}
    \end{minipage}
    \hspace{-1.3mm}
    \begin{minipage}{0.16\textwidth}
        \centering
        \includegraphics[width=\textwidth]{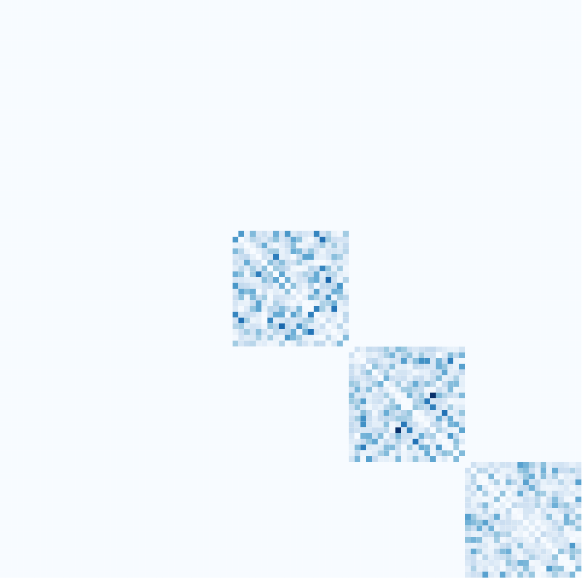}
        \subcaption{CovGraph~\(\g{\Sigma}_2\)}
        \label{fig:sim:cov2}
    \end{minipage}
    \hspace{0.7mm}
    \begin{minipage}{0.16\textwidth}
        \centering
        \includegraphics[width=\textwidth]{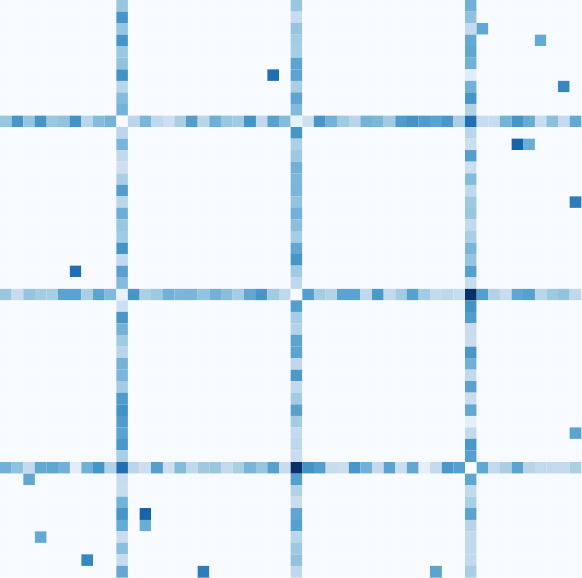}
        \subcaption{BinNet~\(\g{\Theta}_1\)}
        \label{fig:sim:ising1}
    \end{minipage}
    \hspace{-1.3mm}
    \begin{minipage}{0.16\textwidth}
        \centering
        \includegraphics[width=\textwidth]{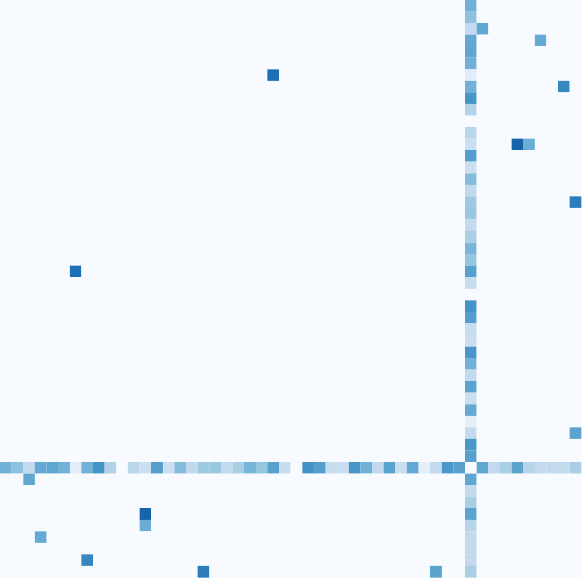}
        \subcaption{BinNet~\(\g{\Theta}_2\)}
        \label{fig:sim:ising2}
    \end{minipage}

    \begin{minipage}{0.16\textwidth}
        \centering
        \includegraphics[width=\textwidth]{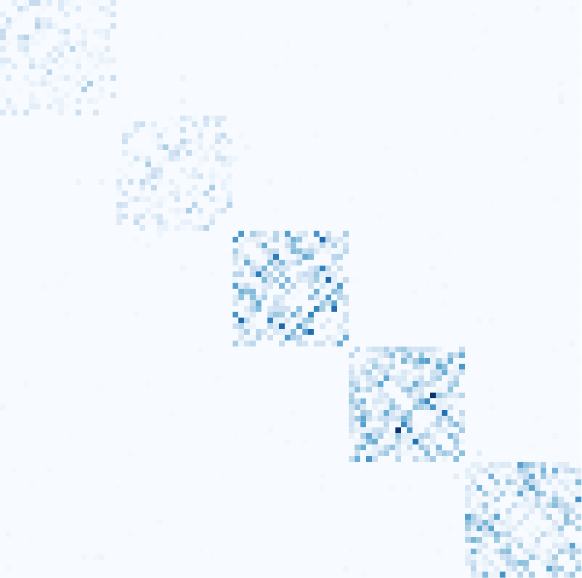}
        \subcaption{GLasso}
        \label{fig:sim:pre3}
    \end{minipage}
    \hspace{-1.3mm}
    \begin{minipage}{0.16\textwidth}
        \centering
        \includegraphics[width=\textwidth]{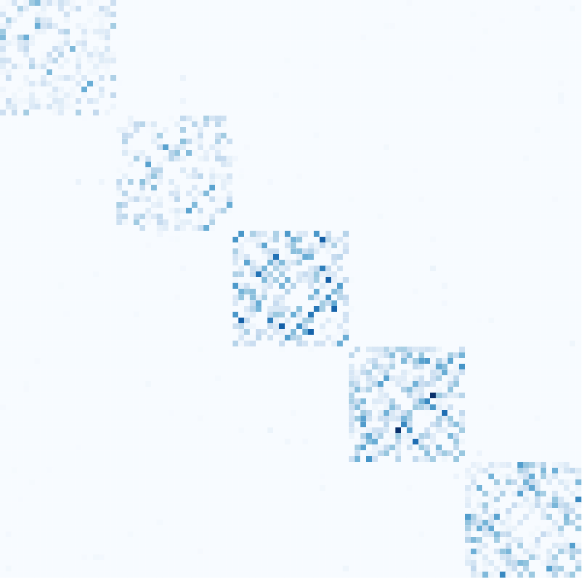}
        \subcaption{Fair GLasso}
        \label{fig:sim:pre4}
    \end{minipage}
    \hspace{0.7mm}
    \begin{minipage}{0.16\textwidth}
        \centering
        \includegraphics[width=\textwidth]{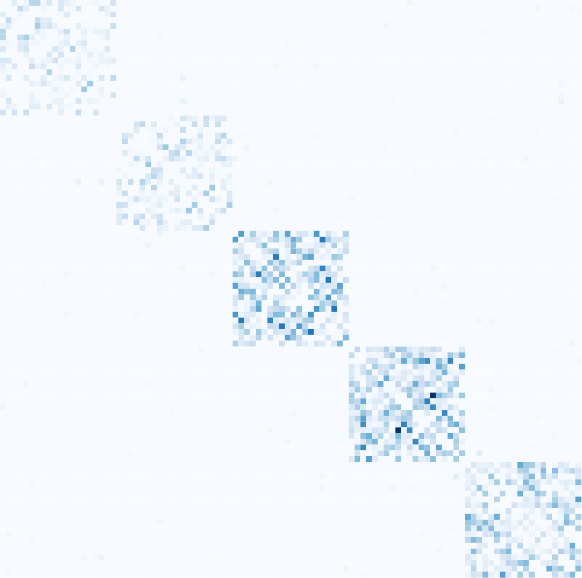}
        \subcaption{CovGraph}
        \label{fig:sim:cov3}
    \end{minipage}
    \hspace{-1.3mm}
    \begin{minipage}{0.16\textwidth}
        \centering
        \includegraphics[width=\textwidth]{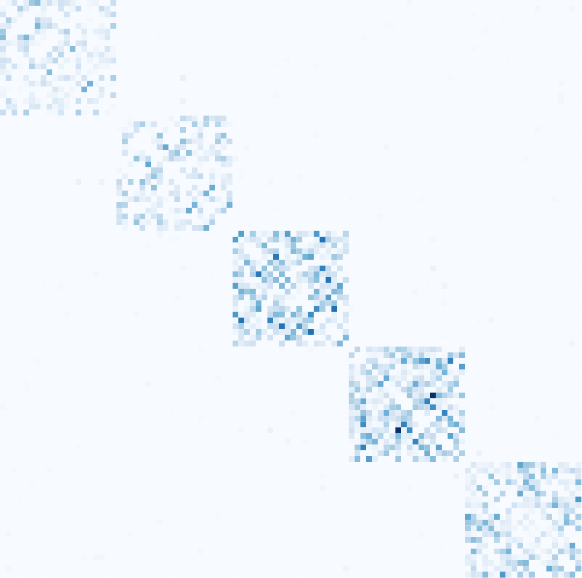}
        \subcaption{Fair CovGraph}
        \label{fig:sim:cov4}
    \end{minipage}
    \hspace{0.7mm}
    \begin{minipage}{0.16\textwidth}
        \centering
        \includegraphics[width=\textwidth]{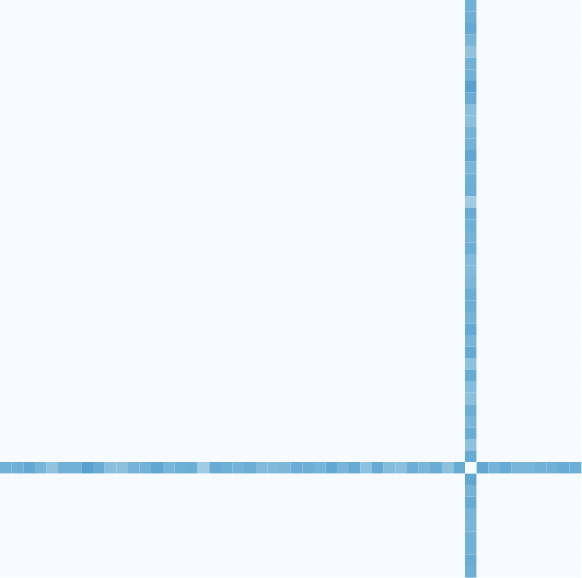}
        \subcaption{BinNet}
        \label{fig:sim:ising3}
    \end{minipage}
    \hspace{-1.3mm}
    \begin{minipage}{0.16\textwidth}
        \centering
        \includegraphics[width=\textwidth]{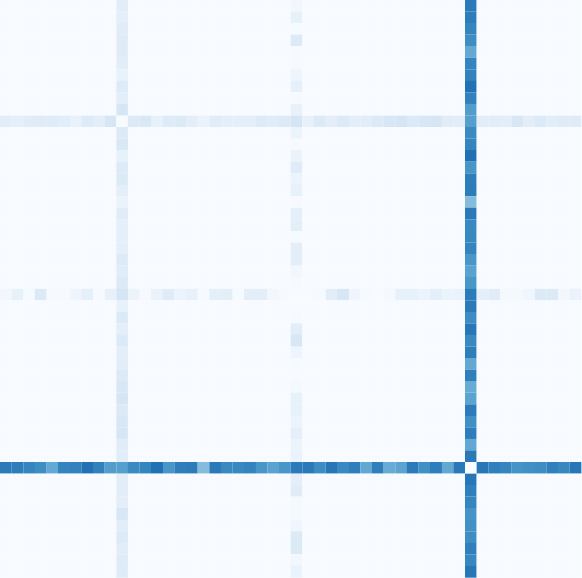}
        \subcaption{Fair BinNet}
        \label{fig:sim:ising4}
    \end{minipage}
\caption{Comparison of original graphs utilized in synthetic data creation for two groups, graph reconstruction using standard GMs, and fair graph reconstruction via Fair GMs. The diagonal elements are set to zero to enhance the visibility of the off-diagonal pattern.}
\label{fig:sim:visual}
\vspace{-0.5cm}
\end{figure}

In the simulation, we construct two \(100 \times 100\) block diagonal covariance matrices, \(\g{\Sigma}_1\) and \(\g{\Sigma}_2\) (Figures~\ref{fig:sim:cov1} and \ref{fig:sim:cov2}). These matrices correspond to two sensitive groups and are created following the rigorous process in Appendix~\ref{app:sim:glasso}. Each graph has three consistent diagonal blocks, with Group 1 also featuring two distinct blocks indicating bias. For each group, we derive the ground truth graphs by \(\g{\Theta}_1 = \g{\Sigma}_1^{-1}\) and \(\g{\Theta}_2 = \g{\Sigma}_2^{-1}\) (Figures~\ref{fig:sim:pre1} and \ref{fig:sim:pre2}). Datasets are generated from normal distributions: 1000 samples from \(\mc{N}(\m{0},\m{\Sigma}_1)\) for the first group, and 1000 samples from \(\mc{N}(\m{0},\m{\Sigma}_2)\) for the second.

\textbf{Results.} Figure~\ref{fig:sim:pre3} shows the global graph derived using Standard GLasso on the entire dataset, where the two top-left blocks are not distinctly marked, suggesting bias towards \(\g{\Theta}_2\). In contrast, Figure~\ref{fig:sim:pre4} shows a graph from our method that enhances block visibility, reducing bias. This improvement is supported by the results in Table~\ref{tab:similarity}, where the PCEE difference of Fair GLasso is smaller than that of Standard GLasso. Comparable efficacy in bias reduction for CovGraph is shown in Figures~\ref{fig:sim:cov1}, \ref{fig:sim:cov2}, \ref{fig:sim:cov3}, \ref{fig:sim:cov4}, and Table~\ref{tab:similarity}, demonstrating our methods' effectiveness in achieving fairness.

\subsection{Simulation Study of Fair BinNet}\label{sec:exp:sim:binnet}
We provided two simulation networks: \(\g{\Theta}_1\) for Group 1 with \(P=50\) nodes and three hub nodes, and \(\g{\Theta}_2\) for Group 2 with one hub node (see Appendix~\ref{app:sim:binnet} for details). Adjacency matrices are shown in Figures~\ref{fig:sim:ising1} and \ref{fig:sim:ising2}. We generate \(N_1=500\) and \(N_2=1000\) observations via Gibbs sampling, updating each variable \(x_j^{(t+1)}\) at iteration \(t+1\) using the Bernoulli distribution: \(x_j^{(t+1)} \sim \text{Bernoulli}(z_{\theta}/(1+z_{\theta}))\), where \(z_{\theta}=\exp(\theta_{jj}+\sum_{j'\neq j}\theta_{jj'}x_{j'}^{(t)})\). The first 10,000 iterations are designated as the burn-in period to ensure statistical independence among observations. Finally, observations are collected at every 100th iteration.

\textbf{Results.} Figure~\ref{fig:sim:ising3} illustrates the global graph from Standard BinNet, which is predominantly biased towards \(\g{\Theta}_2\) by identifying only one hub node. In contrast, Figure~\ref{fig:sim:ising4}, derived from Fair BinNet, presents a more balanced structure. While this improvement might not be visually evident, the quantitative results in Table~\ref{tab:similarity} and Table~\ref{tab:outcome} confirm it. Table~\ref{tab:similarity} reveals that PCEE for Group 1 improved significantly, increasing from 0.4444 to 0.7485. Conversely, PCEE for Group 2 exhibited a decrease from 0.9481 to 0.7662. This convergence in performance metrics between the two groups indicates a more balanced distribution of predictive errors, thus enhancing the overall fairness of the model.

\subsection{Application of Fair GLasso to Gene Regulatory Network}

We apply GLasso to analyze RNA-Seq data from TCGA, focusing on lung adenocarcinoma. The data includes expression levels of 60,660 genes from 539 patients. From these, 147 KEGG pathway genes~\cite{kanehisa2000kegg} are selected to construct a gene regulatory network. GLasso reveals conditional dependencies, aiding in understanding cancer genetics and identifying therapeutic targets. However, initially, this method, without accounting for sex-based differences, risks overlooking critical biological disparities, potentially skewing drug discovery and health outcomes across genders. Therefore, we divide the patient cohort into two groups based on sex: 248 males and 291 females. This stratification enables the use of Fair GLasso, which creates a more equitable gene regulatory network by accounting for these differences. The parameter \(\lambda\) is set to 0.03 for this experiment. Additionally, each variable is normalized to achieve a zero mean and a unit variance.




\textbf{Results.} The gene networks identified by GLasso and Fair GLasso are presented in Figures~\ref{fig:glasso_app}-\ref{fig:fglasso_app}. GLasso identified several hub nodes, including NCOA1, BRCA1, FGF8, AKT1, NOTCH4, and CSNK1A1L. In contrast, Fair GLasso uniquely detected PIK3CD, suggesting its potential relevance in capturing sex-specific differences in lung adenocarcinoma. Although direct evidence linking PIK3CD exclusively to sex-specific traits in cancer is limited, this finding aligns with recent insights into sex-specific regulatory mechanisms in cancer~\cite{rubin2020sex,lopes2020genome}. PIK3CD is a key component of the PI3K/Akt signaling pathway, which is involved in cell regulation and frequently implicated in various malignancies. The identification of PIK3CD by Fair GLasso demonstrates its potential to uncover biologically relevant genes that may be overlooked in conventional analyses, enhancing our understanding of lung adenocarcinoma and facilitating the development of personalized therapies.

\subsection{Application of Fair GLasso to Amyloid / Tau Accumulation Network}\label{sec:glasso adni}

The performance of GLasso and Fair GLasso is evaluated using AV45 and AV1451 PET imaging data from the Alzheimer's Disease Neuroimaging Initiative (ADNI)~\cite{weiner2013alzheimer,weiner2017recent}, focusing on amyloid-$\beta$ and tau deposition in the brain. The dataset includes standardized uptake value ratios of AV45 and AV1451 tracers in 68 brain regions, as defined by the Desikan-Killiany atlas~\cite{desikan2006automated}, collected from 1,143 participants. An amyloid (or tau) accumulation network~\cite{sepulcre2013vivo} is constructed to investigate the pattern of amyloid (or tau) accumulation. GLasso and Fair GLasso are used to uncover conditional dependencies between brain regions, providing insights into Alzheimer's disease progression and identifying potential biomarkers for early diagnosis and treatment response monitoring. To examine the influence of sensitive attributes on the network structure, marital status, and race are incorporated as exemplary sensitive attributes due to their reported association with dementia risk~\cite{sommerlad2018marriage,mehta2017systematic}. Comprehensive details regarding the experiments, results, and analysis are provided in Appendix~\ref{sec:glasso adni}.

\subsection{Application of Fair CovGraph to Credit Datasets}\label{sec:covgra real}

The performance of Fair CovGraph is evaluated using the Credit Datasets~\cite{yeh2009comparisons} from the UCI Machine Learning Repository~\cite{asuncion2007uci}. These datasets have been previously used in research on Fair PCA~\cite{olfat2019convex,vu2022distributionally}, which shows potential for improvement through sparse covariance estimation. The dataset composition is detailed in Table~\ref{tab:credit} in Appendix~\ref{app:data:credit}, with categorizations based on gender, marital status, and education level. For this experiment, the parameters \(\tau\) and \(\lambda\) are set to 0.01 and 0.1, respectively. Each variable in the dataset is standardized to have a mean of zero and a variance of one. As shown in Table~\ref{tab:outcome}, our Fair CovGraph achieves a 53.75\% increase in fairness with only a 0.42\% decrease in the graph objective, demonstrating the strong ability of our method to attain fairness.

\begin{table*}[t]
\centering
\caption{Outcomes in terms of the value of the objective function \((F_1)\), the summation of the pairwise graph disparity error \((\Delta)\), and the average computation time in seconds ($\pm$ standard deviation) from 10 repeated experiments. ``$\downarrow$'' means the smaller, the better, and the best value is in bold. Note that both \(F_1\) and \(\Delta\) are deterministic.}
\label{tab:outcome}
\setlength{\tabcolsep}{5pt}
\resizebox{\textwidth}{!}{
\begin{tabular}{l|cc|c|cc|c|cc}
\toprule
\multirow{2}{*}{\textbf{Dataset}} & \multicolumn{2}{c|}{\(\boldsymbol{F_1}\downarrow\)} & \multirow{2}{*}{\(\boldsymbol{\%{F_1}}\uparrow\)} & \multicolumn{2}{c|}{\(\boldsymbol{\Delta}\downarrow\)} & \multirow{2}{*}{\(\boldsymbol{\%\Delta}\uparrow\)} & \multicolumn{2}{c}{\(\textbf{Runtime}\downarrow\)} \\
\cmidrule(l){2-3} \cmidrule(l){5-6} \cmidrule(l){8-9} 
& \textbf{GM} & \textbf{Fair GM} &  & \textbf{GM} & \textbf{Fair GM} &  & \textbf{GM} & \textbf{Fair GM} \\
\midrule
Simulation (GLasso) & \textbf{97.172} & 97.443 & -0.28\% & 7.8149 & \textbf{0.6237} & +92.02\% & \textbf{0.395 ($\pm$ 0.24)} & 32.32 ($\pm$ 1.5)\\
Simulation (CovGraph) & \textbf{14.319} & 14.484 & -1.15\% & 5.2627 & \textbf{0.3889} & +92.61\% & \textbf{0.254 ($\pm$ 0.05)} & 12.58 ($\pm$ 0.3)\\
Simulation (BinNet) & \textbf{34.363} & 34.362 & -0.00\% & 1$\times10^{-6}$ & \textbf{0.0000} & +100.0\% & \textbf{0.536 ($\pm$ 0.15)} & 3.29 ($\pm$ 0.48)\\
TCGA Dataset & \textbf{127.96} & 128.11 & -0.11\% & 2.5875 & \textbf{0.0742} & +97.13\% & \textbf{8.468 ($\pm$ 1.17)} & 63.72 ($\pm$ 5.9)\\
Credit Dataset & \textbf{9.2719} & 9.3110 & -0.42\% & 0.5436 & \textbf{0.2513} & +53.76\% & \textbf{0.256 ($\pm$ 0.08)} & 64.20 ($\pm$ 1.8)\\
LFM-1b Dataset & 87.531 & \textbf{87.138} & +0.45\% & 0.0040 & \textbf{0.0001} & +96.60\% & \textbf{0.669 ($\pm$ 0.19)} & 41.19 ($\pm$ 3.5)\\
\bottomrule
\end{tabular}}
\end{table*}

\subsection{Application of Fair BinNet to Music Recommendation Systems }

\begin{figure}[t]
    \centering
    \begin{subfigure}{0.24\textwidth}
        \centering
        \includegraphics[width=\textwidth]{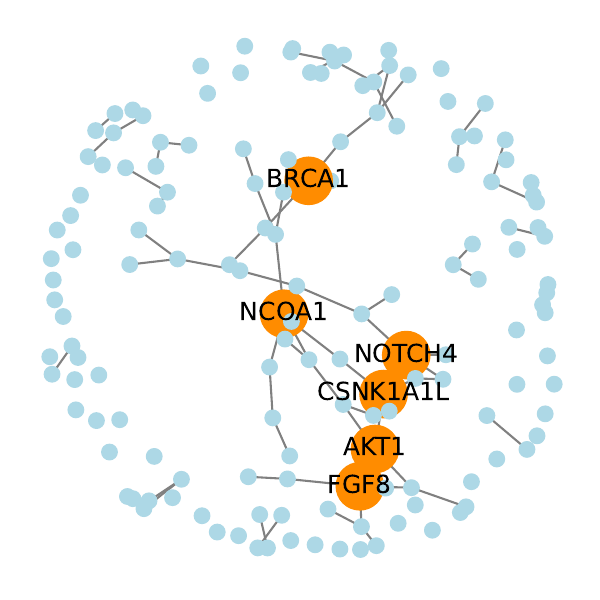}
        \caption{Standard GLasso}
        \label{fig:glasso_app}
    \end{subfigure}
    \begin{subfigure}{0.24\textwidth}
        \centering
        \includegraphics[width=\textwidth]{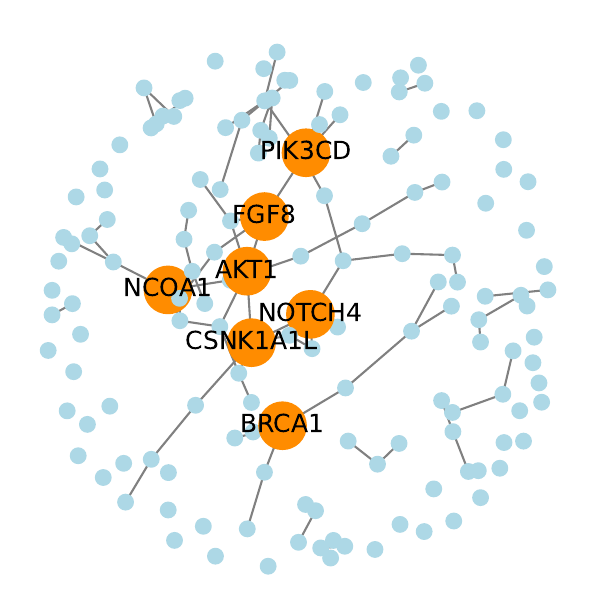}
        \caption{\textbf{Fair} GLasso}
        \label{fig:fglasso_app}
    \end{subfigure}
    \begin{subfigure}{0.25\textwidth}
        \centering
        \includegraphics[width=\textwidth]{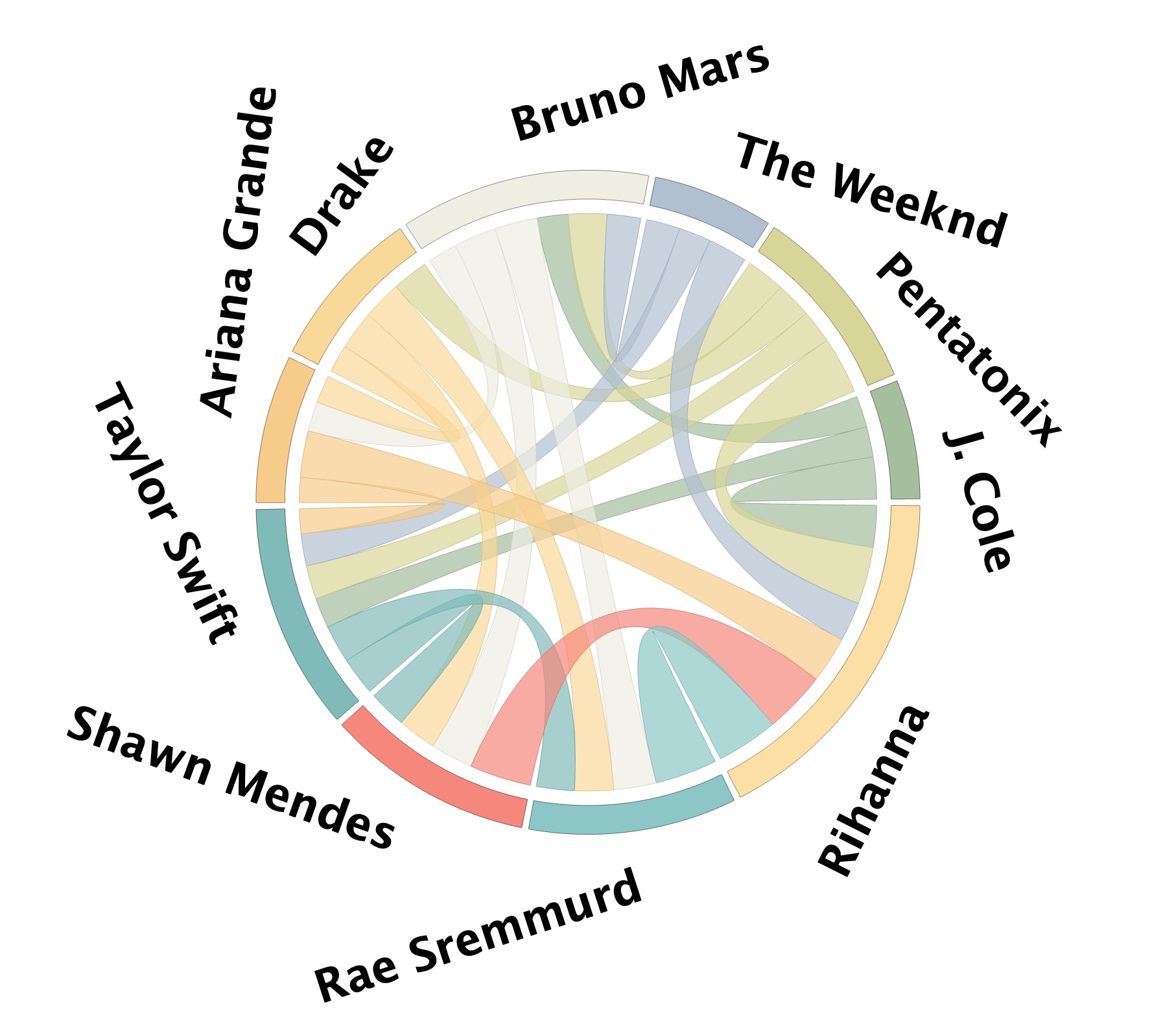}
        \caption{Standard BinNet}
        \label{fig:binnet_app}
    \end{subfigure}
    \begin{subfigure}{0.25\textwidth}
        \centering
        \includegraphics[width=\textwidth]{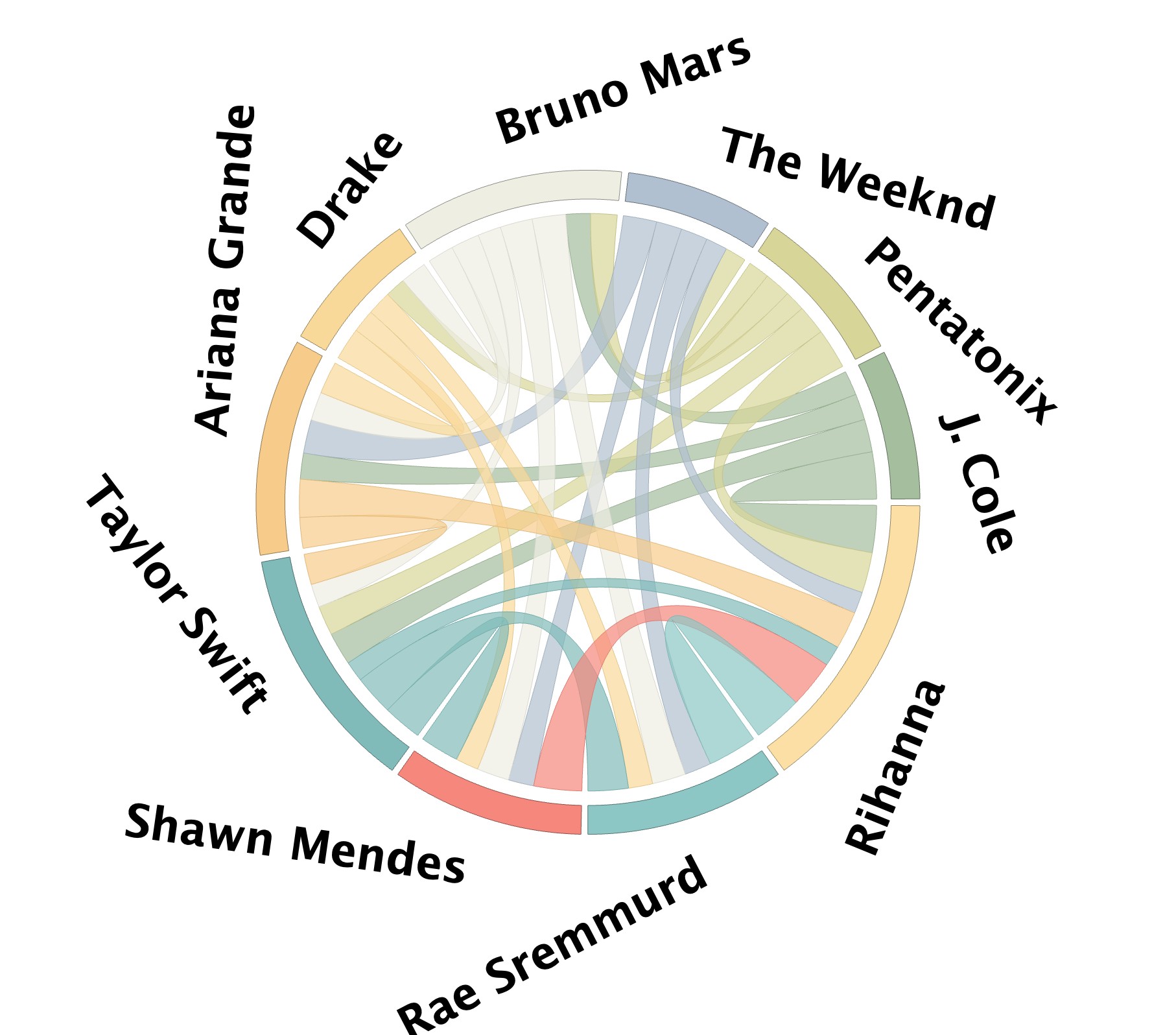}
        \caption{\textbf{Fair} BinNet}
        \label{fig:fbinnet_app}
    \end{subfigure}
\caption{(a)-(b) Comparison of graphs generated by standard GLasso and Fair GLasso on TCGA Dataset. Week edges are removed for visibility, and hub nodes that own at least 4 edges are highlighted. (c)-(d) Comparison of sub-graphs generated by standard BinNet and Fair BinNet on LFM-1b Dataset. Fair BinNet provides a more diversified recommendation network.}
\end{figure}

LFM-1b Dataset\footnote{Available at \url{http://www.cp.jku.at/datasets/LFM-1b/}.} contains over one billion listening events intended for use in recommendation systems~\cite {schedl2016lfm}. In this experiment, we use the user-artist play counts dataset to construct a recommendation network of artists. Our analysis focuses on 80 artists intersecting the 2016 Billboard Artist 100 and 1,807 randomly selected users who listened to at least 400 songs. We transform the play counts into binary datasets for BinNet models, setting play counts above 0 to 1 and all others to 0.

This experiment examines male and female categories, stratifying the dataset into two groups with 1,341 and 466 samples, respectively. We set the BinNet models' parameter, $\lambda$, to $1e-5$.

\textbf{Results.}
Figures~\ref{fig:binnet_app}-\ref{fig:fbinnet_app} show the recommendation networks of the 2016 Billboard Top 10 popular music artists based on BinNet's and Fair BinNet's outputs. The comparative analysis reveals that Fair BinNet provides a more diversified recommendation network, particularly for the artist The Weeknd. Enhancing fairness fosters cross-group musical preference exchange, breaks the echo chamber effect, and broadens users' exposure to potentially intriguing music, enhancing the user-friendliness of the music recommendation system.

\subsection{Trade-off Analysis} \label{sec:tradeoff}
\begin{figure}[t]
\centering
\includegraphics[width=0.85\textwidth]{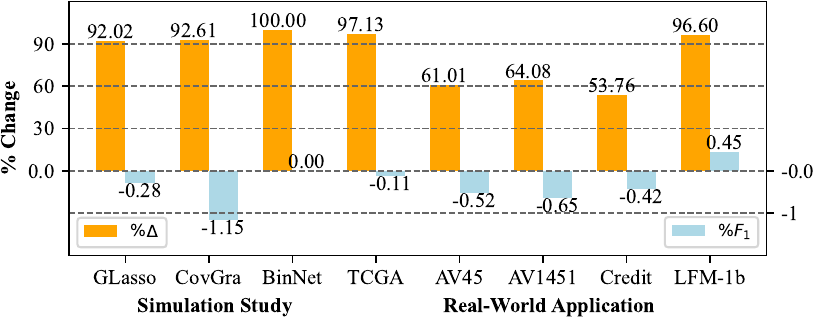} 
\caption{Percentage change from GMs to Fair GMs (results from Tables~\ref{tab:outcome} and \ref{tab:outcome_adni}). \(\%F_1\) is slight, while \(\%\Delta\) changes are
substantial, signifying fairness improvement without serious accuracy sacrifice.}
\label{fig:outcome:visual}
\end{figure}

In fairness studies, the trade-off between fairness and model accuracy presents a fundamental challenge. An effective fair method should achieve equitable outcomes while maintaining strong accuracy performance. We evaluate this balance by analyzing the percentage changes in both accuracy and fairness metrics. Specifically, we define these changes as: $\%{F_1} = -\frac{F_1~\text{of Fair GM}~-F_1~\text{of GM}}{F_1~\text{of GM}}\times100\%$, and  $\%{\Delta} = -\frac{\Delta~\text{of Fair GM}~-\Delta~\text{of GM}}{\Delta~\text{of GM}}\times100\%.$

Our empirical results (Tables~\ref{tab:outcome}, \ref{tab:outcome_adni}, and Figure~\ref{fig:outcome:visual}) demonstrate that Fair GMs substantially reduce disparity error, thereby improving fairness, while incurring only minimal degradation in the objective function's value. This favorable trade-off validates the effectiveness of our approach. However, Fair GMs do face computational challenges, primarily stemming from two sources: local graph computation and multi-objective optimization.

To address these limitations, we propose several solutions. The local graph learning phase can be accelerated using advanced graphical model algorithms such as QUIC~\cite{hsieh2014quic}, SQUIC~\cite{bollhofer2019large}, PISTA~\cite{shalom2024pista}, GISTA~\cite{rolfs2012iterative}, OBN~\cite{oztoprak2012newton}, or ALM~\cite{scheinberg2010sparse}. Moreover, to mitigate the increased complexity from multiple objectives, we introduce a stochastic objective selection strategy, randomly sampling a subset of objectives in each iteration. This approach effectively reduces computational overhead while maintaining model fairness and performance.
To validate these computational considerations, we conducted additional experiments using GLasso, with detailed results presented in the Appendix~\ref{sec:faster_algs}.

\section{Conclusion}\label{sec:con}
In this paper, we tackle fairness in graphical models (GMs) such as Gaussian, Covariance, and Ising models, which interpret complex relationships in high-dimensional data. Standard GMs exhibit bias with sensitive group data. We propose a framework incorporating a pairwise graph disparity error term and a custom loss function into a nonsmooth multi-objective optimization. This approach enhances fairness without compromising performance, validated by experiments on synthetic and real-world datasets. However, it increases computational complexity and may be sensitive to the choice of loss function and balancing multiple objectives.
Future research can include:
\begin{enumerate}[label={\textnormal{{\textbf{F\arabic*.}}}}, wide, labelindent=10pt, itemsep=0pt]
    \item Integrating our Fair GMs approach with supervised methods for downstream tasks, including spectral clustering~\cite{von2007tutorial}, graph regularized dimension reduction~\cite{tang2017graph}.
    \item Developing novel group fairness notions based on sensitive attributes within our nonsmooth multi-objective optimization framework.
    \item Extending fairness to ordinal data models, which are crucial for socioeconomic and health-related applications~\cite{guo2015graphical}, neighborhood selection~\cite{meinshausen2006high}, and partial correlation estimation~\cite{khare2015convex}.
\end{enumerate}    
Despite some limitations of Fair GMs for larger group sizes, this work demonstrates the potential of \textit{nonsmooth multi-objective optimization} as a powerful tool for mitigating biases and promoting fairness in \textit{high-dimensional} graph-based machine learning, contributing to the development of more equitable and responsible AI systems across a wide range of domains.

\section{Acknowledgements}
This work was supported in part by the NIH grants U01 AG066833, U01 AG068057, U19 AG074879, RF1 AG068191, RF1 AG063481, R01 LM013463, P30 AG073105,  U01 CA274576, RF1-AG063481, R01-AG071174, and U01-CA274576. The ADNI data were obtained from the Alzheimer's Disease Neuroimaging Initiative database (\url{https://adni.loni.usc.edu}), funded by NIH U01 AG024904. The authors thank Laura Balzano and Alfred O. Hero for their helpful suggestions and discussions.    
%

\medskip
{
\small
\bibliographystyle{plain}
\bibliography{reference}

\begin{thebibliography}{10}

\bibitem{andrew2022antglasso}
Bailey Andrew, David Westhead, and Luisa Cutillo.
\newblock antglasso: An efficient tensor graphical lasso algorithm.
\newblock In {\em NeurIPS 2022 Workshop: New Frontiers in Graph Learning}, 2022.

\bibitem{asuncion2007uci}
Arthur Asuncion and David Newman.
\newblock Uci machine learning repository, 2007.

\bibitem{banerjee2006convex}
Onureena Banerjee, Laurent~El Ghaoui, Alexandre d'Aspremont, and Georges Natsoulis.
\newblock Convex optimization techniques for fitting sparse gaussian graphical models.
\newblock In {\em Proceedings of the 23rd international conference on Machine learning}, pages 89--96, 2006.

\bibitem{barocas-hardt-narayanan}
Solon Barocas, Moritz Hardt, and Arvind Narayanan.
\newblock {\em Fairness and Machine Learning}.
\newblock fairmlbook.org, 2019.
\newblock \url{http://www.fairmlbook.org}.

\bibitem{bertsekas1999nonlinear}
Dimitri~P Bertsekas, W~Hager, and O~Mangasarian.
\newblock Nonlinear programming. athena scientific belmont.
\newblock {\em Massachusets, USA}, 1999.

\bibitem{boisgueheneuc2006functions}
Foucaud~du Boisgueheneuc, Richard Levy, Emmanuelle Volle, Magali Seassau, Hughes Duffau, Serge Kinkingnehun, Yves Samson, Sandy Zhang, and Bruno Dubois.
\newblock Functions of the left superior frontal gyrus in humans: a lesion study.
\newblock {\em Brain}, 129(12):3315--3328, 2006.

\bibitem{bollhofer2019large}
Matthias Bollhofer, Aryan Eftekhari, Simon Scheidegger, and Olaf Schenk.
\newblock Large-scale sparse inverse covariance matrix estimation.
\newblock {\em SIAM Journal on Scientific Computing}, 41(1):A380--A401, 2019.

\bibitem{boutemedjet2007graphical}
Sabri Boutemedjet and Djemel Ziou.
\newblock A graphical model for context-aware visual content recommendation.
\newblock {\em IEEE Transactions on Multimedia}, 10(1):52--62, 2007.

\bibitem{caton2020fairness}
Simon Caton and Christian Haas.
\newblock Fairness in machine learning: A survey.
\newblock {\em arXiv preprint arXiv:2010.04053}, 2020.

\bibitem{chierichetti2017fair}
Flavio Chierichetti, Ravi Kumar, Silvio Lattanzi, and Sergei Vassilvitskii.
\newblock Fair clustering through fairlets.
\newblock In {\em Advances in Neural Information Processing Systems}, pages 5029--5037, 2017.

\bibitem{chouldechova2018frontiers}
Alexandra Chouldechova and Aaron Roth.
\newblock The frontiers of fairness in machine learning.
\newblock {\em arXiv preprint arXiv:1810.08810}, 2018.

\bibitem{cummings2019compatibility}
Rachel Cummings, Varun Gupta, Dhamma Kimpara, and Jamie Morgenstern.
\newblock On the compatibility of privacy and fairness.
\newblock In {\em Adjunct publication of the 27th conference on user modeling, adaptation and personalization}, pages 309--315, 2019.

\bibitem{danaher2014joint}
Patrick Danaher, Pei Wang, and Daniela~M Witten.
\newblock The joint graphical lasso for inverse covariance estimation across multiple classes.
\newblock {\em Journal of the Royal Statistical Society Series B: Statistical Methodology}, 76(2):373--397, 2014.

\bibitem{desikan2006automated}
Rahul~S Desikan, Florent S{\'e}gonne, Bruce Fischl, Brian~T Quinn, Bradford~C Dickerson, Deborah Blacker, Randy~L Buckner, Anders~M Dale, R~Paul Maguire, Bradley~T Hyman, et~al.
\newblock An automated labeling system for subdividing the human cerebral cortex on mri scans into gyral based regions of interest.
\newblock {\em Neuroimage}, 31(3):968--980, 2006.

\bibitem{donini2018empirical}
Michele Donini, Luca Oneto, Shai Ben-David, John Shawe-Taylor, and Massimiliano Pontil.
\newblock Empirical risk minimization under fairness constraints.
\newblock {\em arXiv preprint arXiv:1802.08626}, 2018.

\bibitem{el2008operator}
Noureddine EL~KAROUI.
\newblock Operator norm consistent estimation of large-dimensional sparse covariance matrices.
\newblock {\em Annals of statistics}, 36(6):2717--2756, 2008.

\bibitem{farasat2015probabilistic}
Alireza Farasat, Alexander Nikolaev, Sargur~N Srihari, and Rachael~Hageman Blair.
\newblock Probabilistic graphical models in modern social network analysis.
\newblock {\em Social Network Analysis and Mining}, 5:1--18, 2015.

\bibitem{faroqi2014lesion}
Yasmeen Faroqi-Shah, Therese Kling, Jeffrey Solomon, Siyuan Liu, Grace Park, and Allen Braun.
\newblock Lesion analysis of language production deficits in aphasia.
\newblock {\em Aphasiology}, 28(3):258--277, 2014.

\bibitem{fattahi2019graphical}
Salar Fattahi and Somayeh Sojoudi.
\newblock Graphical lasso and thresholding: Equivalence and closed-form solutions.
\newblock {\em Journal of machine learning research}, 20(10):1--44, 2019.

\bibitem{fliege2000steepest}
J{\"o}rg Fliege and Benar~Fux Svaiter.
\newblock Steepest descent methods for multicriteria optimization.
\newblock {\em Mathematical methods of operations research}, 51:479--494, 2000.

\bibitem{fratiglioni2004active}
Laura Fratiglioni, Stephanie Paillard-Borg, and Bengt Winblad.
\newblock An active and socially integrated lifestyle in late life might protect against dementia.
\newblock {\em The Lancet Neurology}, 3(6):343--353, 2004.

\bibitem{friedman2008sparse}
Jerome Friedman, Trevor Hastie, and Robert Tibshirani.
\newblock Sparse inverse covariance estimation with the graphical lasso.
\newblock {\em Biostatistics}, 9(3):432--441, 2008.

\bibitem{gan2019bayesian}
Lingrui Gan, Xinming Yang, Naveen~N Nariestty, and Feng Liang.
\newblock Bayesian joint estimation of multiple graphical models.
\newblock {\em Advances in Neural Information Processing Systems}, 32, 2019.

\bibitem{gheche2020multilayer}
Mireille~El Gheche and Pascal Frossard.
\newblock Multilayer clustered graph learning.
\newblock {\em arXiv preprint arXiv:2010.15456}, 2020.

\bibitem{guo2010joint}
Jian Guo, Elizaveta Levina, George Michailidis, and Ji~Zhu.
\newblock Joint structure estimation for categorical markov networks.
\newblock {\em Unpublished manuscript}, 3(5.2):6, 2010.

\bibitem{guo2011joint}
Jian Guo, Elizaveta Levina, George Michailidis, and Ji~Zhu.
\newblock Joint estimation of multiple graphical models.
\newblock {\em Biometrika}, 98(1):1--15, 2011.

\bibitem{guo2015graphical}
Jian Guo, Elizaveta Levina, George Michailidis, and Ji~Zhu.
\newblock Graphical models for ordinal data.
\newblock {\em Journal of Computational and Graphical Statistics}, 24(1):183--204, 2015.

\bibitem{haghtalab2024unifying}
Nika Haghtalab, Michael Jordan, and Eric Zhao.
\newblock A unifying perspective on multi-calibration: Game dynamics for multi-objective learning.
\newblock {\em Advances in Neural Information Processing Systems}, 36, 2024.

\bibitem{hardt2016equality}
Moritz Hardt, Eric Price, and Nathan Srebro.
\newblock Equality of opportunity in supervised learning.
\newblock {\em arXiv preprint arXiv:1610.02413}, 2016.

\bibitem{hawkins2004problem}
Douglas~M Hawkins.
\newblock The problem of overfitting.
\newblock {\em Journal of chemical information and computer sciences}, 44(1):1--12, 2004.

\bibitem{hofling2009estimation}
Holger H{\"o}fling and Robert Tibshirani.
\newblock Estimation of sparse binary pairwise markov networks using pseudo-likelihoods.
\newblock {\em Journal of Machine Learning Research}, 10(4), 2009.

\bibitem{hsieh2014quic}
Cho-Jui Hsieh, M{\'a}ty{\'a}s~A Sustik, Inderjit~S Dhillon, Pradeep Ravikumar, et~al.
\newblock Quic: quadratic approximation for sparse inverse covariance estimation.
\newblock {\em J. Mach. Learn. Res.}, 15(1):2911--2947, 2014.

\bibitem{isard2003pampas}
Michael Isard.
\newblock Pampas: Real-valued graphical models for computer vision.
\newblock In {\em 2003 IEEE Computer Society Conference on Computer Vision and Pattern Recognition, 2003. Proceedings.}, volume~1, pages I--I. IEEE, 2003.

\bibitem{jordan2004graphical}
Michael~I Jordan.
\newblock Graphical models.
\newblock {\em Statistical Science}, pages 140--155, 2004.

\bibitem{kamani2022efficient}
Mohammad~Mahdi Kamani, Farzin Haddadpour, Rana Forsati, and Mehrdad Mahdavi.
\newblock Efficient fair principal component analysis.
\newblock {\em Machine Learning}, pages 1--32, 2022.

\bibitem{kanehisa2000kegg}
Minoru Kanehisa and Susumu Goto.
\newblock Kegg: kyoto encyclopedia of genes and genomes.
\newblock {\em Nucleic acids research}, 28(1):27--30, 2000.

\bibitem{ElKaroui08}
Noureddine~El Karoui.
\newblock Operator norm consistent estimation of large-dimensional sparse covariance matrices.
\newblock {\em The Annals of Statistics}, pages 2717--2756, 2008.

\bibitem{kellmeyer2013fronto}
Philipp Kellmeyer, Wolfram Ziegler, Claudia Peschke, Eisenberger Juliane, Susanne Schnell, Annette Baumgaertner, Cornelius Weiller, and Dorothee Saur.
\newblock Fronto-parietal dorsal and ventral pathways in the context of different linguistic manipulations.
\newblock {\em Brain and language}, 127(2):241--250, 2013.

\bibitem{khademi2019fairness}
Aria Khademi, Sanghack Lee, David Foley, and Vasant Honavar.
\newblock Fairness in algorithmic decision making: An excursion through the lens of causality.
\newblock In {\em The World Wide Web Conference}, pages 2907--2914, 2019.

\bibitem{khare2015convex}
Kshitij Khare, Sang-Yun Oh, and Bala Rajaratnam.
\newblock A convex pseudolikelihood framework for high dimensional partial correlation estimation with convergence guarantees.
\newblock {\em Journal of the Royal Statistical Society: Series B: Statistical Methodology}, pages 803--825, 2015.

\bibitem{kleindessner2019guarantees}
Matth{\"a}us Kleindessner, Samira Samadi, Pranjal Awasthi, and Jamie Morgenstern.
\newblock Guarantees for spectral clustering with fairness constraints.
\newblock In {\em International Conference on Machine Learning}, pages 3458--3467. PMLR, 2019.

\bibitem{kumar2020unified}
Sandeep Kumar, Jiaxi Ying, Jos{\'e}~Vin{\'\i}cius de~Miranda~Cardoso, and Daniel~P Palomar.
\newblock A unified framework for structured graph learning via spectral constraints.
\newblock {\em Journal of Machine Learning Research}, 21(22):1--60, 2020.

\bibitem{lee2006efficient}
Su-In Lee, Varun Ganapathi, and Daphne Koller.
\newblock Efficient structure learning of markov networks using $ l\_1 $-regularization.
\newblock {\em Advances in neural Information processing systems}, 19, 2006.

\bibitem{Liljeros01}
Fredrik Liljeros, Christofer~R Edling, Luis A~Nunes Amaral, H~Eugene Stanley, and Yvonne {\AA}berg.
\newblock The web of human sexual contacts.
\newblock {\em Nature}, 411(6840):907--908, 2001.

\bibitem{lopes2020genome}
Camila~M Lopes-Ramos, John Quackenbush, and Dawn~L DeMeo.
\newblock Genome-wide sex and gender differences in cancer.
\newblock {\em Frontiers in oncology}, 10:597788, 2020.

\bibitem{lowe2016autoradiographic}
Val~J Lowe, Geoffry Curran, Ping Fang, Amanda~M Liesinger, Keith~A Josephs, Joseph~E Parisi, Kejal Kantarci, Bradley~F Boeve, Mukesh~K Pandey, Tyler Bruinsma, et~al.
\newblock An autoradiographic evaluation of av-1451 tau pet in dementia.
\newblock {\em Acta neuropathologica communications}, 4:1--19, 2016.

\bibitem{marlin2009sparse}
Benjamin~M Marlin and Kevin~P Murphy.
\newblock Sparse gaussian graphical models with unknown block structure.
\newblock In {\em Proceedings of the 26th Annual International Conference on Machine Learning}, pages 705--712, 2009.

\bibitem{mehrabi2021survey}
Ninareh Mehrabi, Fred Morstatter, Nripsuta Saxena, Kristina Lerman, and Aram Galstyan.
\newblock A survey on bias and fairness in machine learning.
\newblock {\em ACM computing surveys (CSUR)}, 54(6):1--35, 2021.

\bibitem{mehta2017systematic}
Kala~M Mehta and Gwen~W Yeo.
\newblock Systematic review of dementia prevalence and incidence in united states race/ethnic populations.
\newblock {\em Alzheimer's \& Dementia}, 13(1):72--83, 2017.

\bibitem{meinshausen2006high}
Nicolai Meinshausen, Peter B{\"u}hlmann, et~al.
\newblock High-dimensional graphs and variable selection with the lasso.
\newblock {\em Annals of statistics}, 34(3):1436--1462, 2006.

\bibitem{mohan2012structured}
Karthik Mohan, Mike Chung, Seungyeop Han, Daniela Witten, Su-In Lee, and Maryam Fazel.
\newblock Structured learning of gaussian graphical models.
\newblock {\em Advances in neural information processing systems}, 25, 2012.

\bibitem{navarro2024fair}
Madeline Navarro, Samuel Rey, Andrei Buciulea, Antonio~G Marques, and Santiago Segarra.
\newblock Fair glasso: Estimating fair graphical models with unbiased statistical behavior.
\newblock {\em arXiv preprint arXiv:2406.09513}, 2024.

\bibitem{navarro2024mitigating}
Madeline Navarro, Samuel Rey, Andrei Buciulea, Antonio~G Marques, and Santiago Segarra.
\newblock Mitigating subpopulation bias for fair network topology inference.
\newblock {\em arXiv preprint arXiv:2403.15591}, 2024.

\bibitem{oh2016beta}
Hwamee Oh, Jason Steffener, Qolamreza~R Razlighi, Christian Habeck, and Yaakov Stern.
\newblock $\beta$-amyloid deposition is associated with decreased right prefrontal activation during task switching among cognitively normal elderly.
\newblock {\em Journal of Neuroscience}, 36(6):1962--1970, 2016.

\bibitem{olfat2019convex}
Matt Olfat and Anil Aswani.
\newblock Convex formulations for fair principal component analysis.
\newblock In {\em Proceedings of the AAAI Conference on Artificial Intelligence}, volume~33, pages 663--670, 2019.

\bibitem{oneto2020fairness}
Luca Oneto and Silvia Chiappa.
\newblock Fairness in machine learning.
\newblock In {\em Recent Trends in Learning From Data}, pages 155--196. Springer, 2020.

\bibitem{oztoprak2012newton}
Figen Oztoprak, Jorge Nocedal, Steven Rennie, and Peder~A Olsen.
\newblock Newton-like methods for sparse inverse covariance estimation.
\newblock {\em Advances in neural information processing systems}, 25, 2012.

\bibitem{ravikumar2010high}
Pradeep Ravikumar, Martin~J Wainwright, John~D Lafferty, et~al.
\newblock High-dimensional ising model selection using $\ell_1$-regularized logistic regression.
\newblock {\em The Annals of Statistics}, 38(3):1287--1319, 2010.

\bibitem{Robins07}
Garry Robins, Pip Pattison, Yuval Kalish, and Dean Lusher.
\newblock An introduction to exponential random graph $p*$ models for social networks.
\newblock {\em Social networks}, 29(2):173--191, 2007.

\bibitem{rolfs2012iterative}
Benjamin Rolfs, Bala Rajaratnam, Dominique Guillot, Ian Wong, and Arian Maleki.
\newblock Iterative thresholding algorithm for sparse inverse covariance estimation.
\newblock {\em Advances in Neural Information Processing Systems}, 25, 2012.

\bibitem{roth2005changes}
David~L Roth, Mary~S Mittelman, Olivio~J Clay, Alok Madan, and William~E Haley.
\newblock Changes in social support as mediators of the impact of a psychosocial intervention for spouse caregivers of persons with alzheimer's disease.
\newblock {\em Psychology and aging}, 20(4):634, 2005.

\bibitem{rothman2012positive}
Adam~J Rothman.
\newblock Positive definite estimators of large covariance matrices.
\newblock {\em Biometrika}, 99(3):733--740, 2012.

\bibitem{rubin2020sex}
Joshua~B Rubin, Joseph~S Lagas, Lauren Broestl, Jasmin Sponagel, Nathan Rockwell, Gina Rhee, Sarah~F Rosen, Si~Chen, Robyn~S Klein, Princess Imoukhuede, et~al.
\newblock Sex differences in cancer mechanisms.
\newblock {\em Biology of sex Differences}, 11:1--29, 2020.

\bibitem{rudebeck2018orbitofrontal}
Peter~H Rudebeck and Erin~L Rich.
\newblock Orbitofrontal cortex.
\newblock {\em Current Biology}, 28(18):R1083--R1088, 2018.

\bibitem{samadi2018price}
Samira Samadi, Uthaipon Tantipongpipat, Jamie Morgenstern, Mohit Singh, and Santosh Vempala.
\newblock The price of fair pca: One extra dimension.
\newblock {\em arXiv preprint arXiv:1811.00103}, 2018.

\bibitem{schedl2016lfm}
Markus Schedl.
\newblock The lfm-1b dataset for music retrieval and recommendation.
\newblock In {\em Proceedings of the 2016 ACM on international conference on multimedia retrieval}, pages 103--110, 2016.

\bibitem{scheinberg2010sparse}
Katya Scheinberg, Shiqian Ma, and Donald Goldfarb.
\newblock Sparse inverse covariance selection via alternating linearization methods.
\newblock {\em Advances in neural information processing systems}, 23, 2010.

\bibitem{sepulcre2013vivo}
Jorge Sepulcre, Mert~R Sabuncu, Alex Becker, Reisa Sperling, and Keith~A Johnson.
\newblock In vivo characterization of the early states of the amyloid-beta network.
\newblock {\em Brain}, 136(7):2239--2252, 2013.

\bibitem{shalom2024pista}
Gal Shalom, Eran Treister, and Irad Yavneh.
\newblock pista: Preconditioned iterative soft thresholding algorithm for graphical lasso.
\newblock {\em SIAM Journal on Scientific Computing}, 46(2):S445--S466, 2024.

\bibitem{shui2022learning}
Changjian Shui, Gezheng Xu, Qi~Chen, Jiaqi Li, Charles~X Ling, Tal Arbel, Boyu Wang, and Christian Gagn{\'e}.
\newblock On learning fairness and accuracy on multiple subgroups.
\newblock {\em Advances in Neural Information Processing Systems}, 35:34121--34135, 2022.

\bibitem{sommerlad2018marriage}
Andrew Sommerlad, Joshua Ruegger, Archana Singh-Manoux, Glyn Lewis, and Gill Livingston.
\newblock Marriage and risk of dementia: systematic review and meta-analysis of observational studies.
\newblock {\em Journal of Neurology, Neurosurgery \& Psychiatry}, 89(3):231--238, 2018.

\bibitem{tan2014learning}
Kean~Ming Tan, Palma London, Karthik Mohan, Su-In Lee, Maryam Fazel, and Daniela Witten.
\newblock Learning graphical models with hubs.
\newblock {\em Journal of Machine Learning Research}, 15:3297--3331, 2014.

\bibitem{tanabe2019proximal}
Hiroki Tanabe, Ellen~H Fukuda, and Nobuo Yamashita.
\newblock Proximal gradient methods for multiobjective optimization and their applications.
\newblock {\em Computational Optimization and Applications}, 72:339--361, 2019.

\bibitem{tanabe2022globally}
Hiroki Tanabe, Ellen~H Fukuda, and Nobuo Yamashita.
\newblock A globally convergent fast iterative shrinkage-thresholding algorithm with a new momentum factor for single and multi-objective convex optimization.
\newblock {\em arXiv preprint arXiv:2205.05262}, 2022.

\bibitem{tanabe2023convergence}
Hiroki Tanabe, Ellen~H Fukuda, and Nobuo Yamashita.
\newblock Convergence rates analysis of a multiobjective proximal gradient method.
\newblock {\em Optimization Letters}, 17(2):333--350, 2023.

\bibitem{tang2017graph}
Mengfan Tang, Feiping Nie, and Ramesh Jain.
\newblock A graph regularized dimension reduction method for out-of-sample data.
\newblock {\em Neurocomputing}, 225:58--63, 2017.

\bibitem{tantipongpipat2019multi}
Uthaipon Tantipongpipat, Samira Samadi, Mohit Singh, Jamie~H Morgenstern, and Santosh Vempala.
\newblock Multi-criteria dimensionality reduction with applications to fairness.
\newblock {\em Advances in neural information processing systems}, 32, 2019.

\bibitem{tarzanagh2021fair}
Davoud~Ataee Tarzanagh, Laura Balzano, and Alfred~O Hero.
\newblock Fair community detection and structure learning in heterogeneous graphical models.
\newblock {\em arXiv preprint arXiv:2112.05128}, 2021.

\bibitem{tarzanagh2023fairness}
Davoud~Ataee Tarzanagh, Bojian Hou, Boning Tong, Qi~Long, and Li~Shen.
\newblock Fairness-aware class imbalanced learning on multiple subgroups.
\newblock In {\em Uncertainty in Artificial Intelligence}, pages 2123--2133. PMLR, 2023.

\bibitem{tarzanagh2018estimation}
Davoud~Ataee Tarzanagh and George Michailidis.
\newblock Estimation of graphical models through structured norm minimization.
\newblock {\em Journal of Machine Learning Research}, 18(209):1--48, 2018.

\bibitem{tomaiuolo1999morphology}
Francesco Tomaiuolo, JD~MacDonald, Zografos Caramanos, Glenn Posner, Mary Chiavaras, Alan~C Evans, and Michael Petrides.
\newblock Morphology, morphometry and probability mapping of the pars opercularis of the inferior frontal gyrus: an in vivo mri analysis.
\newblock {\em European Journal of Neuroscience}, 11(9):3033--3046, 1999.

\bibitem{von2007tutorial}
Ulrike Von~Luxburg.
\newblock A tutorial on spectral clustering.
\newblock {\em Statistics and computing}, 17:395--416, 2007.

\bibitem{vu2022distributionally}
Hieu Vu, Toan Tran, Man-Chung Yue, and Viet~Anh Nguyen.
\newblock Distributionally robust fair principal components via geodesic descents.
\newblock {\em arXiv preprint arXiv:2202.03071}, 2022.

\bibitem{wang2023learning}
Xiwen Wang, Jiaxi Ying, and Daniel Palomar.
\newblock Learning large-scale mtp $ \_2 $ gaussian graphical models via bridge-block decomposition.
\newblock {\em Advances in Neural Information Processing Systems}, 36:73211--73231, 2023.

\bibitem{weiner2013alzheimer}
Michael~W Weiner, Dallas~P Veitch, Paul~S Aisen, Laurel~A Beckett, Nigel~J Cairns, Robert~C Green, Danielle Harvey, Clifford~R Jack, William Jagust, Enchi Liu, et~al.
\newblock The alzheimer's disease neuroimaging initiative: a review of papers published since its inception.
\newblock {\em Alzheimer's \& Dementia}, 9(5):e111--e194, 2013.

\bibitem{weiner2017recent}
Michael~W Weiner, Dallas~P Veitch, Paul~S Aisen, Laurel~A Beckett, Nigel~J Cairns, Robert~C Green, Danielle Harvey, Clifford~R Jack~Jr, William Jagust, John~C Morris, et~al.
\newblock Recent publications from the alzheimer's disease neuroimaging initiative: Reviewing progress toward improved ad clinical trials.
\newblock {\em Alzheimer's \& Dementia}, 13(4):e1--e85, 2017.

\bibitem{witten2011new}
Daniela~M Witten, Jerome~H Friedman, and Noah Simon.
\newblock New insights and faster computations for the graphical lasso.
\newblock {\em Journal of Computational and Graphical Statistics}, 20(4):892--900, 2011.

\bibitem{wong2010vivo}
Dean~F Wong, Paul~B Rosenberg, Yun Zhou, Anil Kumar, Vanessa Raymont, Hayden~T Ravert, Robert~F Dannals, Ayon Nandi, James~R Bra{\v{s}}i{\'c}, Weiguo Ye, et~al.
\newblock In vivo imaging of amyloid deposition in alzheimer disease using the radioligand 18f-av-45 (flobetapir f 18).
\newblock {\em Journal of nuclear medicine}, 51(6):913--920, 2010.

\bibitem{yang2015robust}
Eunho Yang and Aur{\'e}lie~C Lozano.
\newblock Robust gaussian graphical modeling with the trimmed graphical lasso.
\newblock {\em Advances in neural information processing systems}, 28, 2015.

\bibitem{yeh2009comparisons}
I-Cheng Yeh and Che-hui Lien.
\newblock The comparisons of data mining techniques for the predictive accuracy of probability of default of credit card clients.
\newblock {\em Expert systems with applications}, 36(2):2473--2480, 2009.

\bibitem{yin2011sparse}
Jianxin Yin and Hongzhe Li.
\newblock A sparse conditional gaussian graphical model for analysis of genetical genomics data.
\newblock {\em The annals of applied statistics}, 5(4):2630, 2011.

\bibitem{zhang2022fairness}
Guanhua Zhang, Yihua Zhang, Yang Zhang, Wenqi Fan, Qing Li, Sijia Liu, and Shiyu Chang.
\newblock Fairness reprogramming.
\newblock {\em Advances in Neural Information Processing Systems}, 35:34347--34362, 2022.

\bibitem{zhang2023unified}
Xiang Zhang and Qiao Wang.
\newblock A unified framework for fair spectral clustering with effective graph learning.
\newblock {\em arXiv preprint arXiv:2311.13766}, 2023.

\bibitem{zhang2024dual}
Xiang Zhang and Qiao Wang.
\newblock A dual laplacian framework with effective graph learning for unified fair spectral clustering.
\newblock {\em Neurocomputing}, page 128210, 2024.

\bibitem{zhou2024fair}
Zhuoping Zhou, Davoud Ataee~Tarzanagh, Bojian Hou, Boning Tong, Jia Xu, Yanbo Feng, Qi~Long, and Li~Shen.
\newblock Fair canonical correlation analysis.
\newblock {\em Advances in Neural Information Processing Systems}, 36, 2024.

\end{thebibliography}
}
\newpage
\appendix
\addtocontents{toc}{\protect\setcounter{tocdepth}{3}}
\tableofcontents
\newpage
\section{Summary of the Notations}\label{sec:app:notation}
\renewcommand{\arraystretch}{1.1} 
\begin{table}[h]
    \centering
    \caption{Summary of the Notations}
    \setlength{\tabcolsep}{20pt}
    \begin{tabular}{ll}
    \toprule
    \textbf{Notation} & \textbf{Description} \\
    \midrule
    $\g{1}\{\cdot\}$ & Indicator function\\
    $\|\m{A}\|_1$ & $\ell_1$--norm: $\sum_{ij}|a_{ij}|$\\
    $\|\m{A}\|_{F}$ & Frobenius norm: $(\sum_{ij}|a_{ij}|^2)^{1/2}$\\
    $\left[M\right]$ & The set $\{1,2,\ldots,M\}$\\
    $\Lambda_i(\mathbf{A})$ & $i$th eigenvalue of $\mathbf{A}$\\
    $\Lambda_{\min}(\mathbf{A})$ & The smallest eigenvalue of $\mathbf{A}$\\
    $M_{\alpha h}(x)$ & Moreau envelope: $\min_{y}\left[h(y)+\frac{1}{2\alpha}\|x-y\|^2\right]$\\
    $\textbf{prox}_{\alpha h}(x)$ & Proximal operator: $\argmin_{y}\left[h(y)+\frac{1}{2\alpha}\|x-y\|^2\right]$\\
    $\eta_{\frac{1}{\ell}\lambda}\left(x\right)$ & Soft thresholding operator: $\text{sign}(x)\max(|x|-\frac{1}{\ell}\lambda,0)$\\
    \midrule
    $L$ & Lipschitz constant\\
    $P$ & Number of variables in the data matrix\\
    $N$ & Number of observations in the data matrix\\
    $N^k$ & Number of observations in the $k$th group data matrix\\
    $K$ & Number of sensitive groups in the data matrix\\
    $M$ & Number of objectives in the multi-objective optimization problem\\
    $t$ & Current iteration of Algorithm~\ref{alg:fairgms}\\
    $\lambda$ & Hyper-parameter of the $\ell_1$--regularization term\\
    $\gamma_C$ & Hyper-parameter of the convex regularization term in \eqref{eqn:fairCOV}\\
    $\gamma_I$ & Hyper-parameter of the convex regularization term in \eqref{eqn:fairISING}\\
    $\ell$ & Selected constant $>L$ \\
    \midrule
    $\m{X}$ & Data matrix\\
    $\m{X}_k$ & Data matrix of $k$th group\\
    $\m{X}_{i:}$ & Observations in the data matrix\\
    $\m{S}$ & Sample covariance matrix: $n^{-1} \sum_{i=1}^n \m{X}_{i:} \m{X}_{i:}^\top$\\
    $\g{\Sigma}$ & Covariance matrix\\
    $\g{\Phi}$ & Conditional independence graph (inverse covariance matrix)\\
    $\g{\Theta}$ & Conditional independence graph (inverse covariance matrix)\\
    $\g{\Theta}_k$ & Conditional independence graph of $k$th group\\
    $\g{\Theta}^*$ & Optimal conditional independence graph\\
    \midrule
    $\mc{L}$ & General Loss function\\
    $\mc{L}_G$ & Loss function of GLasso: $-\log\det(\g{\Theta}) + \trace(\m{S}\g{\Theta})$\\
    $\mc{L}_C$ & Loss function of CovGraph: $\frac{1}{2}\|\g{\Sigma} - \m{S}\|_F^2 - \tau \log \det(\g{\Sigma})$\\
    $\mc{L}_I$ & Loss function of BinNet; refer to \eqref{eqn:ising}\\
    $\mc{E}_k$ & Graph disparity error of $k$th group\\
    $\Delta_k$ & Pairwise graph disparity error of $k$th group\\
    $\Delta$ & Summation of pairwise graph disparity error: $\sum_{k=1}^K\Delta_k$\\
    $\varphi_{\ell}$ & $\max_{k=1,\ldots,M}\left\langle\nabla f_k(\g{\Theta}),\g{\Phi}-\g{\Theta}\right\rangle + g(\g{\Phi})-g(\g{\Theta}) + \frac{\ell}{2} \left\|\g{\Phi}-\g{\Theta}\right\|_F^2$\\
    $\m{F}$ & Objective function of the multi-objective optimization problem\\
    $F_k$ & $k$th objective in the multi-objective optimization problem\\
    $\m{P}_{\ell}$ & solutions of the min-max problem for each $\ell$: $\argmin_{\boldsymbol{\Phi}\in\mc{M}}\varphi_{\ell}\left(\g{\Phi}; \g{\Theta}\right)$\\
    $R$ & $\sup_{\m{F}^*\in \m{F}(\mc{N}^*\cap\Omega_{\m{F}}(\m{F}(\boldsymbol{\Theta}^0)))}  \quad \inf_{\boldsymbol{\Theta}\in \m{F}^{-1}(\{\m{F}^*\})} \left\|\g{\Theta}-\g{\Theta}^{(0)}\right\|_F^2$\\
    $\mc{M}$ & Convex constraint subset of $\mb{R}^{P\times P}$\\
    $\mc{C}$ & Standard simplex: $\left\{\g{\rho}\in\mb{R}^{M}:~\sum_{k=1}^{M}\rho_k=1,~\rho_k\in[0,1]~~\forall k\in[M]\right\}$\\
    \bottomrule
    \end{tabular}
    \label{tab:notation}
\end{table}

\newpage
\section{Addendum to Section~\ref{sec:formula}}\label{sec:app:subproblem}

\subsection{Dual Reformulation and Computation of Subproblem (\ref{eqn:multi:p+omega})}
In this section, we provide a dual method for solving the Subproblem~\eqref{eqn:multi:p+omega} defined as:
\begin{align}\label{eqn:app:phi}
\nonumber
&\underset{\boldsymbol{\Phi}\in\mc{M}}{\text{min}} \quad \varphi_{\ell}\left(\g{\Phi};\g{\Theta}\right),\\
&\quad\text{with} \quad \varphi_{\ell}\left(\g{\Phi};\g{\Theta}\right) = \max_{k \in \{1,\ldots,M\}} \left\langle\nabla f_k(\g{\Theta}),\g{\Phi}-\g{\Theta} \right\rangle+g(\g{\Phi})-g(\g{\Theta}) + \frac{\ell}{2} \left\|\g{\Phi}-\g{\Theta}\right\|_F^2,
\end{align}
for all $\ell>L$ where $L$ is defined in Assumption~\ref{assum}. 
%

For simplicity, let
\begin{equation}
\psi_{k,\ell}\left(\g{\Phi};\g{\Theta}\right) = \langle\nabla f_k(\g{\Theta}),\g{\Phi}-\g{\Theta}\rangle+g(\g{\Phi})-g(\g{\Theta}) + \frac{\ell}{2}\|\g{\Phi}-\g{\Theta}\|_F^2.
\end{equation}
By considering the standard simplex,  
\begin{equation}\label{eqn:C:set}
  \mc{C}:=\left\{\g{\rho}\in\mb{R}^{M}:~\sum_{k=1}^{M}\rho_k=1,~\rho_k\in[0,1],~~\forall k\in[M]\right\},  
\end{equation}
we reformulate \eqref{eqn:app:phi} as
%
\begin{equation}\label{eqn:app:phi_simple}
\min_{\boldsymbol{\Phi}\in\mc{M}} \max_{\boldsymbol{\rho}\in\mc{C}}~\sum_{k=1}^{M}\rho_k\psi_{k,\ell}\left(\g{\Phi};\g{\Theta}\right).
\end{equation}
By leveraging the convexity of \(\mc{M}\), the compactness and convexity of \(\mc{C}\), and the convexity-concavity property of \(\sum_{k=1}^{M}\rho_k\psi_{k,\ell}\left(\g{\Phi};\g{\Theta}\right)\) with respect to \(\g{\Phi}\) and \(\g{\rho}\), respectively, we can invoke \textit{Sion's minimax theorem} to reformulate \eqref{eqn:app:phi_simple} as follows:
\begin{equation}\label{eqn:app:phi_minimax}
\max_{\boldsymbol{\rho}\in\mc{C}}\min_{\boldsymbol{\Phi}\in\mc{M}}~\sum_{k=1}^{M}\rho_k\psi_{k,\ell}\left(\g{\Phi};\g{\Theta}\right). 
\end{equation}
Expanding on the definition of \(\psi_{k,\ell}\), we arrive at the following expression:
{
\small
\allowdisplaybreaks
\begin{align}\label{eqn:app:phi_max}
\nonumber 
\max_{\boldsymbol{\rho}\in\mc{C}}\min_{\boldsymbol{\Phi}\in\mc{M}}&~\sum_{k=1}^{M}\rho_k\psi_{k,\ell}\left(\g{\Phi};\g{\Theta}\right)  \\
\nonumber
& = \max_{\boldsymbol{\rho}\in\mc{C}}\min_{\boldsymbol{\Phi}\in\mc{M}} \left[g\left(\g{\Phi}\right) + \frac{\ell}{2}\left\|\g{\Phi}-\left(\g{\Theta}+\frac{1}{\ell}\sum_{k=1}^{M}\rho_k\nabla f_k\left(\g{\Theta}\right)\right)\right\|^2_F\right] \\
\nonumber
&\qquad- \frac{1}{2\ell}\left\|\sum_{k=1}^{M}\rho_k\nabla f_k\left(\g{\Theta}\right)\right\|_F^2 - g\left(\g{\Theta}\right) \\
\nonumber
& = \max_{\boldsymbol{\rho}\in\mc{C}}~\ell M_{\frac{1}{\ell}g}\left(\g{\Theta}-\frac{1}{\ell}\sum_{k=1}^{M}\rho_k\nabla f_k\left(\g{\Theta}\right)\right) \\
&\qquad- \frac{1}{2\ell}\left\|\sum_{k=1}^{M}\rho_k\nabla f_k\left(\g{\Theta}\right)\right\|_F^2 - g\left(\g{\Theta}\right),
\end{align}
}
where Moreau envelope \(M_{\alpha h}(x)\) and the proximal operator are defined as
\begin{subequations}
\begin{align}
M_{\alpha h}(x)&:=\min_{y}\left[h(y)+\frac{1}{2\alpha}\|x-y\|^2\right],\\
\textbf{prox}_{\alpha h}(x)&:=\argmin_{y}\left[h(y)+\frac{1}{2\alpha}\|x-y\|^2\right].
\end{align}
\end{subequations}

Problem \eqref{eqn:app:phi_max} is equivalent to the dual problem:
\begin{subequations}\label{eqn:app:dual}
\begin{equation}
\max_{\boldsymbol{\rho}\in\mathbb{R}^{M}}~~\omega(\g{\rho})\qquad\text{subj. to}\qquad \g{\rho}\succeq\m{0}~~\text{and}~~\sum_{k=1}^{M}\rho_k=1,
\end{equation}
where
\begin{equation}
\omega(\g{\rho})=\ell M_{\frac{1}{\ell}g}\left(\g{\Theta}-\frac{1}{\ell}\sum_{k=1}^{M}\rho_k\nabla f_k\left(\g{\Theta}\right)\right)- \frac{1}{2\ell}\left\|\sum_{k=1}^{M}\rho_k\nabla f_k\left(\g{\Theta}\right)\right\|_F^2 - g\left(\g{\Theta}\right).
\end{equation}
\end{subequations}
Upon solving the dual Problem~\eqref{eqn:app:dual}, the optimal solution \(\m{\Phi}^*\) of \eqref{eqn:app:phi} is obtained through:
\begin{equation}
\g{\Phi}^*=\textbf{prox}_{\frac{1}{\ell}g}\left(\g{\Theta}-\frac{1}{\ell}\sum_{k=1}^{M}\rho_k\nabla f_k\left(\g{\Theta}\right)\right).
\end{equation}
In the implementation, for the given \(g\left(\g{\Theta}\right)=\lambda\|\g{\Theta}\|_1\), \(\textbf{prox}_{\frac{1}{\ell}g}\) is computed using soft thresholding \(\eta_{\frac{1}{\ell}\lambda}\), as shown below:
\begin{equation}
\left(\eta_{\frac{1}{\ell}\lambda}\left(\m{x}\right)\right)_j = \text{sign}(x_j)\max\left(|x_j|-\frac{1}{\ell}\lambda,0\right).
\end{equation}
%

To provide a clear and logical summary of the iterative update process in Algorithm~\ref{alg:fairgms}, we proceed as follows: At each iteration $t$, the update for \(\g{\Theta}^{(t+1)}\) is performed by inputting \(\g{\Theta}^{(t)}\) and solving the Subproblem~\eqref{eqn:app:phi}. This is achieved by utilizing the \texttt{scipy.optimize.minimize} function with the \texttt{method="trust-constr"} option to solve the dual problem. Specifically, for given constants \(\ell>L\) and \(\lambda>0\), and the calculated \(\g{\rho}^{(t)}\in\mc{C}\) at the $t$th iteration, the update rule for \(\g{\Theta}^{(t+1)}\) is given by:
\begin{equation}\label{eqn:iterate_update}
\g{\Theta}^{(t+1)}=\eta_{\frac{1}{\ell}\lambda}\left(\g{\Theta}^{(t)}-\frac{1}{\ell}\sum_{k=1}^{M}\rho_k^{(t)}\nabla f_k\left(\g{\Theta}^{(t)}\right)\right),
\end{equation}
which incorporates the proximal operator and the weighted sum of gradients.
Through solving Subproblem~\eqref{eqn:multi:p+omega}, the following proposition characterizes the weak Pareto optimality in the context of multi-objective optimization Problem~\eqref{eqn:multi_obj}:
\begin{prop}\label{prop}
Let $\omega_{\ell}\left(\g{\Theta}\right) :=\min_{\boldsymbol{\Phi}\in\mc{M}}\varphi_{\ell}\left(\g{\Phi}; \g{\Theta}\right)$ and   \( \m{P}_{\ell}\) be defined as in \eqref{eqn:multi:p+omega}. Then,
\begin{enumerate}[label=(\roman*),noitemsep,leftmargin=*,align=left]
\item The following conditions are equivalent:

\begin{enumerate}[itemjoin={{; }}]
    \item \(\g{\Theta}\) is a weakly Pareto optimal point;
    \item \(\m{P}_{\ell}\left(\g{\Theta}\right)=\g{\Theta}\);
    \item \(\omega_{\ell}\left(\g{\Theta}\right)=0\).
\end{enumerate}
\item The mappings \(\m{P}_{\ell} \) and \(\omega_\ell\) are both continuous.
\end{enumerate}
\end{prop}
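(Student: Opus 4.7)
My plan is to prove part (i) via the chain $\mathrm{(b)} \Leftrightarrow \mathrm{(c)} \Leftrightarrow \mathrm{(a)}$, exploiting two structural facts: first, $\varphi_\ell(\g{\Theta};\g{\Theta}) = 0$, so $\omega_\ell(\g{\Theta}) \leq 0$ automatically; second, $\varphi_\ell(\cdot;\g{\Theta})$ is strongly convex with modulus $\ell$ thanks to the quadratic penalty, plus convex via $g$, so the minimizer $\m{P}_\ell(\g{\Theta})$ is unique.

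For $\mathrm{(b)} \Leftrightarrow \mathrm{(c)}$ I would simply evaluate: $\m{P}_\ell(\g{\Theta}) = \g{\Theta}$ immediately gives $\omega_\ell(\g{\Theta}) = \varphi_\ell(\g{\Theta};\g{\Theta}) = 0$, and conversely $\omega_\ell(\g{\Theta}) = 0$ forces $\g{\Theta}$ to attain the infimum, whence uniqueness yields $\m{P}_\ell(\g{\Theta}) = \g{\Theta}$. For $\mathrm{(a)} \Rightarrow \mathrm{(c)}$ I argue contrapositively: assume $\omega_\ell(\g{\Theta}) < 0$ with minimizer $\g{\Phi}^* = \m{P}_\ell(\g{\Theta})$, and consider the segment $\g{\Theta}_\alpha = (1-\alpha)\g{\Theta} + \alpha \g{\Phi}^* \in \mc{M}$. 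Applying the descent lemma via Lipschitz gradients (Assumption~\ref{assum}) together with convexity of $g$ yields, for every $k\in[M]$,
\begin{equation*}
F_k(\g{\Theta}_\alpha) - F_k(\g{\Theta}) \leq \alpha\bigl[\langle \nabla f_k(\g{\Theta}), \g{\Phi}^* - \g{\Theta}\rangle + g(\g{\Phi}^*) - g(\g{\Theta})\bigr] + \tfrac{L\alpha^2}{2}\|\g{\Phi}^* - \g{\Theta}\|_F^2,
\end{equation*}
and the bracket is bounded above by $\omega_\ell(\g{\Theta}) - \tfrac{\ell}{2}\|\g{\Phi}^*-\g{\Theta}\|_F^2 < 0$ uniformly in $k$. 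Hence for small $\alpha$ every $F_k$ strictly decreases, contradicting weak Pareto optimality.

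The direction $\mathrm{(c)} \Rightarrow \mathrm{(a)}$ is where I expect the main obstacle, since it genuinely needs convexity of each $f_k$ (satisfied in Fair GLasso directly and in Fair CovGraph / Fair BinNet after the convexifying terms $\gamma_C,\gamma_I$) rather than just the smoothness of Assumption~\ref{assum}. I again argue contrapositively: if some $\g{\Phi} \in \mc{M}$ strictly dominates $\g{\Theta}$, then convexity of each $f_k$ combined with the identity $f_k = F_k - g$ gives
\begin{equation*}
\max_{k\in[M]}\langle \nabla f_k(\g{\Theta}), \g{\Phi}-\g{\Theta}\rangle + g(\g{\Phi})-g(\g{\Theta}) \;\leq\; \max_{k\in[M]}\bigl[F_k(\g{\Phi}) - F_k(\g{\Theta})\bigr] < 0.
\end{equation*}
Evaluating $\varphi_\ell$ along the segment $\g{\Phi}_t = (1-t)\g{\Theta} + t\g{\Phi}$ and using convexity of $g$ then gives $\varphi_\ell(\g{\Phi}_t;\g{\Theta}) \leq t\,\delta + \tfrac{\ell t^2}{2}\|\g{\Phi}-\g{\Theta}\|_F^2$ for some $\delta < 0$, which is strictly negative for small $t$, contradicting $\omega_\ell(\g{\Theta}) = 0$.

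For part (ii), my plan is to invoke a Berge-style maximum theorem. Joint continuity of $(\g{\Phi},\g{\Theta}) \mapsto \varphi_\ell(\g{\Phi};\g{\Theta})$ is immediate from continuity of $\nabla f_k$ (Assumption~\ref{assum}) and of $g$, while strong convexity of $\varphi_\ell(\cdot;\g{\Theta})$ with modulus $\ell$ forces single-valuedness of the argmin correspondence and upgrades its upper hemicontinuity to genuine continuity of $\m{P}_\ell$; continuity of $\omega_\ell = \varphi_\ell(\m{P}_\ell(\cdot);\cdot)$ then follows by composition.
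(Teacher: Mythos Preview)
Your argument is correct and coincides with the paper's approach: the paper simply cites \cite[Lemma~3.2]{tanabe2019proximal} together with the convexity of each $f_k$ established in Appendix~\ref{sec:app:theory}, and your chain of implications plus the Berge-type continuity argument are exactly what lies behind that reference. In particular, your observation that $\mathrm{(c)}\Rightarrow\mathrm{(a)}$ is the step requiring convexity of the $f_k$ is precisely the extra ingredient the paper singles out beyond the cited lemma.
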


\begin{proof}
The proof follows from \cite[Lemma 3.2]{tanabe2019proximal} and the convexity of $f_k$. The detailed convexity analyses for Fair GLasso, Fair CovGraph, and Fair BinNet are provided in Sections \ref{app:glasso:proof}, \ref{app:covgra:proof}, and \ref{app:ising:proof}, respectively.
\end{proof}

As demonstrated in the analysis of Subproblem~\eqref{eqn:multi:p+omega} in the beginning of this section, the proposition implies that the descent direction is the minimum norm matrix within the convex hull of the gradients of all objectives. Furthermore, this direction is non-increasing with respect to each individual objective function. This property ensures that the chosen descent direction simultaneously minimizes the overall norm while guaranteeing non-increasing behavior for each objective, thereby facilitating the optimization process in a multi-objective setting.

\subsection{Subproblem Solver for Fair GMs}

\subsubsection{Fair GLasso}

To update \(\g{\Theta}^{(t)}\) in Algorithm~\ref{alg:fairgms} applied to \eqref{eqn:fairGLASSO}, the iterative update formula in Equation \eqref{eqn:iterate_update} is used at each iteration. The gradients for the functions \(f_1\) and \(\{f_{k+1}\}_{k=1}^{M-1}\) are computed as follows:

The gradient of \(f_1\) with respect to \(\g{\Theta}\) is given by:
\begin{equation}
\nabla f_1(\g{\Theta}) = \m{S} - \g{\Theta}^{-1},
\end{equation}
where \(\m{S}\) is the sample covariance matrix and \(\g{\Theta}^{-1}\) is the inverse of the precision matrix \(\g{\Theta}\).

The gradient of \(f_{k+1}\) for \(k=1,\ldots,M-1\) with respect to \(\g{\Theta}\) is given by:
\begin{equation}
\begin{aligned}
\nabla f_{k+1}(\g{\Theta}) =& \sum_{s\in[K],s\neq k}\left(\trace(\m{S}_k\g{\Theta}) - \trace(\m{S}_s\g{\Theta}) + \log\det(\g{\Theta}_k^*) \right.\\
& \left.- \trace(\m{S}_k\g{\Theta}_k^*) - \log\det(\g{\Theta}_s^*) + \trace(\m{S}_s\g{\Theta}_s^*)\right)\left(\m{S}_k-\m{S}_s\right),
\end{aligned}
\end{equation}
where \(K\) is the number of groups, \(\m{S}_k\) and \(\m{S}_s\) are the sample covariance matrices for groups \(k\) and \(s\), respectively, and \(\g{\Theta}_k^*\) and \(\g{\Theta}_s^*\) are the optimal precision matrices for groups \(k\) and \(s\) obtained by solving the group-specific GLasso problems.

\subsubsection{Fair CovGraph}

To refine the iterative update rule for estimating the fair covariance matrix \(\g{\Sigma}\) in  \eqref{eqn:fairGLASSO} using Algorithm~\ref{alg:fairgms}, the gradients for \(f_1\) and the set \(\{f_{k+1}\}_{k=1}^{M-1}\) are computed as follows:

The gradient of \(f_1\) with respect to \(\g{\Sigma}\) is given by:
\begin{equation}
\nabla f_1(\g{\Sigma}) = \g{\Sigma}-\m{S}-\tau\g{\Sigma}^{-1},
\end{equation}
where \(\m{S}\) is the pooled sample covariance matrix, \(\tau\) is the regularization parameter, and \(\g{\Sigma}^{-1}\) is the inverse of the covariance matrix \(\g{\Sigma}\).

The gradient of \(f_{k+1}\) for \(k=1,\ldots,M-1\) with respect to \(\g{\Sigma}\) is given by:
\begin{equation}
\begin{aligned}
\nabla f_{k+1}(\g{\Sigma}) =& \sum_{s\in[K],s\neq k}\left(\frac{1}{2}\|\g{\Sigma}-\m{S}_k\|_F^2 - \frac{1}{2}\|\g{\Sigma}-\m{S}_s\|_F^2 + \tau\log\det(\g{\Sigma}_k^*) \right.\\
& \left. - \frac{1}{2}\|\g{\Sigma}_k^*-\m{S}_k\|_F^2 - \tau\log\det(\g{\Sigma}_s^*) + \frac{1}{2}\|\g{\Sigma}_s^*-\m{S}_s\|_F^2\right)\left(\m{S}_s-\m{S}_k\right),
\end{aligned}
\end{equation}
where \(K\) is the number of groups, \(\m{S}_k\) and \(\m{S}_s\) are the sample covariance matrices for groups \(k\) and \(s\), respectively, \(\g{\Sigma}_k^*\) and \(\g{\Sigma}_s^*\) are the optimal covariance matrices for groups \(k\) and \(s\) obtained by solving the group-specific CovGraph problems.

\subsubsection{Fair BinNet}

To simplify, denote
\begin{align*}
z_{\theta}&=\exp\left(\theta_{jj}+\sum_{j'\neq j}\theta_{jj'}x_{ij'}\right), \quad \textnormal{and} \quad z_{\phi}=\exp\left(\phi_{jj}+\sum_{j'\neq j}\phi_{jj'}x_{ij'}\right).
\end{align*}
The gradients of the objectives of Fair BinNet that are utilized in the iterative update formula \eqref{eqn:iterate_update} are computed as follows:
{
\begin{equation}
\begin{aligned}
\left(\nabla f_1(\g{\Theta})\right)_{jj} =& ~\left(\frac{\partial\mc{L}}{\partial\g{\Theta}}\left(\g{\Theta},\m{X}\right)\right)_{jj} = - (\m{X}^{T}\m{X})_{jj} + \sum_{i=1}^N \frac{z_{\theta}}{1+z_{\theta}},\\
\left(\nabla f_1(\g{\Theta})\right)_{jj'} =& ~\left(\frac{\partial\mc{L}}{\partial\g{\Theta}}\left(\g{\Theta},\m{X}\right)\right)_{jj'} = - (\m{X}^{T}\m{X})_{jj'} + \sum_{i=1}^N \frac{x_{ij'}z_{\theta}}{1+z_{\theta}},\\
\nabla f_{k+1}(\g{\Theta}) =& \sum_{s\in[K],s\neq k} \left(\left(\mc{L}(\g{\Theta}; \m{X}_k) -  \mc{L}(\g{\Theta}_k^*; \m{X}_k)\right) - \left(\mc{L}(\g{\Theta}; \m{X}_s) \right.\right.\\
& - \left.\left.\mc{L}(\g{\Theta}_s^*; \m{X}_s)\right)\right) \left(\frac{\partial\mc{L}}{\partial\g{\Theta}}\left(\g{\Theta},\m{X}_k\right)-\frac{\partial\mc{L}}{\partial\g{\Theta}}\left(\g{\Theta},\m{X}_s\right)\right).
\end{aligned}
\end{equation}
}

Here, \(\mc{L}\left(\g{\Theta},\m{X}\right) = f_1(\g{\Theta})\) is the negative log-likelihood of the Ising model, where \(\g{\Theta} \in \mathbb{R}^{P \times P}\) is the interaction matrix, \(\m{X} \in \{0,1\}^{N \times P}\) is the binary data matrix with \(N\) samples and \(P\) variables, and \(\theta_{jj'}\) denotes the \((j,j')\)-th element of \(\g{\Theta}\).

\section{Addendum to Section~\ref{sec:fgraph}} \label{sec:app:theory}

\subsection{Auxiliary Lemmas}

\begin{lem}\label{app:aux_lem2}
Let \(\{\g{\Theta}^{(t)}\}\) be generated by Algorithm~\ref{alg:fairgms}. Then for all \(k=1,\ldots,M\), we have
\begin{equation}
F_k\left(\g{\Theta}^{(t+1)}\right)\leq F_k\left(\g{\Theta}^{(t)}\right).
\end{equation}
\end{lem}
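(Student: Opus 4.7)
The plan is to exploit the descent property of the proximal mapping $\m{P}_\ell$ defined in \eqref{eqn:multi:p+omega} together with the Lipschitz smoothness of each $f_k$ guaranteed by Assumption~\ref{assum}. The overall strategy is to show that for every $k \in [M]$, the quantity $F_k(\g{\Theta}^{(t+1)}) - F_k(\g{\Theta}^{(t)})$ is bounded above by $\varphi_\ell(\g{\Theta}^{(t+1)};\g{\Theta}^{(t)})$, and then show that this max-type model function is nonpositive at the minimizer $\g{\Theta}^{(t+1)} = \m{P}_\ell(\g{\Theta}^{(t)})$.

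First, I would invoke the descent lemma for $L$-smooth functions: under Assumption~\ref{assum}, each $f_k$ satisfies
\begin{equation*}
f_k(\g{\Phi}) \leq f_k(\g{\Theta}) + \langle \nabla f_k(\g{\Theta}), \g{\Phi} - \g{\Theta}\rangle + \frac{L}{2}\|\g{\Phi}-\g{\Theta}\|_F^2
\end{equation*}
for all $\g{\Theta},\g{\Phi}\in\mc{M}$. Since $\ell > L$ by the choice in Algorithm~\ref{alg:fairgms}, the same inequality holds with $\ell$ in place of $L$. Applying this with $\g{\Theta} = \g{\Theta}^{(t)}$ and $\g{\Phi} = \g{\Theta}^{(t+1)}$, adding $g(\g{\Theta}^{(t+1)}) - g(\g{\Theta}^{(t)})$ to both sides, and upper-bounding the single linear term by the maximum over $k\in[M]$, I obtain
\begin{equation*}
F_k(\g{\Theta}^{(t+1)}) - F_k(\g{\Theta}^{(t)}) \leq \varphi_\ell\bigl(\g{\Theta}^{(t+1)};\g{\Theta}^{(t)}\bigr) \qquad \text{for every } k\in[M].
\end{equation*}

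Second, I would observe that by construction $\g{\Theta}^{(t+1)} = \m{P}_\ell(\g{\Theta}^{(t)}) = \argmin_{\g{\Phi}\in\mc{M}} \varphi_\ell(\g{\Phi};\g{\Theta}^{(t)})$, so
\begin{equation*}
\varphi_\ell(\g{\Theta}^{(t+1)};\g{\Theta}^{(t)}) \leq \varphi_\ell(\g{\Theta}^{(t)};\g{\Theta}^{(t)}).
\end{equation*}
A direct substitution $\g{\Phi} = \g{\Theta}^{(t)}$ in \eqref{eqn:multi:p+omega} gives $\varphi_\ell(\g{\Theta}^{(t)};\g{\Theta}^{(t)}) = 0$, since the inner-product term vanishes, the $g$-difference cancels, and the quadratic term is zero. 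Combining the previous two displays yields $F_k(\g{\Theta}^{(t+1)}) \leq F_k(\g{\Theta}^{(t)})$ for all $k$, as claimed.

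There is no substantive obstacle here: the result is essentially a restatement of the standard sufficient decrease property of proximal gradient for a max-type model, and both ingredients (descent lemma and feasibility of $\g{\Theta}^{(t)}$ as a competitor in the $\m{P}_\ell$ subproblem) are immediate. The only minor item to verify is that $\g{\Theta}^{(t)}\in\mc{M}$ so it is admissible in the subproblem; this follows inductively from the initialization in \ref{item:s1} and the fact that $\m{P}_\ell$ maps into $\mc{M}$ by definition.
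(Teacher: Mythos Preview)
Your proposal is correct and follows essentially the same approach as the paper: both combine the descent lemma (Assumption~\ref{assum} with $\ell>L$) with the bound $\varphi_\ell(\g{\Theta}^{(t+1)};\g{\Theta}^{(t)})\le 0$. The only cosmetic difference is that the paper obtains the nonpositivity of $\varphi_\ell$ by citing \cite[Lemma~4.1]{tanabe2019proximal} (which in fact gives the slightly sharper bound $\varphi_\ell(\g{\Theta}^{(t+1)};\g{\Theta}^{(t)})\le -\ell\|\g{\Theta}^{(t+1)}-\g{\Theta}^{(t)}\|_F^2$), whereas you obtain it directly by comparing the minimum to the value at the feasible point $\g{\Theta}^{(t)}$; your route is self-contained and perfectly adequate for the stated lemma.
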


\begin{proof}
Let \(\varphi_{\ell}\left(\g{\Phi},\g{\Theta}\right)\) be defined as in \eqref{eqn:multi:p+omega}. Following the proof of \cite[Lemma 4.1]{tanabe2019proximal}, we have
\begin{equation}
\varphi_{\ell}\left(\g{\Theta}^{(t+1)},\g{\Theta}^{(t)}\right)\leq-\ell\|\g{\Theta}^{(t+1)}-\g{\Theta}^{(t)}\|_F^2.
\end{equation}
If \(\ell>L\), using the descent lemma~\cite[Proposition A.24]{bertsekas1999nonlinear}, for all \(k=1,\ldots,M\), we obtain
\begin{equation}
\begin{aligned}
F_k\left(\g{\Theta}^{(t+1)}\right) - F_k\left(\g{\Theta}^{(t)}\right) \leq& \langle\nabla f_i\left(\g{\Theta}^{(t)}\right),\g{\Theta}^{(t+1)}-\g{\Theta}^{(t)}\rangle \\
& + g\left(\g{\Theta}^{(t+1)}\right) - g\left(\g{\Theta}^{(t)}\right) + \frac{\ell}{2}\|\g{\Theta}^{(t+1)}-\g{\Theta}^{(t)}\|_F^2.
\end{aligned}
\end{equation}
Since the right-hand side of the above inequality is less than or equal to zero, it implies that
\begin{equation}
F_k\left(\g{\Theta}^{(t+1)}\right)\leq F_k\left(\g{\Theta}^{(t)}\right).
\end{equation}
\end{proof}



\begin{lem} \label{app:aux_lem1}
Suppose Assumption~\ref{assum} holds. Let \(f_k\) and \(g\) have convexity parameters \(\mu_k\in\mb{R}_{+}\) and \(\nu\in\mb{R}_{+}\), respectively, and define \(\mu:=\min_{k\in[M]} \mu_k\).
Then, for all \(\g{\Theta}\in\mc{M}\), we have
\begin{equation}\label{eqn:func:decrease}
\begin{aligned}
\sum_{k=1}^{M}\rho_k^{(t)}\left(F_k\left(\g{\Theta}^{(t+1)}\right)-F_k\left(\g{\Theta}\right)\right) \leq& ~\frac{\ell}{2}\left(\|\g{\Theta}^{(t)}-\g{\Theta}\|_F^2-\|\g{\Theta}^{(t+1)}-\g{\Theta}\|_F^2\right)\\
&-\frac{\nu}{2}\|\g{\Theta}^{(t+1)}-\g{\Theta}\|_F^2-\frac{\mu}{2}\|\g{\Theta}^{(t)}-\g{\Theta}\|_F^2,
\end{aligned}
\end{equation}
where \(\rho_k^{(t)}\) satisfies the following conditions:
%
\begin{enumerate}[noitemsep,leftmargin=*,align=left]
\item \label{app:cond1} There exists \(\eta^{(t)}\in\partial g\left(\g{\Theta}^{(t+1)}\right)\) such that
\begin{align*}
-\sum_{k=1}^{M}\rho_k^{(t)}\left(\nabla f_i\left(\g{\Theta}^{(t)}\right)+\eta^{(t)}\right)=\ell\left(\g{\Theta}^{(t+1)}-\g{\Theta}^{(t)}\right). 
\end{align*}
\item \(\g{\rho}^{(t)}\in\mc{C}\) where $\mc{C}$ is defined in \eqref{eqn:C:set}. 
\label{app:cond2}
\end{enumerate}
\end{lem}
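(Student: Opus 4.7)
The plan is to adapt the standard proximal-gradient analysis to the multi-objective setting by combining a descent-lemma upper bound on each $f_k(\g{\Theta}^{(t+1)})$ with a strong-convexity lower bound on $f_k(\g{\Theta})$, adding the analogous strong-convexity inequality for $g$, and then using the stationarity condition \ref{app:cond1} together with the three-point identity to collapse the resulting inner product into a telescoping difference of squared norms.

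More concretely, I would first invoke Assumption~\ref{assum} together with $\ell>L$ to apply the descent lemma and obtain
\begin{equation*}
f_k(\g{\Theta}^{(t+1)}) \leq f_k(\g{\Theta}^{(t)}) + \langle\nabla f_k(\g{\Theta}^{(t)}),\g{\Theta}^{(t+1)}-\g{\Theta}^{(t)}\rangle + \frac{\ell}{2}\|\g{\Theta}^{(t+1)}-\g{\Theta}^{(t)}\|_F^2,
\end{equation*}
and the $\mu_k$-strong convexity of $f_k$ anchored at $\g{\Theta}^{(t)}$ to obtain
\begin{equation*}
-f_k(\g{\Theta}) \leq -f_k(\g{\Theta}^{(t)}) - \langle\nabla f_k(\g{\Theta}^{(t)}),\g{\Theta}-\g{\Theta}^{(t)}\rangle - \frac{\mu_k}{2}\|\g{\Theta}-\g{\Theta}^{(t)}\|_F^2.
\end{equation*}
Adding the two bounds produces an upper bound on $f_k(\g{\Theta}^{(t+1)})-f_k(\g{\Theta})$ whose only remaining linear term is $\langle\nabla f_k(\g{\Theta}^{(t)}),\g{\Theta}^{(t+1)}-\g{\Theta}\rangle$. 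In parallel, the $\nu$-strong convexity of $g$ anchored at $\g{\Theta}^{(t+1)}$ with subgradient $\eta^{(t)}\in\partial g(\g{\Theta}^{(t+1)})$ yields $g(\g{\Theta}^{(t+1)})-g(\g{\Theta}) \leq \langle\eta^{(t)},\g{\Theta}^{(t+1)}-\g{\Theta}\rangle - \frac{\nu}{2}\|\g{\Theta}^{(t+1)}-\g{\Theta}\|_F^2$.

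Next I would take the convex combination of the $f_k$ bounds with weights $\rho_k^{(t)}$ (valid since $\g{\rho}^{(t)}\in\mc{C}$ by \ref{app:cond2}), add the $g$ bound, and rewrite the aggregated linear contribution as $\langle \sum_{k=1}^{M}\rho_k^{(t)}(\nabla f_k(\g{\Theta}^{(t)})+\eta^{(t)}),\g{\Theta}^{(t+1)}-\g{\Theta}\rangle$. Condition \ref{app:cond1} identifies this with $\ell\langle\g{\Theta}^{(t)}-\g{\Theta}^{(t+1)},\g{\Theta}^{(t+1)}-\g{\Theta}\rangle$, and the three-point identity $2\langle a-b,b-c\rangle = \|a-c\|_F^2 - \|a-b\|_F^2 - \|b-c\|_F^2$ applied with $a=\g{\Theta}^{(t)}$, $b=\g{\Theta}^{(t+1)}$, $c=\g{\Theta}$ converts it into $\frac{\ell}{2}(\|\g{\Theta}^{(t)}-\g{\Theta}\|_F^2-\|\g{\Theta}^{(t)}-\g{\Theta}^{(t+1)}\|_F^2-\|\g{\Theta}^{(t+1)}-\g{\Theta}\|_F^2)$. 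The middle term here cancels exactly with the $\frac{\ell}{2}\|\g{\Theta}^{(t+1)}-\g{\Theta}^{(t)}\|_F^2$ residual from the descent lemma, leaving the telescoping $\frac{\ell}{2}(\|\g{\Theta}^{(t)}-\g{\Theta}\|_F^2-\|\g{\Theta}^{(t+1)}-\g{\Theta}\|_F^2)$ on the right-hand side of \eqref{eqn:func:decrease}. The bound $\sum_k\rho_k^{(t)}\mu_k\|\g{\Theta}-\g{\Theta}^{(t)}\|_F^2 \geq \mu\|\g{\Theta}-\g{\Theta}^{(t)}\|_F^2$ is then immediate from $\mu=\min_k\mu_k$, $\rho_k^{(t)}\geq 0$, and $\sum_k\rho_k^{(t)}=1$.

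The derivation is essentially algebraic once these pieces are lined up, so the main obstacle I anticipate is sign and anchor bookkeeping: the convexity lower bound on $f_k$ must be anchored at $\g{\Theta}^{(t)}$ so that its gradient matches the descent-lemma gradient, while the subgradient inequality for $g$ must be anchored at $\g{\Theta}^{(t+1)}$ so that the stationarity condition \ref{app:cond1} produces the precise coefficient needed for the three-point identity to yield a clean cancellation of the $\|\g{\Theta}^{(t+1)}-\g{\Theta}^{(t)}\|_F^2$ term. Once the linear terms are assembled in this configuration, the claimed inequality follows by direct substitution.
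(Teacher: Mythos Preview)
Your proposal is correct and follows essentially the same route as the paper's proof: descent lemma for each $f_k$, strong-convexity inequality for $f_k$ anchored at $\g{\Theta}^{(t)}$, subgradient strong-convexity inequality for $g$ anchored at $\g{\Theta}^{(t+1)}$, then the convex combination via \ref{app:cond2}, substitution of \ref{app:cond1}, and the three-point identity to collapse the inner product. Your bookkeeping of anchors and signs is in fact cleaner than the paper's presentation, which contains a couple of minor typos (e.g.\ writing $\nabla f_k(\g{\Theta})$ where $\nabla f_k(\g{\Theta}^{(t)})$ is intended, and a sign slip in the intermediate inner product) that you have implicitly corrected.
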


\begin{proof}
Assumption~\ref{assum} yields
\begin{equation}
\begin{aligned}
F_k\left(\g{\Theta}^{(t+1)}\right)-F_k\left(\g{\Theta}^{(t)}\right) \leq& \langle\nabla f_k(\g{\Theta}^{(t)}),\g{\Theta}^{(t+1)}-\g{\Theta}^{(t)}\rangle\\
&+g\left(\g{\Theta}^{(t+1)}\right)-g\left(\g{\Theta}^{(t)}\right) + \frac{\ell}{2}\|\g{\Theta}^{(t+1)}-\g{\Theta}^{(t)}\|_F^2.
\end{aligned}
\end{equation}
From the convexity of \(f_k\) and \(g\), we have
{\small
\begin{equation}
\allowdisplaybreaks
\begin{aligned}
F_k\left(\g{\Theta}^{(t+1)}\right) & - F_k\left(\g{\Theta}\right)\\
=& \left(F_k\left(\g{\Theta}^{(t+1)}\right)-F_k\left(\g{\Theta}^{(t)}\right)\right) + \left(F_k\left(\g{\Theta}^{(t)}\right)-F_k\left(\g{\Theta}\right)\right)\\
\leq & \left(\langle\nabla f_k(\g{\Theta}^{(t)}),\g{\Theta}^{(t+1)}-\g{\Theta}^{(t)}\rangle+g\left(\g{\Theta}^{(t+1)}\right)-g\left(\g{\Theta}^{(t)}\right) + \frac{\ell}{2}\|\g{\Theta}^{(t+1)}-\g{\Theta}^{(t)}\|_F^2\right)\\
&+ \left(\langle\nabla f_k\left(\g{\Theta}\right),\g{\Theta}^{(t)}-\g{\Theta}\rangle-\frac{\mu_i}{2}\|\g{\Theta}^{(t)}-\g{\Theta}\|_F^2+g\left(\g{\Theta}^{(t)}\right)-g\left(\g{\Theta}\right)\right)\\
\leq & \langle\nabla f_k(\g{\Theta}^{(t)}),\g{\Theta}^{(t+1)}-\g{\Theta}\rangle + g\left(\g{\Theta}^{(t+1)}\right) \\
&- g\left(\g{\Theta}\right) - \frac{\mu}{2}\|\g{\Theta}^{(t)}-\g{\Theta}\|_F^2 + \frac{\ell}{2}\|\g{\Theta}^{(t+1)}-\g{\Theta}^{(t)}\|_F^2\\
\leq & \langle\nabla f_k(\g{\Theta}^{(t)})+\eta^{(t)},\g{\Theta}^{(t+1)}-\g{\Theta}\rangle + \frac{\ell}{2}\|\g{\Theta}^{(t+1)}\\
&-\g{\Theta}^{(t)}\|_F^2 - \frac{\mu}{2}\|\g{\Theta}^{(t)}-\g{\Theta}\|_F^2 - \frac{\nu}{2}\|\g{\Theta}^{(t+1)}-\g{\Theta}\|_F^2.
\end{aligned}
\end{equation}
}
Condition~\ref{app:cond1} and Condition~\ref{app:cond2} yield
\begin{equation}
\allowdisplaybreaks
\begin{aligned}
\sum_{k=1}^{M}\rho_k^{(t)}\left(F_k\left(\g{\Theta}^{(t+1)}\right)-F_k\left(\g{\Theta}\right)\right) =& ~\ell\langle\g{\Theta}^{(t+1)}-\g{\Theta}^{(t)},\g{\Theta}^{(t+1)}-\g{\Theta}\rangle + \frac{\ell}{2}\|\g{\Theta}^{(t+1)}\\
&-\g{\Theta}^{(t)}\|_F^2 - \frac{\mu}{2}\|\g{\Theta}^{(t)}-\g{\Theta}\|_F^2 - \frac{\nu}{2}\|\g{\Theta}^{(t+1)}-\g{\Theta}\|_F^2\\
=& ~\frac{\ell}{2}\left(\|\g{\Theta}^{(t)}-\g{\Theta}\|_F^2-\|\g{\Theta}^{(t+1)}-\g{\Theta}\|_F^2\right)\\
&-\frac{\nu}{2}\|\g{\Theta}^{(t+1)}-\g{\Theta}\|_F^2-\frac{\mu}{2}\|\g{\Theta}^{(t)}-\g{\Theta}\|_F^2.%
\end{aligned}
\end{equation}
\end{proof}

\subsection{Proof of Theorem~\ref{thm:glasso} for Fair GLasso}\label{app:glasso:proof}

First, we present the convexity analysis and gradient Lipschitz continuity.

\begin{prop}[Convexity of Fair GLasso] \label{prop:app:conv_glasso}
Each $f_k$ for \(k=1, \ldots, M\) and $g$ defined in \eqref{eqn:fairGLASSO} of Fair GLasso are convex. Further, $f_1$ is strongly convex.
\end{prop}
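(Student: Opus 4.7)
The plan is to handle the three ingredients of \eqref{eqn:fairGLASSO} separately: the regularizer $g$, the fit term $f_1$, and the fairness terms $f_k$ for $k\ge 2$. The only non-routine step is the observation that, specifically for the GLasso loss, the pairwise disparity differences $\mc{E}_k-\mc{E}_s$ reduce to an \emph{affine} function of $\g{\Theta}$, which is what makes the $f_k$ convex through composition with $\phi$.

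First, $g(\g{\Theta})=\lambda\|\g{\Theta}\|_1$ is convex as a nonnegative scalar multiple of a norm. For $f_1(\g{\Theta}) = -\log\det(\g{\Theta}) + \trace(\m{S}\g{\Theta})$ on $\mc{M} = \{\g{\Theta}\succ\m{0},\g{\Theta}=\g{\Theta}^\top\}$, I would recall that $-\log\det$ is strictly convex on the positive-definite cone, with Hessian action $\m{H}\mapsto\g{\Theta}^{-1}\m{H}\g{\Theta}^{-1}$ whose smallest eigenvalue is $\Lambda_{\max}(\g{\Theta})^{-2}$; the trace term is linear and therefore preserves convexity. Strong convexity then follows by restricting attention to the sublevel set $\Omega_{\m{F}}(\m{F}(\g{\Theta}^{(0)}))$ invoked in Assumption~\ref{app:aux_ass1}: on this set the eigenvalues of $\g{\Theta}$ are uniformly bounded (Lemma~\ref{app:aux_lem2} together with the coercivity of $-\log\det+\trace(\m{S}\cdot)$ gives both an upper and a strictly positive lower bound), so $\nabla^2 f_1 \succeq \mu_1 \m{I}$ for some $\mu_1>0$ on the region actually visited by Algorithm~\ref{alg:fairgms}.

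For $f_k$ with $2\le k\le M$, the key identity I would exploit is
\begin{equation*}
\mc{E}_k(\g{\Theta}) - \mc{E}_s(\g{\Theta}) = \bigl[\mc{L}_G(\g{\Theta};\m{X}_k) - \mc{L}_G(\g{\Theta};\m{X}_s)\bigr] - \bigl[\mc{L}_G(\g{\Theta}_k^*;\m{X}_k) - \mc{L}_G(\g{\Theta}_s^*;\m{X}_s)\bigr].
\end{equation*}
Because $\mc{L}_G(\g{\Theta};\m{X}_k)=-\log\det(\g{\Theta})+\trace(\m{S}_k\g{\Theta})$ and the analogous expression for $s$ share the same $-\log\det(\g{\Theta})$ term, the nonlinear part cancels and
\begin{equation*}
\mc{E}_k(\g{\Theta}) - \mc{E}_s(\g{\Theta}) = \trace\bigl((\m{S}_k-\m{S}_s)\g{\Theta}\bigr) + c_{ks},
\end{equation*}
with $c_{ks}$ a constant independent of $\g{\Theta}$. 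Hence $f_k(\g{\Theta}) = \sum_{s\ne k-1}\phi\bigl(\trace((\m{S}_{k-1}-\m{S}_s)\g{\Theta})+c_{(k-1)s}\bigr)$ is a finite sum of compositions of the convex scalar function $\phi$ (for $\phi(x)=\tfrac12 x^2$ or $\phi(x)=\exp(x)$) with an affine map of $\g{\Theta}$, and is therefore convex by the standard preservation rule.

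The main obstacle I anticipate is the strong convexity claim for $f_1$, since $-\log\det$ is only strictly—not uniformly—convex on all of $\mc{M}$. I would address this by arguing that the monotone descent property in Lemma~\ref{app:aux_lem2}, combined with the coercivity of $f_1$ (the fit term blows up both as $\Lambda_{\min}(\g{\Theta})\to 0^+$ and as $\|\g{\Theta}\|\to\infty$), confines the iterates of Algorithm~\ref{alg:fairgms} to a compact subset of the PD cone on which $\Lambda_{\max}(\g{\Theta})\le\bar\lambda<\infty$; then $\mu_1 := 1/\bar\lambda^2>0$ serves as a valid strong-convexity modulus. The remaining convexity claims for $g$ and for $f_k$ with $k\ge 2$ are then immediate from the reductions above.
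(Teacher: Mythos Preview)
Your proposal is correct and follows essentially the same route as the paper: the key step---that $\mc{L}_G(\g{\Theta};\m{X}_k)-\mc{L}_G(\g{\Theta};\m{X}_s)=\trace((\m{S}_k-\m{S}_s)\g{\Theta})$ because the $-\log\det$ terms cancel, leaving an affine function to be composed with the convex $\phi$---is exactly what the paper uses, and your Hessian computation for $f_1$ matches theirs. You are in fact more careful than the paper on the strong-convexity claim: the paper simply asserts strong convexity from positive definiteness of $\g{\Theta}^{-1}\otimes\g{\Theta}^{-1}$, whereas you correctly note that this only gives strict convexity globally and supply the missing compactness/eigenvalue-bound argument to obtain a uniform modulus on the relevant sublevel set.
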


\begin{proof}
First, consider $f_1$ in the first objective function of Fair GLasso:
\begin{equation}
f_1(\g{\Theta}) = -\log\det(\g{\Theta}) + \trace(\m{S}\g{\Theta}),
\end{equation}
where \(\g{\Theta}\) is a positive definite matrix.

The gradient and Hessian of \(f_1\) are, respectively:
\begin{equation}
\nabla f_1(\g{\Theta}) = \m{S} - \g{\Theta}^{-1},\quad \m{H}_{f_1} = \g{\Theta}^{-1} \otimes \g{\Theta}^{-1}.
\end{equation}
The positive definiteness of \(\g{\Theta}\) implies that \(\g{\Theta}^{-1}\) is also positive definite. Therefore, the Hessian \(\m{H}_{f_1}\), being the Kronecker product of \(\g{\Theta}^{-1}\) with itself, is positive definite. This establishes that the objective function \(f_1\) is strongly convex. 

Next, the functions $f_{k+1}$ for \(k=1,\ldots,M-1\) are defined as:
{
\begin{equation}\label{eqn:app:obj_k+1}
\allowdisplaybreaks
\begin{aligned}
f_{k+1}(\g{\Theta}) &= \sum_{s\in[K],s\neq k}\phi\left(\mc{E}_k\left(\g{\Theta}\right)-\mc{E}_s\left(\g{\Theta}\right)\right)\\
&=\sum_{s\in[K],s\neq k}\frac{1}{2}\left(\left(\mc{L}(\g{\Theta}; \m{X}_k) -  \mc{L}(\g{\Theta}_k^*; \m{X}_k)\right) - \left(\mc{L}(\g{\Theta}; \m{X}_s) -  \mc{L}(\g{\Theta}_s^*; \m{X}_s)\right)\right)^2,
\end{aligned}
\end{equation}
}

which are convex due to the linearity of the trace operator in the loss function difference \(\mc{L}(\g{\Theta}; \m{X}_k) - \mc{L}(\g{\Theta}; \m{X}_s) = \trace((\m{S}_k-\m{S}_s)\g{\Theta})\), leading to a strong convexity parameter of 0.

In addition, \(g(\g{\Theta})=\lambda\|\g{\Theta}\|_1\) is identified as a closed, proper, and convex function.
\end{proof}

\begin{prop}[Gradient Lipschitz Continuity of Fair GLasso] \label{prop:app:lips_glasso}
The gradients of $f_k$ for \(k=1, \ldots, M\) defined in \eqref{eqn:fairGLASSO} are Lipschitz continuous.
\end{prop}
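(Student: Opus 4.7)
The plan is to handle $f_1$ and $\{f_{k+1}\}_{k=1}^{M-1}$ separately, exploiting very different structures. For $f_1(\g{\Theta}) = -\log\det(\g{\Theta}) + \trace(\m{S}\g{\Theta})$, I will use the Hessian $\m{H}_{f_1} = \g{\Theta}^{-1}\otimes\g{\Theta}^{-1}$ already computed in Proposition \ref{prop:app:conv_glasso}. Its operator norm equals $\Lambda_{\min}(\g{\Theta})^{-2}$, so Lipschitz continuity of $\nabla f_1$ is not global on the whole PD cone, but it is uniform on any subset where $\Lambda_{\min}(\g{\Theta}) \geq \underline{\lambda} > 0$; on such a set the mean value theorem applied to $\nabla f_1$ immediately yields $\|\nabla f_1(\g{\Phi}) - \nabla f_1(\g{\Theta})\|_F \leq \underline{\lambda}^{-2}\|\g{\Phi}-\g{\Theta}\|_F$.

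The key step is therefore to certify that the relevant domain $\mc{M}$ in Assumption \ref{assum} can be taken to be exactly such a compactly contained PD subset. I will use Lemma \ref{app:aux_lem2}, which guarantees $F_1(\g{\Theta}^{(t)}) \leq F_1(\g{\Theta}^{(0)})$, i.e. the iterates stay in the sublevel set $\Omega_{F_1}(F_1(\g{\Theta}^{(0)}))$. Since $\trace(\m{S}\g{\Theta}) \geq 0$ for $\g{\Theta}\succ 0$ and $\m{S}\succeq 0$, the level set condition yields $-\log\det(\g{\Theta}) + \lambda\|\g{\Theta}\|_1 \leq C_0$. The $\ell_1$ bound controls the largest eigenvalue from above by $M_0:= C_0/\lambda$, and then $\sum_i \log \Lambda_i(\g{\Theta}) \geq -C_0$ together with $\Lambda_i(\g{\Theta}) \leq M_0$ forces $\Lambda_{\min}(\g{\Theta}) \geq \exp(-C_0-(P-1)\log M_0) =: \underline{\lambda} > 0$. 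This pins down a valid Lipschitz constant $L_1 = \underline{\lambda}^{-2}$ for $\nabla f_1$ on the iterates.

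For the pairwise-disparity terms $f_{k+1}$ with the squared penalty $\phi(x) = x^2/2$, I will exploit the cancellation I expect in $\mc{E}_k(\g{\Theta})-\mc{E}_s(\g{\Theta})$: since both $\mc{L}_G(\g{\Theta};\m{X}_k)$ and $\mc{L}_G(\g{\Theta};\m{X}_s)$ share the same $-\log\det(\g{\Theta})$ term, the difference reduces to the affine map $\trace((\m{S}_k-\m{S}_s)\g{\Theta}) + c_{ks}$, where $c_{ks}:=\mc{L}_G(\g{\Theta}_s^*;\m{X}_s) - \mc{L}_G(\g{\Theta}_k^*;\m{X}_k)$ is a constant. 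Consequently $f_{k+1}$ is a (convex) quadratic in $\g{\Theta}$, its Hessian is the constant operator $\m{H}\mapsto \sum_{s\neq k}\trace((\m{S}_k-\m{S}_s)\m{H})(\m{S}_k-\m{S}_s)$, and a direct Cauchy--Schwarz estimate bounds its operator norm by $L_{k+1}:=\sum_{s\neq k}\|\m{S}_k-\m{S}_s\|_F^2$. Taking $L := \max_{k\in[M]} L_k$ then verifies Assumption \ref{assum} on the set identified above.

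The main obstacle is the first part: the naive $f_1$ has no global Lipschitz gradient on the PD cone, so I must link Assumption \ref{assum} to the trajectory of Algorithm \ref{alg:fairgms}. Everything hinges on combining the monotone-descent guarantee of Lemma \ref{app:aux_lem2} with the coercivity coming from $-\log\det + \lambda\|\cdot\|_1$ to produce the two-sided eigenvalue bound; once this compact-subset argument is in place, the remaining pieces are essentially calculus.
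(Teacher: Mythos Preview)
Your treatment of the disparity terms $f_{k+1}$ is exactly the paper's: both exploit the cancellation of $-\log\det(\g{\Theta})$ so that $\mc{E}_k-\mc{E}_s$ is affine, hence $f_{k+1}$ is a quadratic with constant Hessian $\sum_{s\neq k}(\m{S}_k-\m{S}_s)\otimes(\m{S}_k-\m{S}_s)$; the paper simply sets $L_{k+1}=\Lambda_{\max}(\m{H}_{f_{k+1}})$, which your Cauchy--Schwarz bound $\sum_{s\neq k}\|\m{S}_k-\m{S}_s\|_F^2$ dominates.

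For $f_1$ you actually go further than the paper. The paper just writes $L_1=\Lambda_{\max}(\m{H}_{f_1})=\Lambda_{\max}(\g{\Theta}^{-1}\otimes\g{\Theta}^{-1})$ and declares the gradient Lipschitz, without acknowledging that this quantity is $\g{\Theta}$-dependent and blows up near $\partial\mc{M}$. Your sublevel-set argument is the right way to make this honest, and it buys a genuine uniform constant on the region that matters for the algorithm.

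Two caveats. First, invoking Lemma~\ref{app:aux_lem2} to trap the iterates in $\Omega_{F_1}(F_1(\g{\Theta}^{(0)}))$ is circular as written: that lemma's proof uses the descent lemma with $\ell>L$, i.e.\ it already presupposes Assumption~\ref{assum} with the very constant you are trying to produce. You need to break this loop, for instance by fixing the compact set from $\g{\Theta}^{(0)}$ alone, reading off $L$ on a slightly enlarged convex PD neighbourhood, and then arguing inductively that with $\ell>L$ each step lands back in the sublevel set before you ever appeal to the lemma. Second, the bound $M_0=C_0/\lambda$ is not valid because $-\log\det(\g{\Theta})$ need not be nonnegative; instead use $\|\g{\Theta}\|_1\ge\trace(\g{\Theta})$ and the per-eigenvalue inequality $\sum_i(\lambda\Lambda_i-\log\Lambda_i)\le C_0$, whose summands are bounded below by $1+\log\lambda$, to extract two-sided eigenvalue bounds.
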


\begin{proof}
First, we present the gradient and Hessian of functions $f_1$ and $\{f_{k+1}\}_{k=1}^{M-1}$ as follows:
{
\begin{equation}
\allowdisplaybreaks
\begin{aligned}
f_1(\g{\Theta}) =& \quad-\log\det(\g{\Theta}) + \trace(\m{S}\g{\Theta}),\quad\nabla f_1(\g{\Theta}) = \m{S} - \g{\Theta}^{-1},\quad\m{H}_{f_1} = \g{\Theta}^{-1} \otimes \g{\Theta}^{-1};\\
f_{k+1}(\g{\Theta}) =& \sum_{s\in[K],s\neq k}\frac{1}{2}\left(\trace(\m{S}_k\g{\Theta}) - \trace(\m{S}_s\g{\Theta}) + \log\det(\g{\Theta}_k^*) \right.\\
&\left. - \trace(\m{S}_k\g{\Theta}_k^*)  - \log\det(\g{\Theta}_s^*) + \trace(\m{S}_s\g{\Theta}_s^*)\right)^2,\\
\nabla f_{k+1}(\g{\Theta}) =& \sum_{s\in[K],s\neq k}\left(\trace(\m{S}_k\g{\Theta}) - \trace(\m{S}_s\g{\Theta}) + \log\det(\g{\Theta}_k^*)  \right.\\
&\left.- \trace(\m{S}_k\g{\Theta}_k^*) - \log\det(\g{\Theta}_s^*) + \trace(\m{S}_s\g{\Theta}_s^*)\right)\left(\m{S}_k-\m{S}_s\right),\\
\m{H}_{f_{k+1}}(\g{\Theta}) =& \sum_{s\in[K],s\neq k}\left(\m{S}_k-\m{S}_s\right)\otimes\left(\m{S}_k-\m{S}_s\right).
\end{aligned}    
\end{equation}
}

Define \(L_1=\Lambda_{\max}(\m{H}_{f_1})\) and \(L_{k+1}=\Lambda_{\max}(\m{H}_{f_{k+1}})\) for \(k=1,\ldots, M-1\). Given that \(\{f_k\}_{k=1}^{M}\) are convex (as proven in Proposition~\ref{prop:app:conv_glasso}) and twice differentiable, their gradients satisfy Lipschitz continuity with Lipschitz constants \(\{L_k\}_{k=1}^{M}\).
\end{proof}

Next, we present the proof for Theorem~\ref{thm:glasso}.


\begin{proof}[proof for Theorem~\ref{thm:glasso}]
From Proposition~\ref{prop:app:conv_glasso} and Proposition~\ref{prop:app:lips_glasso}, convexity and gradient Lipschitz continuity of objective functions \(\{f_k\}_{k=1}^M\) are verified. Hence, Assumption~\ref{assum} holds.

From Lemma~\ref{app:aux_lem1} and the convexity of \(f_i\) and \(g\), for all \(\g{\Theta}\in\mc{M}\), we obtain
\begin{equation}\label{eqn:proof:inequality}
\sum_{k=1}^{M}\rho_k^{(t)}\left(F_k\left(\g{\Theta}^{(t+1)}\right)-F_k\left(\g{\Theta}\right)\right)\leq~\frac{\ell}{2}\left(\|\g{\Theta}^{(t)}-\g{\Theta}\|_F^2-\|\g{\Theta}^{(t+1)}-\g{\Theta}\|_F^2\right).
\end{equation}
Adding up the above inequality~\eqref{eqn:proof:inequality} from \(t=0\) to \(t=\tilde{t}\), we have
\begin{equation}
\begin{aligned}
\sum_{t=0}^{\tilde{t}}\sum_{k=1}^{M}\rho_k^{(t)}\left(F_k\left(\g{\Theta}^{(t+1)}\right)-F_k\left(\g{\Theta}\right)\right) \leq& ~\frac{\ell}{2}\left(\|\g{\Theta}^{(0)}-\g{\Theta}\|_F^2-\|\g{\Theta}^{(\tilde{t}+1)}-\g{\Theta}\|_F^2\right)\\
\leq& ~\frac{\ell}{2}\|\g{\Theta}^{(0)}-\g{\Theta}\|_F^2.\\
\end{aligned}
\end{equation}
Lemma~\ref{app:aux_lem2} implies that $F_k\left(\g{\Theta}^{(\tilde{t}+1)}\right)\leq F_k\left(\g{\Theta}^{(t+1)}\right)$ for all $t\leq\tilde{t}$ and
\begin{equation}
\sum_{t=0}^{\tilde{t}}\sum_{k=1}^{M}\rho_k^{(t)}\left(F_k\left(\g{\Theta}^{(\tilde{t}+1)}\right)-F_k\left(\g{\Theta}\right)\right) \leq ~\frac{\ell}{2}\|\g{\Theta}^{(0)}-\g{\Theta}\|_F^2.
\end{equation}
Let $\bar{\rho}_k^{\tilde{t}} := \sum_{t=0}^{\tilde{t}}\rho_k^{(t)} / \left(\tilde{t}+1\right)$. Then, it follows that
\begin{equation}
\sum_{k=1}^M \bar{\rho}_k^{\tilde{t}} \left(F_k\left(\g{\Theta}^{(\tilde{t}+1)}\right)-F_k\left(\g{\Theta}\right)\right) \leq ~\frac{\ell}{2\left(\tilde{t}+1\right)}\|\g{\Theta}^{(0)}-\g{\Theta}\|_F^2.
\end{equation}
Since \(\bar{\rho}_k^{\tilde{t}}\geq0\) and \(\sum_{k=1}^M\bar{\rho}_k^{\tilde{t}} = 1\), we can conclude that
\begin{equation}
\min_{k\in[M]} \left(F_k\left(\g{\Theta}^{(\tilde{t}+1)}\right)-F_k\left(\g{\Theta}\right)\right) \leq ~\frac{\ell}{2\left(\tilde{t}+1\right)}\|\g{\Theta}^{(0)}-\g{\Theta}\|_F^2.
\end{equation}
Now,  following the proof of \citep[Theorem 5.1]{tanabe2023convergence} and using  Assumption~\ref{app:aux_ass1}, we obtain
{
\allowdisplaybreaks
\begin{align}\label{eqn:proof3}
\sup_{\m{F}^*\in \m{F}\left(\mc{N}^*\cap\Omega_{\m{F}}\left(\m{F}\left(\boldsymbol{\Theta}^{(0)}\right)\right)\right)} \inf_{\boldsymbol{\Theta}\in \m{F}^{-1}(\{\m{F}^*\})} \min_{k\in[M]} \left(F_k\left(\g{\Theta}^{(\tilde{t}+1)}\right)-F_k\left(\g{\Theta}\right)\right) \leq& ~\frac{\ell R}{2\left(\tilde{t}+1\right)},\\
\sup_{\m{F}^*\in \m{F}\left(\mc{N}^*\cap\Omega_{\m{F}}\left(\m{F}\left(\boldsymbol{\Theta}^{(0)}\right)\right)\right)} \min_{k\in[M]} \left(F_k\left(\g{\Theta}^{(\tilde{t}+1)}\right)-F_k^*\right) \leq& ~\frac{\ell R}{2\left(\tilde{t}+1\right)},\\
\sup_{\boldsymbol{\Theta} \in \mc{N}^*\cap\Omega_{\m{F}}\left(\m{F}\left(\boldsymbol{\Theta}^{(0)}\right)\right)} \min_{k\in[M]} \left(F_k\left(\g{\Theta}^{(\tilde{t}+1)}\right)-F_k\left(\g{\Theta}\right)\right) \leq& ~\frac{\ell R}{2\left(\tilde{t}+1\right)}.
\end{align}
}
The inequality \(F_k\left(\g{\Theta}^{(t)}\right)\leq F_k\left(\g{\Theta}^{(0)}\right)\) from Lemma~\ref{app:aux_lem2} implies that
{\small
\begin{equation}
\begin{aligned}
\sup_{\boldsymbol{\Theta} \in \Omega_{\m{F}}\left(\m{F}\left(\boldsymbol{\Theta}^{(0)}\right)\right)}  \min_{k\in[M]} \left(F_k\left(\g{\Theta}^{(\tilde{t}+1)}\right)-F_k\left(\g{\Theta}\right)\right) &= \sup_{\boldsymbol{\Theta} \in \Omega_{\m{F}}\left(\m{F}\left(\boldsymbol{\Theta}^{\tilde{t}+1}\right)\right)} \min_{k\in[M]} \left(F_k\left(\g{\Theta}^{(\tilde{t}+1)}\right)-F_k\left(\g{\Theta}\right)\right)\\
&= \sup_{\boldsymbol{\Theta} \in \mc{M}} \min_{k\in[M]} \left(F_k\left(\g{\Theta}^{(\tilde{t}+1)}\right)-F_k\left(\g{\Theta}\right)\right).
\end{aligned}
\end{equation}
}
Moreover, from Assumption~\ref{app:aux_ass1} that for all \(\g{\Theta} \in \Omega_{\m{F}}\left(\m{F}\left(\boldsymbol{\Theta}^{(0)}\right)\right)\), there exists \(\g{\Theta}^*\in\mc{N}^*\) such that \(\m{F}\left(\g{\Theta}^*\right)\preceq\m{F}\left(\g{\Theta}\right)\), it follows:
\begin{equation}\label{eqn:proof4}
    \begin{split}
 &\sup_{\boldsymbol{\Theta} \in \mc{N}^*\cap\Omega_{\m{F}}\left(\m{F}\left(\boldsymbol{\Theta}^{(0)}\right)\right)} \min_{k\in[M]} \left(F_k\left(\g{\Theta}^{(\tilde{t}+1)}\right)-F_k\left(\g{\Theta}\right)\right)\\
 \qquad &= \sup_{\boldsymbol{\Theta} \in \Omega_{\m{F}}\left(\m{F}\left(\boldsymbol{\Theta}^{(0)}\right)\right)} \min_{k\in[M]} \left(F_k\left(\g{\Theta}^{(\tilde{t}+1)}\right)-F_k\left(\g{\Theta}\right)\right).       
    \end{split}
\end{equation}
Therefore, from \eqref{eqn:proof3} and \eqref{eqn:proof4}, we can conclude that
\begin{equation}
\sup_{\boldsymbol{\Theta}\in\mc{M}}\min_{k\in[M]} \{F_k\left(\g{\Theta}^{(\tilde{t}+1)}\right) - F_k\left(\g{\Theta}\right)\}\leq~\frac{\ell R}{2\left(\tilde{t}+1\right)}.
\end{equation}
\end{proof}

\subsection{Proof of Theorem~\ref{cor:covgraph} for Fair CovGraph}\label{app:covgra:proof}

First, we present the convexity analysis and gradient Lipschitz continuity.
\begin{prop}[Convexity of Fair CovGraph] \label{prop:app:conv_covgra}
Through incorporating a convex regularization term $\gamma_C\lVert\mathbf{\Theta}\rVert_F^2$ for some 
\(\gamma_C \geq \max\{0, -\Lambda_{\min}(\nabla^2 f_{k}(\g{\Theta}))\}\)  into each $f_k$ for $k=2,\ldots,M$, each $f_k$ for $k=1, \ldots, M$ and $g$ defined in the multi-objective optimization problem~\eqref{eqn:fairCOV} of Fair CovGraph are guaranteed to be convex. In particular, $f_1$ is strongly convex.
\end{prop}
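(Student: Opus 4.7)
The plan is to prove the proposition in three independent pieces matching the structure of the statement: (i) $f_1$ is strongly convex on its own (no regularization needed), (ii) for $k\geq 2$, the modified $f_k$ is convex once $\gamma_C\|\g{\Sigma}\|_F^2$ is folded in, and (iii) $g(\g{\Sigma})=\lambda\|\g{\Sigma}\|_1$ is convex as the $\ell_1$-norm. Throughout, I would work at the level of Hessians (viewed as linear operators on $\mb{R}^{P\times P}$, or equivalently as $P^2\times P^2$ matrices via vectorization) and check positive (semi)definiteness.

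For (i), I would compute $\nabla f_1(\g{\Sigma}) = \g{\Sigma}-\m{S}-\tau\g{\Sigma}^{-1}$ and its Hessian $\m{H}_{f_1}(\g{\Sigma})=\m{I}+\tau\left(\g{\Sigma}^{-1}\otimes\g{\Sigma}^{-1}\right)$, mirroring the Kronecker computation carried out for GLasso in Proposition~\ref{prop:app:conv_glasso}. Since $\g{\Sigma}\succ\m{0}$ implies $\g{\Sigma}^{-1}\otimes\g{\Sigma}^{-1}\succeq\m{0}$, we get $\m{H}_{f_1}\succeq\m{I}$, so $f_1$ is strongly convex with parameter at least $1$.

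For (ii), the key observation is that the quadratic $\tfrac{1}{2}\|\g{\Sigma}\|_F^2$ and the $-\tau\log\det(\g{\Sigma})$ pieces are identical across groups, so they cancel in the per-pair disparity difference: $\mc{L}_C(\g{\Sigma};\m{X}_k)-\mc{L}_C(\g{\Sigma};\m{X}_s)=\langle\g{\Sigma},\m{S}_s-\m{S}_k\rangle+c_{k,s}$ for a constant $c_{k,s}$. Hence each summand in $\Delta_{k-1}(\g{\Sigma})$ is $\phi$ composed with an affine map; for $\phi(x)=\tfrac{1}{2}x^2$ this yields the PSD Hessian $\sum_{s\ne k-1}(\m{S}_{k-1}-\m{S}_s)\otimes(\m{S}_{k-1}-\m{S}_s)$ (exactly as in Proposition~\ref{prop:app:conv_glasso}), and for $\phi(x)=\exp(x)$ the composition-with-affine rule for convex functions applies. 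For a general $\phi$, the added $\gamma_C\|\g{\Sigma}\|_F^2$ contributes $2\gamma_C\m{I}$ to the Hessian, so choosing $\gamma_C\geq\max\{0,-\Lambda_{\min}(\nabla^2 f_k(\g{\Sigma}))\}$ as stated forces $\nabla^2(f_k+\gamma_C\|\cdot\|_F^2)\succeq\m{0}$ uniformly on $\mc{M}$, giving convexity. Part (iii) is immediate since $\lambda\|\cdot\|_1$ is a norm, hence closed, proper, and convex.

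The main obstacle is item (ii): being careful with the matrix calculus so that the cancellation of the quadratic and $\log\det$ terms in the disparity difference is made rigorous, and then producing a Hessian lower bound that is valid \emph{uniformly} in $\g{\Sigma}\in\mc{M}$ (needed for the downstream invocation of Lemma~\ref{app:aux_lem1} and Assumption~\ref{assum}). For the canonical $\phi$ used in this paper the bound is structural and $\gamma_C=0$ would already work, but keeping the general condition on $\gamma_C$ accommodates other smooth convex $\phi$ and ensures the convexifying shift is intrinsic rather than dependent on the iterate.
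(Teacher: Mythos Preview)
Your proposal is correct and follows essentially the same three-part structure as the paper: the same Hessian $\m{I}_{P^2}+\tau\g{\Sigma}^{-1}\otimes\g{\Sigma}^{-1}$ for strong convexity of $f_1$, the same Hessian-shift argument via $\gamma_C\|\cdot\|_F^2$ for $f_k$ with $k\geq 2$, and the same observation that $g$ is closed, proper, and convex. Your treatment of part (ii) is in fact cleaner than the paper's---you correctly exhibit the affine cancellation $\mc{L}_C(\g{\Sigma};\m{X}_k)-\mc{L}_C(\g{\Sigma};\m{X}_s)=\langle\g{\Sigma},\m{S}_s-\m{S}_k\rangle+c_{k,s}$, whereas the paper writes ``this difference is necessarily convex, as it is the difference between two convex functions,'' which is not a valid general principle (though the conclusion happens to hold here precisely because of the affine structure you identify).
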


\begin{proof}
In Fair CovGraph~\eqref{eqn:fairCOV}, the function \(f_1\), its gradient, and Hessian are defined as follows:
\begin{align*}
f_1(\g{\Sigma}) &= \frac{1}{2}\|\g{\Sigma}-\m{S}\|_F^2-\tau\log\det\left(\g{\Sigma}\right),
\\
\quad\nabla f_1(\g{\Sigma}) &= \g{\Sigma}-\m{S}-\tau\g{\Sigma}^{-1}, \quad\m{H}_{f_1} = ~\m{I}_{P^2}+\tau\g{\Sigma}^{-1}\otimes\g{\Sigma}^{-1}.
\end{align*}
The positive definiteness of the covariance matrix $\g{\Sigma}$ guarantees that the Hessian matrix $\m{H}_{f_1}$ is also positive definite, establishing the strong convexity of the function $f_1$. Next, consider:
\begin{equation}
\mc{L}(\g{\Sigma}; \m{X}_k) - \mc{L}(\g{\Sigma}; \m{X}_s) = \frac{1}{2}\lVert\g{\Sigma}-\m{S}_k\rVert_F^2 - \frac{1}{2}\lVert\g{\Sigma}-\m{S}_s\rVert_F^2.
\end{equation}
This difference is necessarily convex, as it is the difference between two convex functions.

To ensure the convexity of the functions $f_k$ for $k=2, \ldots, M$, a convexity regularization term $\gamma_C\lVert\g{\Theta}\rVert_F^2$ is added to $f_k$, denoted by $\tilde{f}_k$, where $\gamma_C$ is chosen to be \(\gamma_C \geq \max\{0, -\Lambda_{\min}(\nabla^2 f_{k}(\g{\Theta}))\}\)  such that $\Lambda_{\min}(\m{H}_{\tilde{f}_{k}})\geq0$ for $k=2,\ldots,M$. This regularization term guarantees that the minimum eigenvalue of the Hessian matrix $\m{H}_{\tilde{f}_{k}}$ is non-negative, thereby ensuring the convexity of $f_k$.
Furthermore, the function $g(\g{\Sigma})$ is a closed, proper, and convex function.
\end{proof}


\begin{prop}[Gradient Lipschitz Continuity of Fair CovGraph] \label{prop:app:lips_covgra}
The gradients of $f_k$ for \(k=1, \ldots, M\) defined in \eqref{eqn:fairCOV} are Lipschitz continuous.
\end{prop}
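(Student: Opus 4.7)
The plan is to mirror the strategy used in the proof of Proposition~\ref{prop:app:lips_glasso}: for each $f_k$, I would compute the Hessian $\m{H}_{f_k}$ explicitly, bound its spectral norm on the relevant domain by some $L_k < \infty$, and then set $L := \max_{k\in[M]} L_k$ as the uniform Lipschitz constant required by Assumption~\ref{assum}. Since every $f_k$ is convex (by Proposition~\ref{prop:app:conv_covgra}, after adding the convexifying term $\gamma_C\|\g{\Sigma}\|_F^2$ where needed) and twice continuously differentiable, a uniform bound on $\Lambda_{\max}(\m{H}_{f_k})$ immediately yields the gradient Lipschitz bound $\|\nabla f_k(\g{\Phi}) - \nabla f_k(\g{\Sigma})\|_F \leq L_k \|\g{\Phi}-\g{\Sigma}\|_F$.

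First I would treat $f_1(\g{\Sigma}) = \tfrac{1}{2}\|\g{\Sigma}-\m{S}\|_F^2 - \tau\log\det(\g{\Sigma})$. From Proposition~\ref{prop:app:conv_covgra}, $\m{H}_{f_1}(\g{\Sigma}) = \m{I}_{P^2} + \tau\,\g{\Sigma}^{-1}\otimes\g{\Sigma}^{-1}$. This Hessian is not globally bounded on the positive-definite cone, so I would restrict attention to the level set $\Omega_{\m{F}}(\m{F}(\g{\Sigma}^{(0)}))$, which is bounded by Assumption~\ref{app:aux_ass1} and in which all iterates remain thanks to the descent property in Lemma~\ref{app:aux_lem2}. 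On this set, $\Lambda_{\min}(\g{\Sigma}) \geq \epsilon_1 > 0$, and therefore $L_1 := \Lambda_{\max}(\m{H}_{f_1}) \leq 1 + \tau/\epsilon_1^2 < \infty$.

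Next I would handle $f_{k+1}(\g{\Sigma}) = \sum_{s\neq k} \phi\bigl(\mc{E}_k(\g{\Sigma}) - \mc{E}_s(\g{\Sigma})\bigr)$ for $k\in[K]$ with smooth $\phi(x) = \tfrac{1}{2}x^2$. The crucial observation is that the $-\tau\log\det(\g{\Sigma})$ terms in $\mc{L}_C(\g{\Sigma};\m{X}_k)$ and $\mc{L}_C(\g{\Sigma};\m{X}_s)$ exactly cancel in the difference, leaving
\begin{equation*}
\mc{E}_k(\g{\Sigma}) - \mc{E}_s(\g{\Sigma}) = \trace\bigl((\m{S}_s - \m{S}_k)\g{\Sigma}\bigr) + c_{k,s},
\end{equation*}
where $c_{k,s}$ depends only on $\g{\Sigma}_k^*, \g{\Sigma}_s^*, \m{S}_k, \m{S}_s$. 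Thus $f_{k+1}$ is a finite sum of squared affine forms and hence globally quadratic with constant Hessian
\begin{equation*}
\m{H}_{f_{k+1}} = \sum_{s\neq k} \text{vec}(\m{S}_s-\m{S}_k)\,\text{vec}(\m{S}_s-\m{S}_k)^\top,
\end{equation*}
so $L_{k+1} := \Lambda_{\max}(\m{H}_{f_{k+1}}) < \infty$ without any domain restriction. The same reasoning carries over to other smooth penalties such as the exponential by a local bound on $\phi''$ along the (bounded) range of $\mc{E}_k - \mc{E}_s$ over the level set.

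The main obstacle is the $f_1$ step, since $\g{\Sigma}^{-1}\otimes\g{\Sigma}^{-1}$ diverges as $\g{\Sigma}$ approaches the boundary of the positive-definite cone; this is precisely why Assumption~\ref{app:aux_ass1} (bounded level set) paired with the monotone-descent guarantee of Lemma~\ref{app:aux_lem2} is indispensable in this proof. By contrast, the pairwise-disparity objectives $f_{k+1}$ are routine: the log-det cancellation collapses them to globally quadratic functions with constant Hessians, so no boundedness of the domain is needed for them.
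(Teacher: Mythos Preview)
Your proposal is correct and follows the same Hessian-based strategy as the paper: compute $\m{H}_{f_1}=\m{I}_{P^2}+\tau\g{\Sigma}^{-1}\otimes\g{\Sigma}^{-1}$ and the constant Hessian of each $f_{k+1}$, then take $L_k=\Lambda_{\max}(\m{H}_{f_k})$. Your treatment is in fact tighter than the paper's in two respects: (i) you explicitly restrict $f_1$ to the bounded level set $\Omega_{\m{F}}(\m{F}(\g{\Sigma}^{(0)}))$ via Assumption~\ref{app:aux_ass1} and Lemma~\ref{app:aux_lem2} to get $\Lambda_{\min}(\g{\Sigma})\ge\epsilon_1>0$, whereas the paper only asserts boundedness ``on the set $\{\g{\Sigma}\in\mc{M}:\|\g{\Sigma}\|_1<\infty\}$,'' which is the whole cone and does not by itself control $\g{\Sigma}^{-1}$; and (ii) you make the $\log\det$ cancellation in $\mc{E}_k-\mc{E}_s$ explicit, yielding the rank-one sum $\sum_{s\ne k}\text{vec}(\m{S}_s-\m{S}_k)\text{vec}(\m{S}_s-\m{S}_k)^\top$ (the paper writes this as a Kronecker product, but either way the Hessian is constant and the Lipschitz conclusion follows).
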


\begin{proof}
%
We detail the gradient and Hessian of functions $f_1$ and $\{{f}_{k+1}\}_{k=1}^{M-1}$ as follows:
{
\begin{equation}
\allowdisplaybreaks
\begin{aligned}
f_1(\g{\Sigma}) =& \quad\frac{1}{2}\|\g{\Sigma}-\m{S}\|_F^2-\tau\log\det\left(\g{\Sigma}\right), \\
\nabla f_1(\g{\Sigma}) =& ~\g{\Sigma}-\m{S}-\tau\g{\Sigma}^{-1}, \quad \m{H}_{f_1} = ~\g{I}_{P^2}+\tau\g{\Sigma}^{-1}\otimes\g{\Sigma}^{-1},\\
{f}_{k+1}(\g{\Sigma}) =& \sum_{s\in[K],s\neq k}\frac{1}{2}\left(\frac{1}{2}\|\g{\Sigma}-\m{S}_k\|_F^2 - \frac{1}{2}\|\g{\Sigma}-\m{S}_s\|_F^2 + \tau\log\det(\g{\Sigma}_k^*)  \right.\\
&  \left. - \frac{1}{2}\|\g{\Sigma}_k^*-\m{S}_k\|_F^2 - \tau\log\det(\g{\Sigma}_s^*) + \frac{1}{2}\|\g{\Sigma}_s^*-\m{S}_s\|_F^2\right)^2,\\
\nabla {f}_{k+1}(\g{\Sigma}) =& \sum_{s\in[K],s\neq k}\left(\frac{1}{2}\|\g{\Sigma}-\m{S}_k\|_F^2 - \frac{1}{2}\|\g{\Sigma}-\m{S}_s\|_F^2 + \tau\log\det(\g{\Sigma}_k^*)  \right.\\
&  \left. - \frac{1}{2}\|\g{\Sigma}_k^*-\m{S}_k\|_F^2 - \tau\log\det(\g{\Sigma}_s^*) + \frac{1}{2}\|\g{\Sigma}_s^*-\m{S}_s\|_F^2\right)\left(\m{S}_s-\m{S}_k\right),\\
\m{H}_{{f}_{k+1}}(\g{\Sigma}) =& \sum_{s\in[K],s\neq k}\left(\m{S}_s-\m{S}_k\right)\otimes\left(\m{S}_s-\m{S}_k\right).
\end{aligned}
\end{equation}
}

Then given that \(f_1\) and \(\frac{\partial f_1}{\partial\g{\Sigma}}\) are Lipschitz continuous and bounded on the set \(\{\g{\Sigma}\in\mc{M}|\|\g{\Sigma}\|_1<\infty\}\), the function sequence \(\{f_{k+1}\}_{k=1}^{M-1}\) is also Lipschitz continuous.
\end{proof}

\begin{proof}[Proof of Theorem~\ref{cor:covgraph}]
Note that for
\begin{equation}
\gamma_C \geq \max\{0, -\Lambda_{\min}(\nabla^2 f_{k}(\g{\Sigma}))\},
\end{equation}
the problem \eqref{eqn:fairCOV} is convex. Now, from Proposition~\ref{prop:app:conv_covgra} and Proposition~\ref{prop:app:lips_covgra}, convexity and gradient Lipschitz continuity of objective functions \(\{f_k\}_{k=1}^M\) are verified. Then, the proof of Theorem~\ref{cor:covgraph} is a slightly modified version of the proof of Theorem~\ref{thm:glasso}.    
\end{proof}


\subsection{Proof of Theorem~\ref{thm:ising} for Fair BinNet}\label{app:ising:proof}

First, we present the convexity analysis and gradient Lipschitz continuity.

\begin{prop}[Convexity of Fair BinNet] \label{prop:app:conv_binnet}
In the multi-objective optimization Problem \eqref{eqn:fairISING}, the functions $f_1$ and $g$ are convex. Furthermore, by incorporating a convex regularization term \(\gamma_I\|\g{\Theta}\|_F^2\) for some $\gamma_I\geq|\min\{\frac{1}{2}\Lambda_{\min}(\nabla^2 f_{k}),0\}|$ into each $f_k$ for \(k=2,\ldots,M\), the set of functions \(\{f_k\}_{k=2}^M\) are ensured to be convex as well.
\end{prop}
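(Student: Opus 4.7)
The plan is to handle the three claims — convexity of $f_1$, convexity of $g$, and convexity of the regularized $\tilde{f}_k := f_k + \gamma_I\|\g{\Theta}\|_F^2$ for $k=2,\ldots,M$ — separately, mirroring the structure used in the Fair GLasso and Fair CovGraph arguments (Propositions~\ref{prop:app:conv_glasso} and~\ref{prop:app:conv_covgra}).

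For $f_1 = \mc{L}_I(\g{\Theta};\m{X})$ from \eqref{eqn:ising}, I would split the loss into its two summands. The bilinear piece $-\sum_{j,j'}\theta_{jj'}(\m{X}^\top\m{X})_{jj'} = -\langle \m{X}^\top\m{X}, \g{\Theta}\rangle$ is linear, hence convex in $\g{\Theta}$. The remaining piece $\sum_{i,j}\log\bigl(1+\exp(\theta_{jj}+\sum_{j'\neq j}\theta_{jj'}x_{ij'})\bigr)$ is a finite sum of softplus functions $t\mapsto\log(1+e^t)$, each composed with an affine map of $\g{\Theta}$. Since softplus is convex and convexity is preserved under affine precomposition and nonnegative summation, this piece is convex; therefore $f_1$ is convex. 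Convexity of $g(\g{\Theta})=\lambda\|\g{\Theta}\|_1$ is immediate since it is a positively weighted norm, and it is also closed and proper.

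For $\tilde{f}_k$ with $k\geq 2$, the argument is structurally identical to that of Proposition~\ref{prop:app:conv_covgra}. The Hessian decomposes as $\m{H}_{\tilde{f}_k} = \m{H}_{f_k} + 2\gamma_I\m{I}$, so $\Lambda_{\min}(\m{H}_{\tilde{f}_k}) = \Lambda_{\min}(\m{H}_{f_k}) + 2\gamma_I$. The hypothesis $\gamma_I \geq \bigl|\min\{\tfrac{1}{2}\Lambda_{\min}(\nabla^2 f_k),0\}\bigr|$ is exactly the tight shift required: if $\Lambda_{\min}(\m{H}_{f_k}) \geq 0$ then $\gamma_I \geq 0$ suffices and $\tilde{f}_k$ inherits convexity directly, while if $\Lambda_{\min}(\m{H}_{f_k}) < 0$ then $2\gamma_I \geq -\Lambda_{\min}(\m{H}_{f_k})$ pushes $\Lambda_{\min}(\m{H}_{\tilde{f}_k}) \geq 0$, establishing convexity of $\tilde{f}_k$.

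The main conceptual obstacle — and the reason strict convexification is needed in the Ising case but not for Fair GLasso — is that the pairwise loss differences $\mc{L}_I(\g{\Theta};\m{X}_k)-\mc{L}_I(\g{\Theta};\m{X}_s)$ are \emph{not} affine in $\g{\Theta}$: the log-sum-exp aggregations over $\m{X}_k$ and $\m{X}_s$ involve distinct sigmoidal nonlinearities that do not cancel, so $\phi$ applied to these differences need not be convex on its own. In contrast, for Fair GLasso the difference reduces to $\trace((\m{S}_k-\m{S}_s)\g{\Theta})$ and for Fair CovGraph it is affine modulo constants, in which cases $\phi$ of the difference is automatically convex whenever $\phi$ is convex. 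Hence the Ising setting genuinely requires the $\gamma_I\|\g{\Theta}\|_F^2$ shift, and the stated threshold is simply the minimum Hessian regularization needed to neutralize any negative curvature induced by the second derivatives of the log-partition function.
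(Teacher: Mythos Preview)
Your proposal is correct and follows essentially the same approach as the paper: the paper likewise argues that $f_1$ is convex because $h(x)=\log(1+\exp(x))$ is convex (nondecreasing) and convexity survives affine composition and summation, that $g$ is closed/proper/convex, and that $\m{H}_{\tilde f_k}=\m{H}_{f_k}+2\gamma_I\m{I}$ so the stated threshold on $\gamma_I$ forces positive semidefiniteness. Your additional paragraph explaining \emph{why} the Ising disparity terms fail to be affine (unlike the GLasso trace difference) and therefore genuinely require the Frobenius shift is not in the paper's proof but is a helpful clarification.
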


\begin{proof}
The function $f_1$ for Fair BinNet is defined as:
\begin{equation}
f_1(\g{\Theta}) = -\sum_{j=1}^{P}\sum_{j'=1}^{P}\theta_{jj'}(\m{X}^{T}\m{X})_{jj'} + \sum_{i=1}^{N}\sum_{j=1}^{P}\log\left(1+\exp\left(\theta_{jj}+\sum_{j'\neq j}\theta_{jj'}x_{ij'}\right)\right).
\end{equation}
To demonstrate the convexity of $f_1$, observe that \(h(x)=\log(1+\exp(x))\) is convex and nondecreasing. Since convexity is preserved under linear combination and summation, $f_1$ is convex by construction. Also, \(g(\g{\Sigma})\) is a closed, proper, and convex function.

Consider \(\tilde{f}_{k+1}(\g{\Theta})=f_{k+1}(\g{\Theta})+\gamma_I\|\g{\Theta}\|_F^2\) for \(k=1,\ldots,M-1\), its Hessian matrix is given by:
\begin{equation}
\m{H}_{\tilde{f}_{k+1}}(\g{\Theta}) = \m{H}_{f_{k+1}}(\g{\Theta}) + 2\gamma_I \m{I}_{P^2}.
\end{equation}
If $\gamma_I$ is chosen to be \(|\min\{\frac{1}{2}\Lambda_{\min}(\nabla^2 f_{k}),0\}|\) such that $\gamma_I \m{I}_{P^2}$ dominates any negative curvature in $\m{H}_{f_{k+1}}(\g{\Theta})$, then $\m{H}_{\tilde{f}_{k+1}}(\g{\Theta})$ will be positive semidefinite, leading the convexity of \(\{\tilde{f}_{k+1}\}_{k=1}^M\).

\end{proof}

\begin{prop}[Gradient Lipschitz Continuity of Fair BinNet] \label{prop:app:lips_binnet}
The gradients of $f_k$ for \(k=1, \ldots, M\) defined in the multi-objective optimization Problem~\eqref{eqn:fairISING} are Lipschitz continuous.
\end{prop}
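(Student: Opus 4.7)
The plan is to establish gradient Lipschitz continuity by computing Hessians explicitly and exploiting two structural features: the binary data $x_{ij'}\in\{0,1\}$, and the boundedness of derivatives of the logistic function $\sigma(z)=e^z/(1+e^z)$, in particular $\sigma'(z)=\sigma(z)(1-\sigma(z))\in[0,1/4]$. I will then follow the template used in Proposition~\ref{prop:app:lips_covgra} to propagate these bounds to $f_{k+1}$ for $k=1,\dots,M-1$.

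First I would write out $\nabla f_1(\g{\Theta})$ using the formulas already given in the appendix, namely
\begin{equation*}
(\nabla f_1(\g{\Theta}))_{jj}=-(\m{X}^{\top}\m{X})_{jj}+\sum_{i=1}^N\sigma_{ij}(\g{\Theta}),\qquad (\nabla f_1(\g{\Theta}))_{jj'}=-(\m{X}^{\top}\m{X})_{jj'}+\sum_{i=1}^N x_{ij'}\sigma_{ij}(\g{\Theta}),
\end{equation*}
where $\sigma_{ij}(\g{\Theta}):=\sigma(\theta_{jj}+\sum_{j''\neq j}\theta_{jj''}x_{ij''})$. Differentiating once more and invoking $\sigma'\leq 1/4$ together with $|x_{ij'}|\leq 1$, every entry of the Hessian $\m{H}_{f_1}$ is bounded in absolute value by $N/4$. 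Since $\m{H}_{f_1}$ is a $P^2\times P^2$ matrix with entries uniformly bounded in $\g{\Theta}$, its spectral norm is uniformly bounded by $L_1:=NP^2/4$. Hence $\nabla f_1$ is globally $L_1$-Lipschitz on $\mc{M}$.

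Next, for $f_{k+1}(\g{\Theta})=\sum_{s\neq k}\phi(\mc{E}_k(\g{\Theta})-\mc{E}_s(\g{\Theta}))$ with smooth $\phi$ (squared or exponential), I would differentiate twice using the chain rule:
\begin{equation*}
\nabla^2 f_{k+1}(\g{\Theta})=\sum_{s\neq k}\Big[\phi''(\delta_{ks})\,\m{d}_{ks}\m{d}_{ks}^{\top}+\phi'(\delta_{ks})(\nabla^2\mc{L}_I(\g{\Theta};\m{X}_k)-\nabla^2\mc{L}_I(\g{\Theta};\m{X}_s))\Big],
\end{equation*}
where $\delta_{ks}:=\mc{E}_k(\g{\Theta})-\mc{E}_s(\g{\Theta})$ and $\m{d}_{ks}:=\nabla\mc{L}_I(\g{\Theta};\m{X}_k)-\nabla\mc{L}_I(\g{\Theta};\m{X}_s)$. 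From the bound on $\m{H}_{f_1}$ above (applied to each group), both $\nabla\mc{L}_I(\g{\Theta};\m{X}_k)$ and $\nabla^2\mc{L}_I(\g{\Theta};\m{X}_k)$ are bounded in norm on the relevant working set.

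The main obstacle will be controlling $\delta_{ks}$ and $\phi'(\delta_{ks}),\phi''(\delta_{ks})$, since $\mc{E}_k$ can in principle grow as $\|\g{\Theta}\|$ grows. To handle this I would restrict attention to the level set $\Omega_{\m{F}}(\m{F}(\g{\Theta}^{(0)}))$, which is bounded under Assumption~\ref{app:aux_ass1} as used throughout Section~\ref{sec:fgraph}; on this set $\|\g{\Theta}\|_F\leq B$ for some constant $B$, so $|\delta_{ks}|$ is bounded and, by smoothness of $\phi$, so are $|\phi'(\delta_{ks})|$ and $|\phi''(\delta_{ks})|$. Combining these bounds with the uniform bound on $\m{d}_{ks}$ and on $\nabla^2\mc{L}_I$, each term in the Hessian sum is bounded in spectral norm, yielding a Lipschitz constant $L_{k+1}$ for $\nabla f_{k+1}$ on the working set. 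Taking $L:=\max_{k\in[M]}L_k$ gives the common Lipschitz constant required by Assumption~\ref{assum}, which—together with Lemma~\ref{app:aux_lem2} ensuring the iterates remain in the initial level set—validates the convergence theory invoked for \eqref{eqn:fairISING}.
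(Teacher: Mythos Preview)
Your argument is correct and in fact more careful than the paper's. For $f_1$, the paper does not go through the Hessian; it bounds $\|\nabla f_1(\g{\Theta})-\nabla f_1(\g{\Phi})\|_F$ directly, using that $z\mapsto e^z/(1+e^z)$ is $1/4$-Lipschitz together with $|x_{ij'}|\le 1$, and extracts the explicit constant $L_1=NP^2$ (your Hessian route gives the sharper $NP^2/4$). For $f_{k+1}$, the paper simply writes out $\nabla f_{k+1}$ as the product of the disparity term and the gradient difference $\partial\mc{L}/\partial\g{\Theta}(\g{\Theta};\m{X}_k)-\partial\mc{L}/\partial\g{\Theta}(\g{\Theta};\m{X}_s)$ and asserts that since $\mc{L}$ and its gradient are ``Lipschitz continuous and bounded on $\{\g{\Theta}\in\mc{M}:\|\g{\Theta}\|_1<\infty\}$'' the product is Lipschitz. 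That set is all of $\mc{M}$, and $\mc{L}_I$ is certainly not bounded there, so the paper's justification is loose; what is true globally is that $\nabla\mc{L}_I$ is bounded (since $\sigma\in[0,1]$), while the scalar factor $\delta_{ks}$ is not. Your explicit restriction to the level set $\Omega_{\m{F}}(\m{F}(\g{\Theta}^{(0)}))$ via Assumption~\ref{app:aux_ass1}, combined with Lemma~\ref{app:aux_lem2} to keep the iterates inside it, is exactly what is needed to control $|\phi'(\delta_{ks})|$ and $|\phi''(\delta_{ks})|$ and close the argument rigorously. So both proofs follow the same two-stage template (handle $f_1$, then propagate to $f_{k+1}$), but yours replaces the direct Lipschitz bookkeeping by Hessian bounds and, more importantly, makes the level-set restriction explicit where the paper leaves a gap.
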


\begin{proof}
For notational simplicity, we introduce the following substitutions: utilize \(\mc{L}\left(\g{\Theta},\m{X}\right)\) in place of \(f_1\left(\g{\Theta}\right)\), denote \(z_{\theta}=\exp\left(\theta_{jj}+\sum_{j'\neq j}\theta_{jj'}x_{ij'}\right)\), \(z_{\phi}=\exp\left(\phi_{jj}+\sum_{j'\neq j}\phi_{jj'}x_{ij'}\right)\). Then, we proceed to evaluate the gradient of the function \(f_1\) in the context of Fair BinNet as follows:
{
\begin{equation}
\allowdisplaybreaks
\begin{aligned}
\left(\frac{\partial\mc{L}}{\partial\g{\Theta}}\left(\g{\Theta},\m{X}\right)\right)_{jj} =& \left(\nabla f_1(\g{\Theta})\right)_{jj} = - (\m{X}^{T}\m{X})_{jj} + \sum_{i=1}^N \frac{z_{\theta}}{1+z_{\theta}},\\
\left(\frac{\partial\mc{L}}{\partial\g{\Theta}}\left(\g{\Theta},\m{X}\right)\right)_{jj'} = & \left(\nabla f_1(\g{\Theta})\right)_{jj'} = - (\m{X}^{T}\m{X})_{jj'} + \sum_{i=1}^N \frac{x_{ij'}z_{\theta}}{1+z_{\theta}}.
\end{aligned}    
\end{equation}
}

Given that \(h_1(x)=\exp(x)/\left(1+\exp(x)\right)\) is Lipschitz continuous with Lipschitz constant $0.25$, for any \(\g{\Theta},\g{\Phi}\in\mc{M}\),
{
\small
\allowdisplaybreaks
\begin{align}
\|\nabla f_1(\g{\Theta}) - \nabla f_1(\g{\Phi})\|_F
\leq& \sum_{j=1}^P\sqrt{\left(\sum_{i=1}^N\frac{z_{\theta}}{1+z_{\theta}} - \sum_{i=1}^N\frac{z_{\phi}}{1+z_{\phi}}\right)^2} \notag\\
&  + \sum_{j=1}^P\sum_{j'=1,j'\neq j}^P \sqrt{\left(\sum_{i=1}^N \frac{x_{ij'}z_{\theta}}{1+z_{\theta}} - \sum_{i=1}^N \frac{x_{ij'}z_{\phi}}{1+z_{\phi}}\right)^2} \notag\\
\leq& \sum_{j=1}^P\sum_{i=1}^N\left|\frac{z_{\theta}}{1+z_{\theta}} - \frac{z_{\phi}}{1+z_{\phi}}\right|+ \sum_{j=1}^P\sum_{j'=1,j'\neq j}^P\sum_{i=1}^N \left| \frac{x_{ij'}z_{\theta}}{1+z_{\theta}} - \frac{x_{ij'}z_{\phi}}{1+z_{\phi}}\right| \notag\\
\leq& \sum_{j=1}^P\sum_{i=1}^N \left|\theta_{jj}-\phi_{jj}+\sum_{j'\neq j}\left(\theta_{jj'}-\phi_{jj'}\right)x_{ij'}\right| \notag\\
&  + (P-1)\times \sum_{j=1}^P\sum_{i=1}^N \left|\theta_{jj}-\phi_{jj}+\sum_{j'\neq j}\left(\theta_{jj'}-\phi_{jj'}\right)x_{ij'}\right| \notag\\
\leq& N\times P\times\sum_{j=1}^P\sum_{j'=1}^P |\theta_{jj'}-\phi_{jj'}| \notag\\
\leq& N\times P^2\times \sqrt{\sum_{j=1}^P\sum_{j'=1}^P |\theta_{jj'}-\phi_{jj'}|^2} \notag\\
=& N\times P^2\times\|\g{\Theta}-\g{\Phi}\|_F.
\end{align}
}

\noindent It follows that there exists $L_1=N\times P^2\in\mb{R}$ such that \(\|\nabla f_1(\g{\Theta})-\nabla f_1(\g{\Phi})\|_F\leq L_1 \|\g{\Theta}-\g{\Phi}\|_F\).

Subsequently, the gradients of functions $\{f_{k+1}\}_{k=1}^{M-1}$ in Fair BinNet are evaluated as:
{\small
\begin{equation}
\allowdisplaybreaks
\begin{aligned}
\nabla f_{k+1}(\g{\Theta})=& \sum_{s\in[K],s\neq k} \left(\left(\mc{L}(\g{\Theta}; \m{X}_k) -  \mc{L}(\g{\Theta}_k^*; \m{X}_k)\right) \right.\\
& - \left. \left(\mc{L}(\g{\Theta}; \m{X}_s) -  \mc{L}(\g{\Theta}_s^*; \m{X}_s)\right)\right) \left(\frac{\partial\mc{L}}{\partial\g{\Theta}}\left(\g{\Theta},\m{X}_k\right)-\frac{\partial\mc{L}}{\partial\g{\Theta}}\left(\g{\Theta},\m{X}_s\right)\right).
\end{aligned}
\end{equation}
}

Then given that \(\mc{L}\) and \(\frac{\partial\mc{L}}{\partial\boldsymbol{\Theta}}\) are Lipschitz continuous and bounded on the set \(\{\g{\Theta}\in\mc{M}|\|\g{\Theta}\|_1<\infty\}\), the function sequence \(\{f_{k+1}\}_{k=1}^{M-1}\) is also Lipschitz continuous.
\end{proof}

\begin{proof}[Proof of Theorem~\ref{thm:ising} for Fair BinNet]
Building on Proposition~\ref{prop:app:conv_binnet} and Proposition~\ref{prop:app:lips_binnet}, proof of Theorem~\ref{thm:ising} can be viewed as a nuanced adaptation of the proof presented in Theorem~\ref{thm:glasso}.
\end{proof}

\subsection{Computational Complexity of FairGMs}
The computational complexity of the fair GLasso and fair CovGraph algorithm depends on both the number of variables $P$ and the number of observations $N$. We aim to demonstrate that our algorithm has a complexity of $O\left(\frac{\max(NP^2, P^3)}{\epsilon}\right)$, which is similar to standard graph learning methods when $K << N, P$. This is applicable to our experimental results, where $K=2,8$, $2,000 \leq N \leq 15,000$, and $5 \leq P \leq 120$. The computational complexity is primarily influenced by the following factors:
\begin{enumerate}
\item \label{complexity-1} {Number of Variables ($P$):} The complexity scales as $O(P^3)$ due to matrix inversion for computing the gradient at each step.

\item \label{complexity-2} {Number of Observations ($N$):} Computing the empirical covariance matrix from the data has a complexity of $O(NP^2)$.

\item \label{complexity-3} {Global Fair GMs Complexity:} Considering factors \ref{complexity-1} and \ref{complexity-2}, the complexity of each proximal gradient step applied to global fair GM is $O(\max(NP^2, P^3))$.

\item \label{complexity-4} {Local GMs Complexity:} Applying factors \ref{complexity-1} and \ref{complexity-2} to group-specific data, the complexity of each local GM is $\max(N_k P^2, P^3)$ for all $k=1, \ldots, K$. The total complexity of the local GMs is $\sum_{k=1}^{K} \max(N_k P^2, P^3)$.
\end{enumerate}

As established in Theorem~\ref{thm:glasso} and Theorem~\ref{thm:ising} for fair inverse covariance and covariance estimation, the iteration complexity of our algorithm to achieve $\epsilon$-accuracy is $O\left(\frac{1}{\epsilon}\right)$. Combining this result with the per-iteration complexity of the algorithm, the total time complexity of our optimization procedure is $O\left(\frac{\max(NP^2, P^3)}{\epsilon}\right)$. Including the Local GMs computation, the total time complexity of fair GMs is $O\left(\sum_{k=1}^{K} \max(N_k P^2, P^3) + \frac{\max(NP^2, P^3)}{\epsilon}\right)$.

Under the assumption that the number of groups is small (i.e., $K << N$, $K << P$, and $K << 1/\epsilon$), the complexity reduces to $O\left(\frac{\max(NP^2, P^3)}{\epsilon}\right)$. This complexity is of the same order as the complexity of running the proximal gradient method applied to covariance estimation and inverse covariance estimation. Therefore, for large $N$ and $P$ and a small number of groups, the time complexity of our algorithm is comparable to the standard method. In addition to theoretical analysis, we also provide sensitivity analysis experiments on $P$, $N$, $K$, and group imbalance in Appendix~\ref{sec:sen:P}-\ref{sec:sen:K}.
\section{Addendum to Section~\ref{sec:exp}}\label{sec:app:exp}

\subsection{Iterative
Soft-Thresholding Algorithm (ISTA)}\label{ISTA}

ISTA for sparse inverse covariance estimation is initially introduced by~\cite{rolfs2012iterative} and demonstrates a closed-form linear convergence rate. We adapt this approach and extend it to other GMs, utilizing it in the generation of both baseline and local graphs. Specifically, for a GM characterized by the loss function \(\mc{L}\left(\g{\Theta} ; \m{X}\right)+\lambda\|\g{\Theta}\|_1\), we employ the following detailed algorithm:

\begin{algorithm}
\caption{ISTA for GMs}
\begin{algorithmic}
\STATE \textbf{Input}: Sample matrix \(\m{X}\), initial iterate \(\g{\Theta}^{(0)}\), maximum iteration \(T\), step size \(\zeta\), regularization parameter \(\lambda\), tolerance \(\epsilon\). Set \(t=0\).
\FOR{$t=0,1,\ldots,T-1$}
\STATE \textbf{Gradient Step}: \(\g{\Theta}^{(t+1)}\leftarrow\g{\Theta}^{(t)}-\zeta\nabla\mc{L}\left(\g{\Theta}^{(t)};\m{X}\right)\)
\STATE \textbf{Soft-Thresholding Step}: \(\g{\Theta}^{(t+1)}\leftarrow\eta_{\zeta\rho}(\g{\Theta}^{(t+1)}),\quad\left(\eta_{\zeta\rho}\left(\g{\Theta}\right)\right)_{jj'} = \text{sign}(\theta_{jj'})\max(|\theta_{jj'}|-\zeta\rho,0)\)
\IF{\(\|\nabla\mc{L}\left(\g{\Theta}^{(t+1)},\m{X}\right)\|_1\leq\epsilon\)}
\STATE Break
\ENDIF
\ENDFOR
\STATE \textbf{Output}: \(\g{\Theta}^{(t+1)}\)
\end{algorithmic}
\label{alg:gista}
\end{algorithm}

\subsection{Simulation Study of Fair GLasso}\label{app:sim:glasso}

As a supplement to Section~\ref{sec:exp:sim:glasso}, we detail the process of generating $K$ block diagonal covariance matrices of dimensions \(P\times P\), denoted as \(\{\g{\Sigma}_k\}_{k=1}^K\), each corresponding to distinct sensitive groups. The procedure is as follows:

\begin{enumerate}
\item \label{step:i} Firstly, we assume that each \(\g{\Sigma}_k\) contains $Q$ blocks and $P$ is divisible by $Q$. For the first group, the covariance matrix \(\g{\Sigma}_1\) is constructed as a block diagonal matrix:
\begin{equation}
\g{\Sigma}_1=\begin{pmatrix}
\m{B}_1 & \cdots & \m{0}\\
\vdots & \ddots & \vdots\\
\m{0} & \cdots & \m{B}_{Q}
\end{pmatrix},
\end{equation}
Here, each block \(\m{B}_q\) is a sub-matrix filled with values drawn from a normal distribution \(\mc{N}(0.7,0.2)\).
\item \label{step:ii} To ensure \(\g{\Sigma}_1\) is symmetric, it is adjusted to \((\g{\Sigma}_1+\g{\Sigma}_1^\top)/2\). To ensure it is positive definite, we further adjust \(\g{\Sigma}_1\) as:
\begin{equation}
\g{\Sigma}_1=\begin{bmatrix}
\g{v}_1 & \cdots & \g{v}_P\\
\end{bmatrix}\begin{bmatrix}
\hat{\lambda}_1& \cdots & 0\\
\vdots & \ddots & \vdots\\
0 & \cdots & \hat{\lambda}_P
\end{bmatrix}\begin{bmatrix}
\g{v}_1^\top\\ \vdots \\ \g{v}_P^\top
\end{bmatrix},
\end{equation}
where \(\hat{\lambda}_j\) represents \(\max(\lambda_j(\g{\Sigma}_1),10^{-5})\), and \(\g{v}_j\) is the corresponding eigenvector.
\item \label{step:iii} For each subsequent group (\(k=2,\ldots,K\)), the covariance matrix \(\g{\Sigma}_{k}\) is initially set equal to \(\g{\Sigma}_{k-1}\). Then, two (one for sensitivity analysis) of its sub-matrices, which have not been altered yet, are reset to the identity matrix.
\end{enumerate}

\subsection{Simulation Study of Fair BinNet}\label{app:sim:binnet}

We specify the process of generating synthetic data for the simulation study in Section~\ref{sec:exp:sim:binnet}. This process adapts a hub node-based network as proposed by~\cite{tan2014learning}, aiming to generate a sequence of networks \(\{\g{\Theta}\}_{l=1}^k\). The process comprises the following steps:
\begin{enumerate}
\item Initialize a \(P \times P\) matrix \(\m{A}\), setting \(\m{A}_{jj'} = 1\) with a probability of 0.01 for all \(j < j'\) and \(\m{A}_{jj'} = 0\) otherwise. Ensure the matrix is symmetric by assigning \(\m{A}_{j'j} = \m{A}_{jj'}\). From the set of nodes, randomly select \(H\) hub nodes and modify their corresponding rows and columns in \(\m{A}\) to 1 with a 99\% probability or to 0 otherwise.
\item Construct another \(P\times P\) matrix \(\m{E}\), where each element \(\m{E}_{jj'}\) is i.i.d.. Set \(\m{E}_{jj'} = 0\) if \(\m{A}_{jj'} = 0\). Otherwise, draw \(\m{E}_{jj'}\) from a uniform distribution over the intervals \([-0.75, -0.25] \cup [0.25, 0.75]\) for hub node columns and rows, and \([-0.5, -0.25] \cup [0.25, 0.5]\) for non-hub node columns and rows. Subsequently, symmetrize matrix \(\m{E}\) by computing \(\m{E} = (\m{E} + \m{E}^\top)/2\). Define the first network \(\g{\Theta}_1\) as \(\g{\Theta}_1 = \m{E} + (0.1 - \lambda_{\min}(\m{E}))\m{I}\).
%
\item For the generation of each subsequent network (\(k = 2, \ldots, K\)), start with the preceding network, setting \(\g{\Theta}_k = \g{\Theta}_{k-1}\). Then, modify \(\g{\Theta}_k\) by eliminating two hub nodes.
\end{enumerate}

\subsection{Addendum to Subsection~\ref{sec:glasso adni}}

\begin{figure}[htbp]
    \centering
    \begin{subfigure}{\textwidth}
        \centering
        \includegraphics[width=\textwidth]{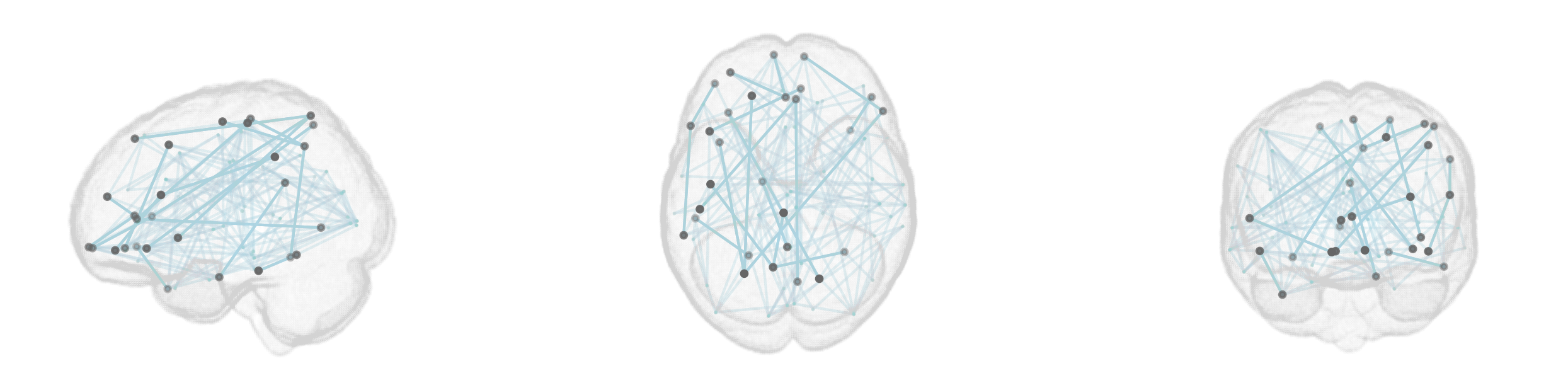}
        \caption{Standard GLasso (AV45, Marital Status)}
    \end{subfigure}
    \begin{subfigure}{\textwidth}
        \centering
        \includegraphics[width=\textwidth]{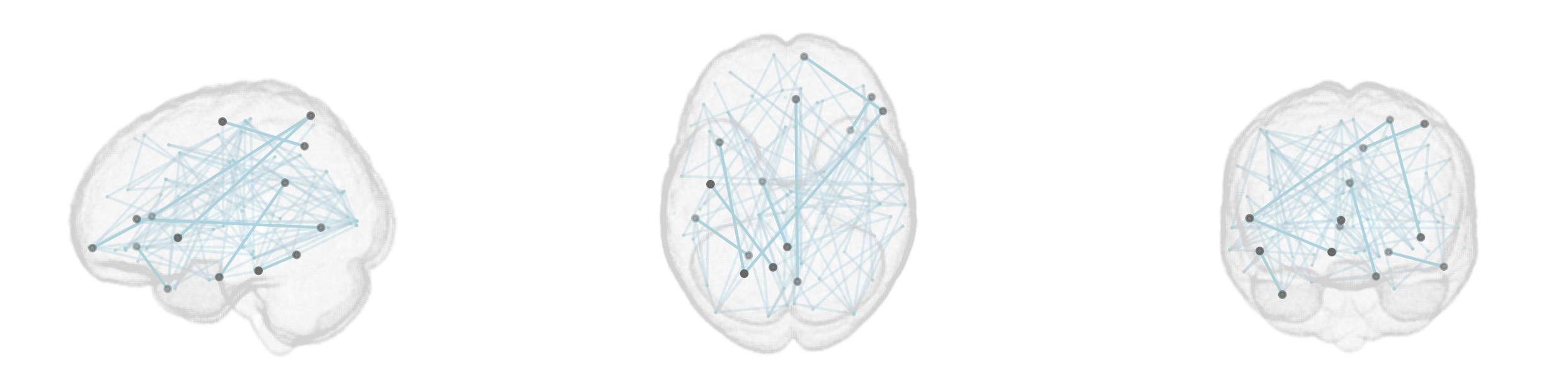}
        \caption{\textbf{Fair} GLasso (AV45, Marital Status)}
    \end{subfigure}
    \begin{subfigure}{\textwidth}
        \centering
        \includegraphics[width=\textwidth]{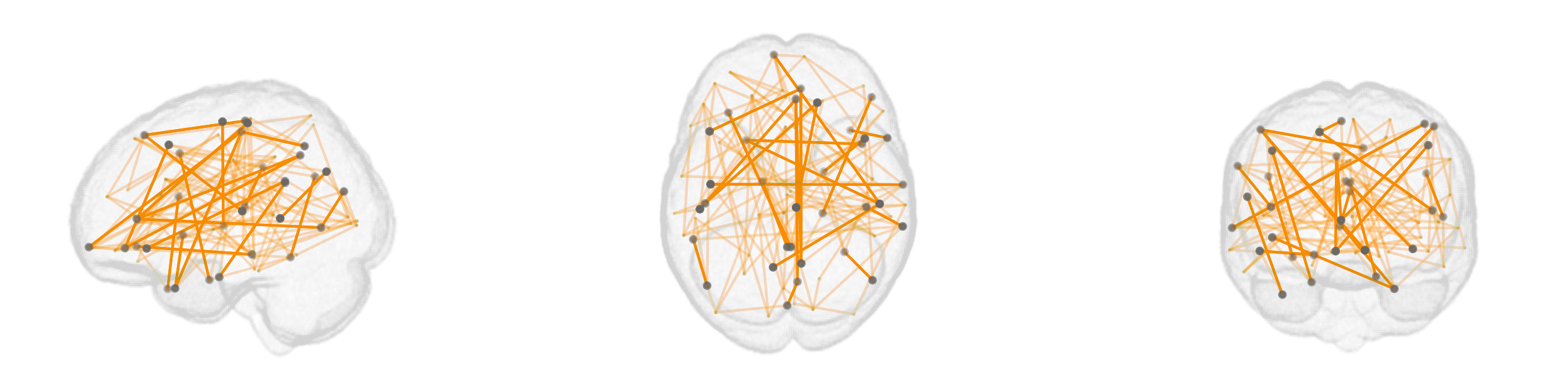}
        \caption{Standard GLasso (AV1451, Race)}
    \end{subfigure}
    \begin{subfigure}{\textwidth}
        \centering
        \includegraphics[width=\textwidth]{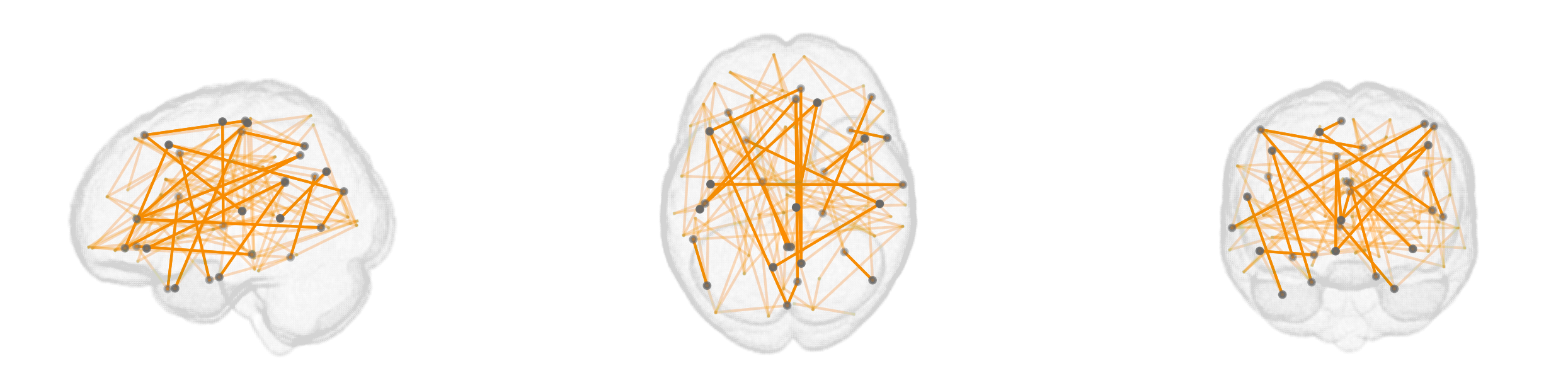}
        \caption{\textbf{Fair} GLasso (AV1451, Race)}
    \end{subfigure}
\caption{Subfigures (a) and (b) present a comparison of the graphs generated by standard GLasso and Fair GLasso on the ADNI dataset, considering the sensitive attribute of marital status on the AV45 biomarker. Similarly, subfigures (c) and (d) compare the graphs generated by both methods, taking into account the sensitive attribute of race on the AV1451 biomarker. To improve the clarity of the visualizations, weak edges have been removed, and edges that show significant differences in values between the two methods are highlighted. It is important to note that even though some edges may appear unchanged in the visual comparison, their actual values will differ between the standard GLasso and Fair GLasso methods.}
\label{fig:adni}
\end{figure}

In the experiments of applying GLasso to the ADNI dataset, we investigate the influence of sensitive attributes on brain networks associated with Alzheimer's disease (AD) pathology. Specifically, we focus on the amyloid accumulation network using AV45 (florbetapir) positron emission tomography (PET) data~\cite{wong2010vivo} and the tau accumulation network using AV1451 (flortaucipir) PET data~\cite{lowe2016autoradiographic}. For the amyloid network, we consider the sensitive attribute of marital status, as previous studies suggest that marriage may affect the progression of dementia due to factors such as social support, cognitive stimulation, and lifestyle habits~\cite{fratiglioni2004active,roth2005changes}. The dataset is divided into two groups based on marital status: a single group with 52 samples and a married group with 1,018 samples, creating an imbalanced and high-dimensional setting that poses challenges for network estimation. In the tau accumulation network, we explore the impact of the sensitive attribute race, which separates the dataset into two groups: the white group with 755 samples and the non-white group with 118 samples. This division allows us to investigate potential disparities in tau pathology across racial groups. Throughout the experiments, the regularization parameter $\lambda$, which controls the sparsity of the estimated networks, is fixed at 0.3 for the AV45 data and 0.2 for the AV1451 data based on empirical observations. Besides, the dataset is normalized such that it has a mean of zero and a standard deviation of one.

\paragraph{Results.}

\begin{table*}[t]
\centering
\caption{Outcomes in terms of the value of the objective function \((F_1)\), the summation of the pairwise graph disparity error \((\Delta)\), and the average computation time in seconds ($\pm$ standard deviation) from 10 repeated experiments. ``$\downarrow$'' means the smaller, the better, and the best value is in bold. These experiments are conducted on an Apple M2 Pro processor. Note that both \(F_1\) and \(\Delta\) are deterministic.}
\label{tab:outcome_adni}
\vspace{-0.2cm}
\setlength{\tabcolsep}{5pt}
\resizebox{\textwidth}{!}{
\begin{tabular}{l|cc|c|cc|c|cc}
\toprule
\multirow{2}{*}{\textbf{Dataset}} & \multicolumn{2}{c|}{\(\boldsymbol{F_1}\downarrow\)} & \multirow{2}{*}{\(\boldsymbol{\%{F_1}}\uparrow\)} & \multicolumn{2}{c|}{\(\boldsymbol{\Delta}\downarrow\)} & \multirow{2}{*}{\(\boldsymbol{\%\Delta}\uparrow\)} & \multicolumn{2}{c}{\(\textbf{Runtime}\downarrow\)} \\
\cmidrule(l){2-3} \cmidrule(l){5-6} \cmidrule(l){8-9} 
& \textbf{GM} & \textbf{Fair GM} &  & \textbf{GM} & \textbf{Fair GM} &  & \textbf{GM} & \textbf{Fair GM} \\
\midrule
AV45 & \textbf{79.201} & 79.611 & -0.52\% & 8.7626 & \textbf{3.4162} & +61.01\% & \textbf{0.548 ($\pm$ 0.06)} & 19.06 ($\pm$ 0.3)\\
AV1451 & \textbf{66.493} & 66.923 & -0.65\% & 8.0503 & \textbf{2.8920} & +64.08\% & \textbf{1.616 ($\pm$ 0.65)} & 36.00 ($\pm$ 2.7)\\
\bottomrule
\end{tabular}
}
\end{table*}

The numerical results in Table~\ref{tab:outcome_adni} demonstrate that Fair GLasso effectively reduces disparity error compared to standard GLasso, enhancing fairness while maintaining a good objective value. Figure~\ref{fig:adni} reveals notable differences in the learned network structures for AV45 results. The presence of edges between the left caudal middle frontal gyrus and right medial orbitofrontal cortex and between the left superior frontal gyrus and left superior parietal lobule in the GLasso graph suggests an increased influence of emotional factors on executive function and higher-order cognitive processes on amyloid accumulation, respectively~\cite{rudebeck2018orbitofrontal,boisgueheneuc2006functions,oh2016beta}. Conversely, the absence of an edge between the left pars opercularis and left supramarginal gyrus in the Fair GLasso graph indicates a weaker association between language deficits and sensorimotor impairments in amyloid accumulation~\cite{faroqi2014lesion,tomaiuolo1999morphology,kellmeyer2013fronto}.

In contrast, the AV1451 results show primarily numerical differences between the two graphs, with edges remaining largely unchanged, suggesting that the tau accumulation network is robust to the sensitive attribute of race. These findings highlight the importance of considering fairness in brain imaging data analysis and the potential of Fair GLasso to uncover more equitable and unbiased patterns of amyloid and tau accumulation in Alzheimer's disease. Further research is needed to validate these findings in larger and more diverse cohorts and explore the biological mechanisms and clinical implications of the observed differences between standard GLasso and Fair GLasso graphs.

\subsection{Addendum to Subsection~\ref{sec:covgra real}}\label{app:data:credit}

Table~\ref{tab:credit} summarizes the details of credit data utilized on Fair CovGraph mentioned in Section~\ref{sec:covgra real}.
\begin{table}[htbp]
\centering
\caption{Distribution of the number of samples in each group in the credit dataset. ``HgEd'' represents ``High School Graduate or Higher'', and ``LwEd'' represents ``Education below High School Level''.}
\label{tab:credit}
\setlength{\tabcolsep}{20pt}
\begin{tabular}{lr|lr}
\toprule
Name & Size & Name & Size \\
\midrule
Male\_Singe\_HgEd & 5579 & Female\_Single\_HgEd & 8260 \\
Male\_Singe\_LwEd & 974 & Female\_Single\_LwEd & 1151 \\
Male\_Married\_HgEd & 4062 & Female\_Married\_HgEd & 6506 \\
Male\_Married\_LwEd & 1128 & Female\_Married\_LwEd & 1963 \\
\bottomrule
\end{tabular}
\end{table}

\subsection{Sensitivity Analysis to Feature Size \texorpdfstring{$P$}{P}} \label{sec:sen:P}

\begin{table*}[thbp]
\centering
\caption{Numerical outcomes in terms of the value of the objective function \((F_1)\), the summation of the pairwise graph disparity error \((\Delta)\), and the average computation time in seconds ($\pm$ standard deviation) from 10 repeated experiments. $K=2$ and $N_k=1000~\forall k\in[K]$. ``$\downarrow$'' means the smaller, the better, and the best value is in bold. These experiments are conducted on an Apple M2 Pro processor. Note that both \(F_1\) and \(\Delta\) are deterministic.}
\vspace{-0.2cm}
\resizebox{\textwidth}{!}{
\begin{tabular}{c|cc|c|cc|c|cc}
\toprule
\multirow{2}{*}{\textbf{\shortstack{Feature Size \\ $P$}}} & \multicolumn{2}{c|}{\(\boldsymbol{F_1}\downarrow\)} & \multirow{2}{*}{\(\boldsymbol{\%{F_1}}\uparrow\)} & \multicolumn{2}{c|}{\(\boldsymbol{\Delta}\downarrow\)} & \multirow{2}{*}{\(\boldsymbol{\%\Delta}\uparrow\)} & \multicolumn{2}{c}{\(\textbf{Runtime}\downarrow\)} \\
\cmidrule(l){2-3} \cmidrule(l){5-6} \cmidrule(l){8-9} 
& \textbf{GM} & \textbf{Fair GM} &  & \textbf{GM} & \textbf{Fair GM} &  & \textbf{GM} & \textbf{Fair GM} \\
\midrule
50 &
\textbf{50.9229} & 50.9429 & -0.04\% & 0.7309 & \textbf{0.3832} & +47.56\% & \textbf{0.035 ($\pm$ 0.01)} & 19.86 ($\pm$ 0.24)\\
100 &
\textbf{105.089} & 105.149 & -0.06\% & 2.1799 & \textbf{1.1206} & +48.60\% & \textbf{0.183 ($\pm$ 0.01)} & 37.14 ($\pm$ 1.13)\\
150 &
\textbf{159.424} & 159.555 & -0.08\% & 4.6926 & \textbf{1.7772} & +62.13\% & \textbf{2.452 ($\pm$ 0.89)} & 48.47 ($\pm$ 4.50)\\
200 &
\textbf{215.402} & 215.558 & -0.07\% & 5.7447 & \textbf{1.9693} & +65.72\% & \textbf{2.034 ($\pm$ 0.11)} & 53.57 ($\pm$ 1.40)\\
250 &
\textbf{269.236} & 269.356 & -0.04\% & 5.4889 & \textbf{2.1106} & +61.55\% & \textbf{0.552 ($\pm$ 0.04)} & 58.71 ($\pm$ 1.55)\\
300 &
\textbf{324.733} & 324.958 & -0.07\% & 8.5738 & \textbf{2.5582} & +70.16\% & \textbf{1.508 ($\pm$ 0.07)} & 68.11 ($\pm$ 1.09)\\
350 &
\textbf{379.696} & 380.027 & -0.09\% & 12.758 & \textbf{3.2992} & +74.14\% & \textbf{3.199 ($\pm$ 0.24)} & 105.8 ($\pm$ 2.50)\\
400 &
\textbf{434.697} & 434.927 & -0.05\% & 9.4141 & \textbf{2.4337} & +74.15\% & \textbf{1.509 ($\pm$ 0.12)} & 107.7 ($\pm$ 3.55)\\
\bottomrule
\end{tabular}
}
\label{tab:sensitivityP_glasso}
\end{table*}

\begin{figure}[htbp]
    \centering
    \begin{subfigure}{0.47\textwidth}
        \centering
        \includegraphics[width=\textwidth]{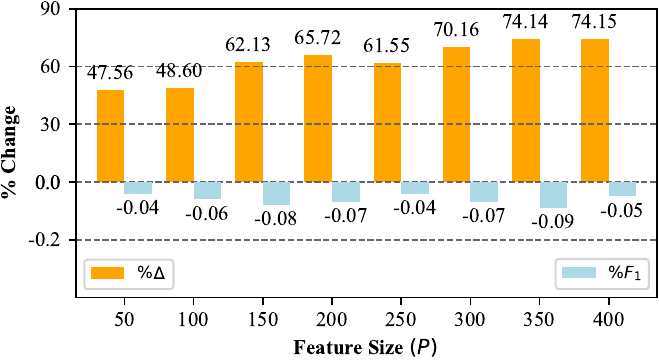}
        \caption{Percentage Change}
    \end{subfigure}
    \hspace{0.5cm}
    \begin{subfigure}{0.47\textwidth}
        \centering
        \includegraphics[width=\textwidth]{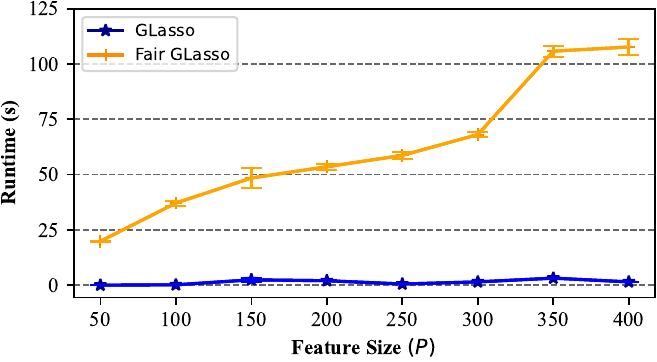}
        \caption{Runtime}
    \end{subfigure}
\vspace{-0.2cm}
\caption{(a) Percentage change from GLasso to Fair GLasso (results from Table~\ref{tab:sensitivityP_glasso}) with respect to feature size $P$. \(\%F_1\) is slight, while \(\%\Delta\) changes are
substantial, signifying fairness improvement without significant accuracy sacrifice. (b) Runtime (mean $\pm$ std) (results from Table~\ref{tab:sensitivityP_glasso}) with respect to feature size $P$.}
\label{fig:sensitivity_P}
\end{figure}

In this section, we examine the impact of varying feature sizes \(P\) on the $\%F_1$ score, \(\%\Delta\) (change in accuracy), and runtime. Our experiments utilize feature sizes ranging from \(P = 50\) to \(P = 400\) in the GLasso algorithm applied to synthetic data. According to the procedures described in Steps~\ref{step:i}-\ref{step:iii} from Section~\ref{app:sim:glasso}, we generate covariance matrices for two distinct groups: \(\g{\Sigma}_1\) featuring five diagonal blocks and \(\g{\Sigma}_2\) with four diagonal blocks.

For each feature size setting, Group 1 includes 1000 observations drawn from a multivariate normal distribution \(\mc{N}(0, \g{\Sigma}_1)\), and Group 2 also consists of 1000 observations from \(\mc{N}(0, \g{\Sigma}_2)\). The outcomes of these experiments are systematically presented in Table~\ref{tab:sensitivityP_glasso} and visually depicted in Figure~\ref{fig:sensitivity_P}. This structured analysis enables us to evaluate how changes in feature size affect both performance metrics and computational efficiency in our study.

By integrating both the Table~\ref{tab:sensitivityP_glasso} and Figure~\ref{fig:sensitivity_P}, it can be observed that as the feature size increases, although there is a rise in the pairwise graph disparity error, our proposed method still effectively reduces it, with minimal loss in the objective value. This underscores the efficacy of our approach in enhancing fairness. Regarding runtime, there is a proportional relationship between feature size and the runtime of Fair GLasso, which aligns with our theoretical analysis of algorithmic complexity.

\subsection{Sensitivity Analysis to Sample Size \texorpdfstring{$N$}{N}} \label{sec:sen:N}

\begin{table*}[thbp]
\centering
\caption{Numerical outcomes in terms of the value of the objective function \((F_1)\), the summation of the pairwise graph disparity error \((\Delta)\), and the average computation time in seconds ($\pm$ standard deviation) from 10 repeated experiments. $K=2$ and $P=50$. ``$\downarrow$'' means the smaller, the better, and the best value is in bold. These experiments are conducted on an Apple M2 Pro processor. Note that both \(F_1\) and \(\Delta\) are deterministic.
}
\vspace{-0.2cm}
\resizebox{\textwidth}{!}{
\begin{tabular}{c|cc|c|cc|c|cc}
\toprule
\multirow{2}{*}{\textbf{\shortstack{Sample Size \\ $N_k$}}} & \multicolumn{2}{c|}{\(\boldsymbol{F_1}\downarrow\)} & \multirow{2}{*}{\(\boldsymbol{\%{F_1}}\uparrow\)} & \multicolumn{2}{c|}{\(\boldsymbol{\Delta}\downarrow\)} & \multirow{2}{*}{\(\boldsymbol{\%\Delta}\uparrow\)} & \multicolumn{2}{c}{\(\textbf{Runtime}\downarrow\)} \\
\cmidrule(l){2-3} \cmidrule(l){5-6} \cmidrule(l){8-9} 
& \textbf{GM} & \textbf{Fair GM} &  & \textbf{GM} & \textbf{Fair GM} &  & \textbf{GM} & \textbf{Fair GM} \\
\midrule
100 &
\textbf{50.2970} & 50.3049 & -0.02\% & 0.6988 & \textbf{0.3715} & +46.83\% & \textbf{0.044 ($\pm$ 0.01)} & 26.89 ($\pm$ 1.32)\\
150 &
\textbf{50.6003} & 50.6043 & -0.01\% & 0.3407 & \textbf{0.2042} & +40.05\% & \textbf{0.037 ($\pm$ 0.01)} & 19.12 ($\pm$ 0.53)\\
200 &
\textbf{50.8234} & 50.8438 & -0.04\% & 0.7194 & \textbf{0.2843} & +60.49\% & \textbf{0.103 ($\pm$ 0.02)} & 19.13 ($\pm$ 0.27)\\
250 &
\textbf{50.8729} & 50.8978 & -0.05\% & 0.8615 & \textbf{0.3514} & +59.21\% & \textbf{0.099 ($\pm$ 0.16)} & 23.06 ($\pm$ 1.06)\\
300 &
\textbf{50.8791} & 50.8912 & -0.02\% & 0.6464 & \textbf{0.3718} & +42.48\% & \textbf{0.046 ($\pm$ 0.01)} & 30.18 ($\pm$ 0.99)\\
350 &
\textbf{50.9272} & 50.9448 & -0.03\% & 0.7120 & \textbf{0.3660} & +48.60\% & \textbf{0.018 ($\pm$ 0.00)} & 25.59 ($\pm$ 0.30)\\
400 &
\textbf{50.9186} & 50.9344 & -0.03\% & 0.6675 & \textbf{0.3537} & +47.01\% & \textbf{0.030 ($\pm$ 0.00)} & 23.33 ($\pm$ 0.29)\\
500 &
\textbf{50.9021} & 50.9261 & -0.05\% & 0.7281 & \textbf{0.3137} & +56.91\% & \textbf{0.038 ($\pm$ 0.00)} & 17.86 ($\pm$ 0.31)\\
\bottomrule
\end{tabular}
}
\label{tab:sensitivityN_glasso}
\end{table*}

\begin{figure}[htbp]
    \centering
    \begin{subfigure}{0.47\textwidth}
        \centering
        \includegraphics[width=\textwidth]{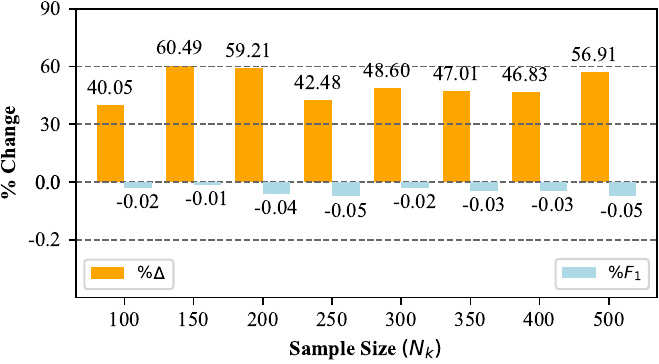}
        \caption{Percentage Change}
    \end{subfigure}
    \hspace{0.5cm}
    \begin{subfigure}{0.47\textwidth}
        \centering
        \includegraphics[width=\textwidth]{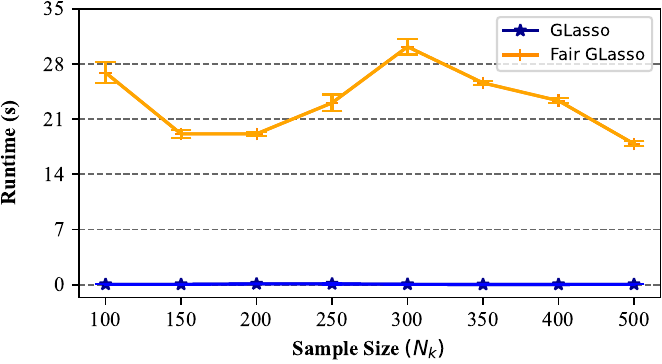}
        \caption{Runtime}
    \end{subfigure}
\vspace{-0.2cm}
\caption{(a) Percentage change from GLasso to Fair GLasso (results from Table~\ref{tab:sensitivityN_glasso}) with respect to sample size $N$. \(\%F_1\) is slight, while \(\%\Delta\) changes are
substantial, signifying fairness improvement without significant accuracy sacrifice. (b) Runtime (mean $\pm$ std) (results from Table~\ref{tab:sensitivityN_glasso}) with respect to sample size $N$.}
\label{fig:sensitivity_N}
\end{figure}

In this section, we conduct a sensitivity analysis with respect to the sample size \(N\), while holding the feature size fixed at \(P = 50\). We investigate how varying the sample size impacts the $\%F_1$ score, \(\%\Delta\) (change in accuracy), and runtime. The sample sizes examined are \(N_k = 100, 150, 200, ..., 400, 500\) for each group in the Fair GLasso on synthetic data.

Following the procedures outlined in Steps~\ref{step:i}-\ref{step:iii} in Section~\ref{app:sim:glasso}, we generate synthetic datasets with fixed covariance structures for two distinct groups: \(\g{\Sigma}_1\) characterized by five diagonal blocks, and \(\g{\Sigma}_2\) comprising four diagonal blocks. Each dataset is generated for every specified sample size, allowing us to systematically assess the effects of increasing \(N\) on the performance metrics and computational efficiency of the algorithm.

The specific results are presented in Table~\ref{tab:sensitivityN_glasso} and visualized in Figure~\ref{fig:sensitivity_N}. From these, it is evident that the sample size does not significantly impact the objective value, pairwise graph disparity error, or runtime. Our proposed method consistently maintains its effectiveness across different sample sizes. This stability highlights the robustness of our approach under varying data quantities.

\subsection{Sensitivity Analysis to Sample Size Ratio \texorpdfstring{$N_2/N_1$}{N2/N1}} \label{sec:sen:G}

\begin{table*}[thbp]
\centering
\caption{Numerical outcomes in terms of the value of the objective function \((F_1)\), the summation of the pairwise graph disparity error \((\Delta)\), and the average computation time in seconds ($\pm$ standard deviation) from 10 repeated experiments. $K=2$, $P=50$, and $N_2=100$. ``$\downarrow$'' means the smaller, the better, and the best value is in bold. These experiments are conducted on an Apple M2 Pro processor. Note that both \(F_1\) and \(\Delta\) are deterministic.}
\vspace{-0.2cm}
\resizebox{\textwidth}{!}{
\begin{tabular}{c|cc|c|cc|c|cc}
\toprule
\multirow{2}{*}{\textbf{\shortstack{Sample Size \\ Ratio $N_1/N_2$}}} & \multicolumn{2}{c|}{\(\boldsymbol{F_1}\downarrow\)} & \multirow{2}{*}{\(\boldsymbol{\%{F_1}}\uparrow\)} & \multicolumn{2}{c|}{\(\boldsymbol{\Delta}\downarrow\)} & \multirow{2}{*}{\(\boldsymbol{\%\Delta}\uparrow\)} & \multicolumn{2}{c}{\(\textbf{Runtime}\downarrow\)} \\
\cmidrule(l){2-3} \cmidrule(l){5-6} \cmidrule(l){8-9} 
& \textbf{GM} & \textbf{Fair GM} &  & \textbf{GM} & \textbf{Fair GM} &  & \textbf{GM} & \textbf{Fair GM} \\
\midrule
1.0 &
\textbf{50.2970} & 50.3049 & -0.02\% & 0.6988 & \textbf{0.3715} & +46.83\% & \textbf{0.175 ($\pm$ 0.37)} & 26.77 ($\pm$ 0.59)\\
2.0 &
\textbf{50.4208} & 50.5981 & -0.35\% & 4.5459 & \textbf{0.8282} & +81.78\% & \textbf{0.042 ($\pm$ 0.01)} & 21.64 ($\pm$ 0.42)\\
3.0 &
\textbf{50.4427} & 50.9140 & -0.93\% & 8.3116 & \textbf{0.8556} & +89.71\% & \textbf{0.061 ($\pm$ 0.01)} & 16.06 ($\pm$ 0.42)\\
4.0 &
\textbf{50.3129} & 50.8348 & -1.04\% & 8.6970 & \textbf{1.0065} & +88.43\% & \textbf{0.033 ($\pm$ 0.01)} & 21.64 ($\pm$ 0.38)\\
5.0 &
\textbf{50.1979} & 50.8795 & -1.36\% & 10.213 & \textbf{1.1157} & +89.07\% & \textbf{0.049 ($\pm$ 0.02)} & 22.66 ($\pm$ 0.32)\\
7.0 &
\textbf{50.1567} & 51.1681 & -2.02\% & 14.224 & \textbf{1.2931} & +90.91\% & \textbf{0.033 ($\pm$ 0.01)} & 25.59 ($\pm$ 0.24)\\
10.0 &
\textbf{50.0203} & 50.9329 & -1.82\% & 11.462 & \textbf{1.2407} & +89.18\% & \textbf{0.035 ($\pm$ 0.00)} & 22.35 ($\pm$ 0.47)\\
100.0 &
\textbf{49.8966} & 51.2365 & -2.69\% & 17.620 & \textbf{1.0912} & +93.81\% & \textbf{0.033 ($\pm$ 0.01)} & 16.51 ($\pm$ 0.36)\\
\bottomrule
\end{tabular}
}
\label{tab:sensitivity_ratio_glasso}
\end{table*}

\begin{figure}[htbp]
    \centering
    \begin{subfigure}{0.47\textwidth}
        \centering
        \includegraphics[width=\textwidth]{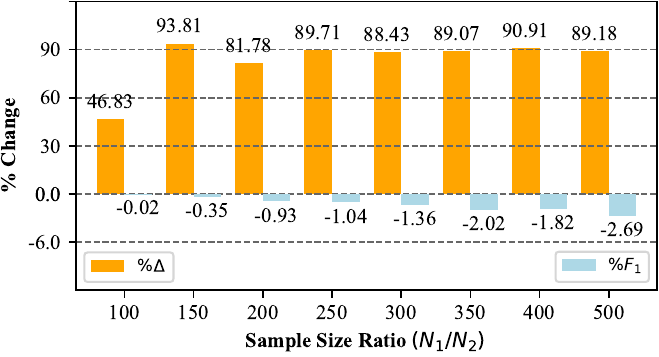}
        \caption{Percentage Change}
    \end{subfigure}
    \hspace{0.5cm}
    \begin{subfigure}{0.47\textwidth}
        \centering
        \includegraphics[width=\textwidth]{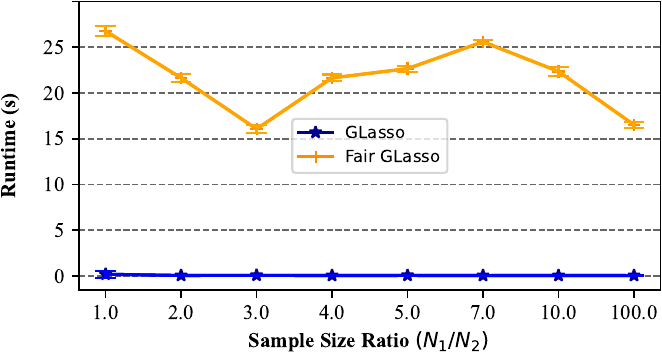}
        \caption{Runtime}
    \end{subfigure}
\vspace{-0.2cm}
\caption{(a) Percentage change from GLasso to Fair GLasso (results from Table~\ref{tab:sensitivityN_glasso}) with respect to sample size ratio $N_1/N_2$. \(\%F_1\) is slight, while \(\%\Delta\) changes are
substantial, signifying fairness improvement without significant accuracy sacrifice. (b) Runtime (mean $\pm$ std) (results from Table~\ref{tab:sensitivityN_glasso}) with respect to sample size ratio $N_1/N_2$.}
\label{fig:sensitivity_G}
\end{figure}

We conduct a sensitivity analysis on the sample size ratio \(N_1/N_2\) while keeping the feature size fixed at \(P = 50\) and Group 2's sample size \(N_2\) constant at 100. We examine the impact of varying \(N_1/N_2\) on the $\%F_1$, \(\%\Delta\), and runtime in our experiments with Fair GLasso on synthetic data.

Following the methodology outlined in Steps~\ref{step:i}-\ref{step:iii} from Section~\ref{app:sim:glasso}, we generate datasets with fixed covariance structures: \(\g{\Sigma}_1\) characterized by five diagonal blocks for Group 1 and \(\g{\Sigma}_2\) with four diagonal blocks for Group 2. We systematically vary \(N_1\) from 100 to 10,000, maintaining \(N_2\) at 100, and assess how changes in the sample size ratio affect the algorithm’s performance metrics and computational efficiency.

The specific results of these experiments are detailed in Table~\ref{tab:sensitivity_ratio_glasso} and visualized in Figure~\ref{fig:sensitivity_G}. From this analysis, it is apparent that the sample size ratio \(N_1/N_2\) does not significantly affect the objective value, pairwise graph disparity error, or runtime. Our proposed method continues to demonstrate its effectiveness consistently across varying sample size ratios. This consistency underscores the robustness of our approach, showing its reliability regardless of changes in the group imbalance between the groups.

\subsection{Sensitivity Analysis to Group Size \texorpdfstring{$K$}{K}} \label{sec:sen:K}

\begin{table*}[thbp]
\centering
\caption{Numerical outcomes in terms of the value of the objective function \((F_1)\), the summation of the pairwise graph disparity error \((\Delta)\), and the average computation time in seconds ($\pm$ standard deviation) from 10 repeated experiments. $P=100$ and $N_k=1000~\forall k\in[K]$. ``$\downarrow$'' means the smaller, the better, and the best value is in bold. These experiments are conducted on an Apple M2 Pro processor. Note that both \(F_1\) and \(\Delta\) are deterministic.}
\vspace{-0.2cm}
\resizebox{\textwidth}{!}{
\begin{tabular}{c|cc|c|cc|c|cc}
\toprule
\multirow{2}{*}{\textbf{Group Size $K$}} & \multicolumn{2}{c|}{\(\boldsymbol{F_1}\downarrow\)} & \multirow{2}{*}{\(\boldsymbol{\%{F_1}}\uparrow\)} & \multicolumn{2}{c|}{\(\boldsymbol{\Delta}\downarrow\)} & \multirow{2}{*}{\(\boldsymbol{\%\Delta}\uparrow\)} & \multicolumn{2}{c}{\(\textbf{Runtime}\downarrow\)} \\
\cmidrule(l){2-3} \cmidrule(l){5-6} \cmidrule(l){8-9} 
& \textbf{GM} & \textbf{Fair GM} &  & \textbf{GM} & \textbf{Fair GM} &  & \textbf{GM} & \textbf{Fair GM} \\
\midrule
2 & \textbf{101.306} & 101.331 & -0.03\% & 0.9451 & \textbf{0.4523} & +52.14\% & \textbf{0.183 ($\pm$ 0.02)} & 28.16 ($\pm$ 1.14)\\
3 & \textbf{102.242} & 102.441 & -0.19\% & 3.7579 & \textbf{0.6341} & +83.13\% & \textbf{0.155 ($\pm$ 0.06)} & 44.25 ($\pm$ 3.76)\\
4 & \textbf{103.157} & 103.664 & -0.49\% & 9.4820 & \textbf{0.5665} & +94.03\% & \textbf{0.146 ($\pm$ 0.03)} & 128.3 ($\pm$ 4.95)\\
5 & \textbf{103.856} & 104.730 & -0.84\% & 18.835 & \textbf{0.4451} & +97.64\% & \textbf{0.192 ($\pm$ 0.03)} & 103.2 ($\pm$ 4.62)\\
6 & \textbf{104.489} & 105.710 & -1.17\% & 31.688 & \textbf{0.4085} & +98.71\% & \textbf{0.167 ($\pm$ 0.03)} & 114.5 ($\pm$ 5.21)\\
7 & \textbf{105.113} & 106.685 & -1.50\% & 48.421 & \textbf{0.4113} & +99.15\% & \textbf{0.192 ($\pm$ 0.03)} & 117.6 ($\pm$ 8.08)\\
8 & \textbf{105.806} & 107.663 & -1.75\% & 68.335 & \textbf{0.7127} & +98.96\% & \textbf{0.149 ($\pm$ 0.04)} & 133.2 ($\pm$ 9.25)\\
9 & \textbf{106.458} & 108.810 & -2.21\% & 92.749 & \textbf{1.5164} & +98.37\% & \textbf{0.233 ($\pm$ 0.06)} & 265.7 ($\pm$ 11.2)\\
10 & \textbf{107.112} & 109.742 & -2.46\% & 121.37 & \textbf{2.1924} & +98.19\% & \textbf{0.134 ($\pm$ 0.06)} & 360.9 ($\pm$ 23.0)\\
\bottomrule
\end{tabular}
}
\label{tab:sensitivityK_glasso}
\end{table*}

\begin{figure}[htbp]
    \centering
    \begin{subfigure}{0.47\textwidth}
        \centering
        \includegraphics[width=\textwidth]{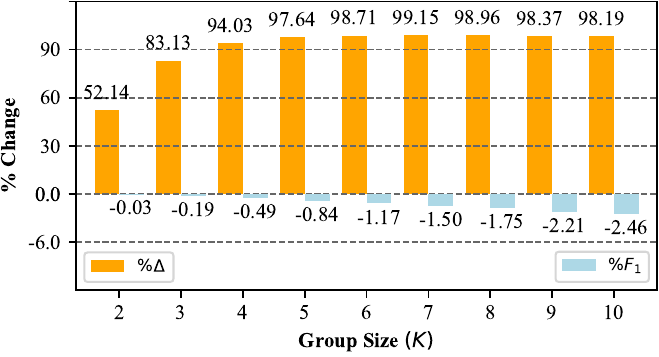}
        \caption{Percentage Change}
    \end{subfigure}
    \hspace{0.5cm}
    \begin{subfigure}{0.47\textwidth}
        \centering
        \includegraphics[width=\textwidth]{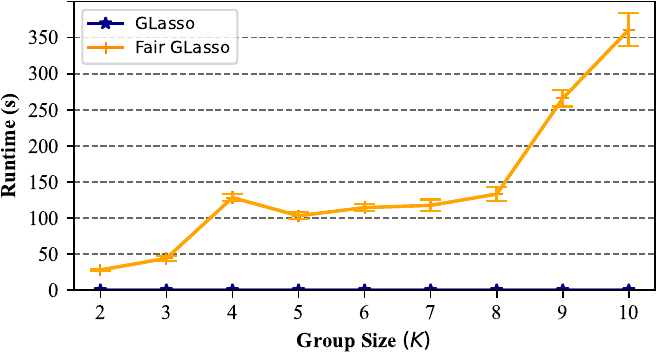}
        \caption{Runtime}
    \end{subfigure}
\vspace{-0.2cm}
\caption{(a) Percentage change from GLasso to Fair GLasso (results from Table~\ref{tab:sensitivityP_glasso}) with respect to group size $K$. \(\%F_1\) is slight, while \(\%\Delta\) changes are
substantial, signifying fairness improvement without significant accuracy sacrifice. (b) Runtime (mean $\pm$ std) (results from Table~\ref{tab:sensitivityP_glasso}) with respect to group size $K$.}
\label{fig:sensitivity_K}
\end{figure}

In this section, we explore the impact of group size $K$ on the performance and computational efficiency of Fair GLasso. The feature size \(N\) is fixed at 100, and the sample size per group \(P_k\) is set at 1000. Following Steps~\ref{step:i}-\ref{step:iii} from Section~\ref{app:sim:glasso}, the covariance matrix for the first group, \(\g{\Sigma}_1\) is generated with 10 diagonal blocks. Each subsequent group has one fewer diagonal block in its covariance matrix, with each group sampling observations from \(\mc{N}(0, \g{\Sigma}_k)\).

The results are detailed in Table~\ref{tab:sensitivityK_glasso} and Figure~\ref{fig:sensitivity_K}. Observations indicate that when the group size is less than 9, computational efficiency remains relatively stable regardless of changes in group size. However, efficiency decreases noticeably when the group size increases to 9. In terms of the objective value and pairwise graph disparity error, performance maintains a good balance, with a significant enhancement in fairness.

This conclusion aligns with our theoretical analysis of algorithmic complexity. Notably, as the group size increases, the pairwise graph disparity error also significantly rises, as shown in Table~\ref{tab:sensitivityK_glasso}. Consequently, our proposed method effectively enhances fairness, albeit at the cost of sacrificing a greater portion of the objective value. This trade-off is a critical aspect of our approach, balancing computational performance with the desired ethical outcomes in machine learning applications.

\subsection{Addendum to Subsection~\ref{sec:tradeoff}} \label{sec:faster_algs}

\begin{table}[t]
\centering
\caption{Outcomes of additional baseline with different optimization algorithms applied to GLasso and Multi-Objective Optimization (MOO), measured in terms of the value of the objective function ($F_1$), the summation of the pairwise graph disparity error ($\Delta$), and the average computation time in seconds ($\pm$ standard deviation) from 10 repeated experiments. ``$\downarrow$’’ indicates that smaller values are better. Our method applies ISTA to both GLasso and MOO (first row in each experiment). All experiments are conducted using the same runtime environment on Google Colab.}
\resizebox{\textwidth}{!}{
\begin{tabular}{cc|cc|c|cc|c|cc}
\toprule
\multicolumn{2}{c|}{\textbf{Algorithm}} & \multicolumn{2}{c|}{\(\boldsymbol{F_1}\downarrow\)} & \multirow{2}{*}{\(\boldsymbol{\%{F_1}}\uparrow\)} & \multicolumn{2}{c|}{\(\boldsymbol{\Delta}\downarrow\)} & \multirow{2}{*}{\(\boldsymbol{\%\Delta}\uparrow\)} & \multicolumn{2}{c}{\(\textbf{Runtime}\downarrow\)} \\
\cmidrule(l){1-4} \cmidrule(l){6-7} \cmidrule(l){9-10} 
\textbf{GLasso} & \textbf{MOO} & \textbf{GLasso} & \textbf{Fair GLasso} &  & \textbf{GLasso} & \textbf{Fair GLasso} &  & \textbf{GLasso} & \textbf{Fair GLasso} \\
\midrule
\multicolumn{10}{c}{\textbf{Synthetic Dataset 1 (2 Subgroups, 100 Variables, 1000 Observations in Each Group)}}\\
\midrule
ISTA & ISTA & 97.172 & 97.449 & -0.29\% & 7.8149 & 0.5794 & +92.59\% & 0.501 ($\pm$ 0.21) & 85.48 ($\pm$ 1.92)\\
ISTA & FISTA & 97.172 & 97.438 & -0.27\% & 7.8149 & 0.8835 & +88.69\% & 0.297 ($\pm$ 0.12) & 26.56 ($\pm$ 1.11)\\
PISTA & FISTA & 97.172 & 97.438 & -0.27\% & 7.8190 & 0.9084 & +88.38\% & 13.52 ($\pm$ 1.10) & 59.66 ($\pm$ 2.65)\\
GISTA & FISTA & 97.172 & 97.438 & -0.27\% & 7.8149 & 0.9089 & +88.37\% & 0.426 ($\pm$ 0.16) & 21.27 ($\pm$ 0.94)\\
OBN & FISTA & 97.172 & 97.438 & -0.27\% & 7.8134 & 0.9112 & +88.34\% & 0.483 ($\pm$ 0.16) & 22.48 ($\pm$ 0.92)\\
\midrule
\multicolumn{10}{c}{\textbf{Synthetic Dataset 2 (2 Subgroups, 200 Variables, 2000 Observations in Each Group)}}\\
\midrule
ISTA & ISTA & 199.71 & 200.70 & -0.49\% & 40.511 & 1.4855 & +96.33\% & 2.622 ($\pm$ 1.28) & 206.7 ($\pm$ 3.27)\\
ISTA & FISTA & 199.71 & 200.68 & -0.49\% & 40.511 & 1.8485 & +95.44\% & 2.640 ($\pm$ 0.76) & 108.1 ($\pm$ 2.42)\\
PISTA & FISTA & 199.71 & 200.67 & -0.48\% & 40.521 & 1.9474 & +95.19\% & 39.16 ($\pm$ 2.30) & 178.7 ($\pm$ 3.50)\\
GISTA & FISTA & 199.71 & 200.68 & -0.48\% & 40.511 & 2.0260 & +95.00\% & 2.365 ($\pm$ 0.26) & 78.99 ($\pm$ 3.07)\\
OBN & FISTA & 199.71 & 200.72 & -0.50\% & 40.511 & 2.4835 & +93.87\% & 2.403 ($\pm$ 0.68) & 53.11 ($\pm$ 2.17)\\
\midrule
\multicolumn{10}{c}{\textbf{Synthetic Dataset 3 (10 Subgroups, 100 Variables, 1000 Observations in Each Group)}}\\
\midrule
ISTA & ISTA & 95.333 & 95.603 & -0.28\% & 11.394 & 0.3108 & +97.27\% & 0.641 ($\pm$ 0.28) & 224.1 ($\pm$ 2.29)\\
ISTA & SOSA & 95.333 & 95.506 & -0.18\% & 11.394 & 1.5133 & +86.72\% & 0.626 ($\pm$ 0.19) & 143.2 ($\pm$ 2.28)\\
\bottomrule
\end{tabular}
}
\label{tab:baseline}
\end{table}


To address the computational complexity of Fair GMs, we explore a range of optimization methods tailored to GLasso and multi-objective optimization (MOO):
\begin{itemize}
    \item GLasso Optimization Methods
    \begin{itemize}
        \item Preconditioned Iterative Soft Thresholding Algorithm (PISTA): Efficiently handles large-scale sparse matrix operations \cite{shalom2024pista}.
        \item Graphical Iterative Shrinkage Thresholding Algorithm (GISTA): Employs an iterative framework for sparsity-inducing penalty functions in high-dimensional settings \cite{rolfs2012iterative}.
        \item Orthant-Based Newton Method (OBN): Uses second-order information for faster convergence in structured sparsity constraints \cite{oztoprak2012newton}.
    \end{itemize}
    \item MOO Optimization Methods
    \begin{itemize}
        \item Fast Iterative Shrinkage-Thresholding Algorithm (FISTA): Provides globally optimal convergence rates for MOO objectives \cite{tanabe2022globally}.
        \item Stochastic Objective Selection Approach (SOSA): Introduces a randomized selection technique for optimizing multi-objective functions under varying conditions \cite{shui2022learning}.
    \end{itemize}
\end{itemize}
We validate these methods through comprehensive experiments on synthetic datasets. Our first evaluation uses data with 100 variables across two subgroups, each containing 1000 observations, generated following the procedure in Appendix~\ref{app:sim:glasso}. This experiment demonstrates that faster optimization methods improve time complexity for both GLasso and MOO while maintaining performance. All GLasso methods achieve optimal loss, while Fair GLasso variants successfully reduce pairwise graph disparity error without significant performance degradation.

To assess scalability, we extend our analysis to a larger dataset with 200 variables, maintaining the same experimental setup. Furthermore, we evaluate the efficiency of our approach with increased group complexity using synthetic data containing 100 variables across ten subgroups, each with 1000 observations. In this setting, SOSA reduces training time by approximately 36\% compared to the original approach while preserving model fairness and robustness.

The numerical results presented in Table~\ref{tab:baseline} confirm that our optimization strategies successfully reduce runtime while maintaining model robustness and fairness. These findings suggest promising directions for future research in balancing computational efficiency with model performance.

\end{document}